\documentclass{article}

\usepackage[utf8]{inputenc}
\usepackage[stable]{footmisc}
\usepackage{amsmath}  
\usepackage{graphicx} 
\usepackage{amssymb,amsthm}
\usepackage{bm,bbm}
\usepackage{booktabs}
\usepackage{dsfont}
\usepackage{graphicx}
\usepackage{subcaption}
\usepackage{blindtext}
\usepackage{url}
\usepackage{algorithm,algorithmic}
\usepackage{setspace}
\usepackage{enumitem}
\usepackage{thmtools}
\usepackage{thm-restate}
\usepackage{authblk}
\usepackage{comment}
\usepackage{geometry}
\geometry{a4paper,
total={160mm,247mm},
}

\usepackage{array}

\usepackage[normalem]{ulem}


\newcommand{\blue}{\color{blue}}
\newcommand{\red}{\color{red}}


\newcommand{\dummy}{\mathord{\color{gray}\bullet}}

\providecommand{\bbR}{\mathbb{R}}
\newcommand{\R}{\mathbb{R}}

\newcommand{\E}{\mathbb{E}}

\providecommand{\CN}{\mathcal{N}}
\providecommand{\CD}{\mathcal{D}}

\providecommand{\CC}{\mathcal{C}}

\providecommand{\CP}{\mathcal{P}}

\providecommand{\CV}{\mathcal{V}}

\providecommand{\CX}{\mathcal{X}}
\providecommand{\CY}{\mathcal{Y}}

\providecommand{\Id}{\mathrm{Id}}
\providecommand{\argmin}{\operatorname*{\arg\min}}
\providecommand{\arginf}{\operatorname*{\arg\inf}}
\newcommand{\supp}[1]{\operatorname{supp}(#1)} 
\providecommand{\dist}{\operatorname{dist}}

\newcommand{\thetaG}{\theta_{\mathrm{good}}^*}
\newcommand{\thetaGk}{\theta_{\mathrm{good}}^{*,k}}
\newcommand{\thetaCOne}{\theta_{t}^{1,i_1}}
\newcommand{\thetaCTwo}{\theta_{t}^{2,i_2}}
\newcommand{\thetaGtilde}{\tilde\theta_{\mathrm{good}}}
\newcommand{\mAlphaBeta}[1]{m^{G,L}_{\alpha, \beta}(#1)}
\newcommand{\mAlphaBetaLone}[1]{m^{G,L_1}_{\alpha, \beta}(#1)}
\newcommand{\mAlphaBetaLtwo}[1]{m^{G,L_2}_{\alpha, \beta}(#1)}

\newcommand{\mAlphaBetanoarg}{m^{G,L}_{\alpha, \beta}}

\newcommand{\Ibeta}[1]{I^L_{\beta}[#1]}
\newcommand{\qbeta}[1]{q^L_{\beta}[#1]}
\newcommand{\qbetahalf}[1]{q^L_{\beta/2}[#1]}
\newcommand{\qa}[1]{q^L_{a}[#1]}

\newcommand{\Qbeta}[1]{Q^L_{\beta}[#1]}
\newcommand{\QbetaLone}[1]{Q^{L_1}_{\beta}[#1]}
\newcommand{\QbetaLtwo}[1]{Q^{L_2}_{\beta}[#1]}

\newcommand{\Temp}{\kappa}
\newcommand{\AgentJObj}{\widetilde{L}_j}

\newtheorem{theorem}{Theorem}[section]
\newtheorem{proposition}[theorem]{Proposition}

\newtheorem{assumption}[theorem]{Assumption}

\theoremstyle{remark}

\newtheorem{remark}[theorem]{Remark}

\numberwithin{equation}{section}

\providecommand*{\absbig}[1]{\big|{#1}\big|} 
\providecommand*{\N}[1]{\left\|{#1}\right\|} 
\providecommand*{\Nnormal}[1]{\|{#1}\|} 
\providecommand*{\Nbig}[1]{\big\|{#1}\big\|} 

\newcommand{\overbar}[1]{\makebox[0pt]{$\phantom{#1}\mkern 1.5mu\overline{\mkern-1.5mu\phantom{#1}\mkern-1.5mu}\mkern 1.5mu$}#1}
\renewcommand{\underbar}[1]{\makebox[0pt]{$\phantom{#1}\mkern 1.5mu\underline{\mkern-1.5mu\phantom{#1}\mkern-1.5mu}\mkern 1.5mu$}#1}

\newcommand{\divergence}{\textrm{div}}

\newcommand{\omegaa}[0]{\omega_{\alpha}^G}

\newcommand{\indivmeasure}[0]{\varrho} 




\newif\ifrevisedOne
\newcommand{\revisedOne}[1]{%
	\ifrevisedOne
		\color{purple} #1 \color{black} 
	\else
		#1%
	\fi}
\revisedOnefalse
\newif\ifrevisedTwo

\revisedTwofalse

\usepackage{cite} 
\usepackage{hyperref}
\hypersetup{
	final,
    colorlinks=true,
    linkcolor=blue,
    filecolor=magenta,
    urlcolor=cyan,
}

\usepackage{blindtext}
\usepackage{url}
\usepackage{xcolor}

\title{{\usefont{OT1}{bch}{b}{n}
	\LARGE Defending Against Diverse Attacks in Federated Learning Through Consensus-Based Bi-Level Optimization}}

\date{}

\author[1]{Nicol\'as Garc\'ia Trillos\thanks{Email: \texttt{garciatrillo@wisc.edu}}}
\author[2]{Aditya Kumar Akash\thanks{Email: \texttt{adityakumarakash@gmail.com}}}
\author[1]{Sixu Li\thanks{Email: \texttt{sli739@wisc.edu}}}
\author[3]{Konstantin Riedl\thanks{Email: \texttt{Konstantin.Riedl@maths.ox.ac.uk}}}
\author[4]{Yuhua Zhu\thanks{Email: \texttt{yuhuazhu@ucla.edu}}}

\affil[1]{University of Wisconsin-Madison, Department of Statistics}
\affil[2]{Google}
\affil[3]{University of Oxford, Mathematical Institute}
\affil[4]{University of California, Los Angeles, Department of Statistics and Data Science}


\begin{document}

\maketitle

\begin{abstract}
    \noindent
    Adversarial attacks pose significant challenges in many machine learning applications, particularly in the setting of distributed training and federated learning, where malicious agents seek to corrupt the training process with the goal of jeopardizing and compromising the performance and reliability of the final models.
In this paper, we address the problem of robust federated learning in the presence of such attacks by formulating the training task as a bi-level optimization problem.
We conduct a theoretical analysis of the resilience of consensus-based bi-level optimization (CB\textsuperscript{2}O), an interacting multi-particle metaheuristic optimization method, in adversarial settings.
Specifically, we provide a global convergence analysis of CB\textsuperscript{2}O in mean-field law in the presence of malicious agents, demonstrating the robustness of CB\textsuperscript{2}O against a diverse range of attacks.
Thereby, we offer insights into how specific hyperparameter choices enable to mitigate adversarial effects.
On the practical side, we extend CB\textsuperscript{2}O to the clustered federated learning setting by proposing FedCB\textsuperscript{2}O, a novel interacting multi-particle system,
and design a practical algorithm that addresses the demands of real-world applications.
Extensive experiments demonstrate the robustness of the FedCB\textsuperscript{2}O algorithm against label-flipping attacks in decentralized clustered federated learning scenarios, showcasing its effectiveness in practical contexts.
\end{abstract}

{\noindent\small{\textbf{Keywords:} federated learning, backdoor attacks, adversarial machine learning, bi-level optimization, consensus-based optimization, mean-field limit, Fokker-Planck equations, derivative-free optimization, metaheuristics}}\\

{\noindent\small{\textbf{AMS subject classifications:} 65K10, 90C26, 90C56, 35Q90, 35Q84}}

\tableofcontents

\section{Introduction}\label{sec:intro}
Adversarial attacks, such as data poisoning~\cite{steinhardt2017certified,fung2020limitations,bagdasaryan2020backdoor,wang2020attack}, backdoor attacks~\cite{chen2017targeted,bagdasaryan2020backdoor,xie2019dba,wang2020attack}, evasion attacks~\cite{biggio2013evasion,cao2017mitigating}, membership inference attacks, or several others~\cite{nasr2019comprehensive,geiping2020inverting,shumailov2021manipulating}, pose serious threats to the performance, reliability, and integrity of many machine learning (ML) models.
This raises severe safety concerns due to the widespread use of technology enhanced by artificial intelligence in applications such as personal health monitoring~\cite{lian2022deep}, autonomous driving~\cite{nguyen2022deep,dai2023online}, large language models~\cite{kasneci2023chatgpt,thirunavukarasu2023large}, and more.
For instance, agents with malicious intentions may try to contaminate training datasets with samples that are meticulously designed to enforce specific errors in a model's outputs, or try to alter test samples by unrecognizable perturbations, so-called adversarial examples~\cite{szegedy2013intriguing, goodfellow2014explaining}, to fool trained models during inference. In distributed training and federated learning (FL), in particular, the decentralized nature of the training process increases the vulnerability of models to a diverse range of adversarial attacks~\cite{tolpegin2020data,nasr2019comprehensive,bagdasaryan2020backdoor, geiping2020inverting}.

%
FL~\cite{mcmahan2017communication,konevcny2016federated,kairouz2021advances, beltran2023decentralized} is a distributed ML paradigm that enables model training directly on those devices where the data was originally generated.
It has been developed to overcome the inefficiencies of centralized data collection and model training while preserving data privacy and security of the participants.
One popular FL paradigm is  decentralized federated learning (DFL) \cite{he2018cola,kovalev2021linearly, barbieri2022decentralized,beltran2023decentralized, beltran2024fedstellar}.
Unlike centralized approaches \cite{mcmahan2017communication,kairouz2021advances}, DFL operates without a central server, relying only on direct interactions between agents that adhere to the following two main steps in each communication round: (i) {Local update step:} Agents update their models locally on their own device using their private stored datasets, typically through running a few epochs of stochastic gradient descent or another ML optimization algorithm; (ii) {Model exchange and local aggregation step:} Agents then share their locally updated models with others
and aggregate the models they receive to improve their own local ones. A pictogram of the DFL framework is provided in Figure~\ref{subfig:DFL}.
\setlength{\abovecaptionskip}{5pt}
\setlength{\belowcaptionskip}{0pt}
\begin{figure}[!htb]
	\centering
	\subcaptionbox{\label{subfig:DFL}DFL}{
		\includegraphics[trim=300 56 300 56,clip, width=0.46\textwidth]{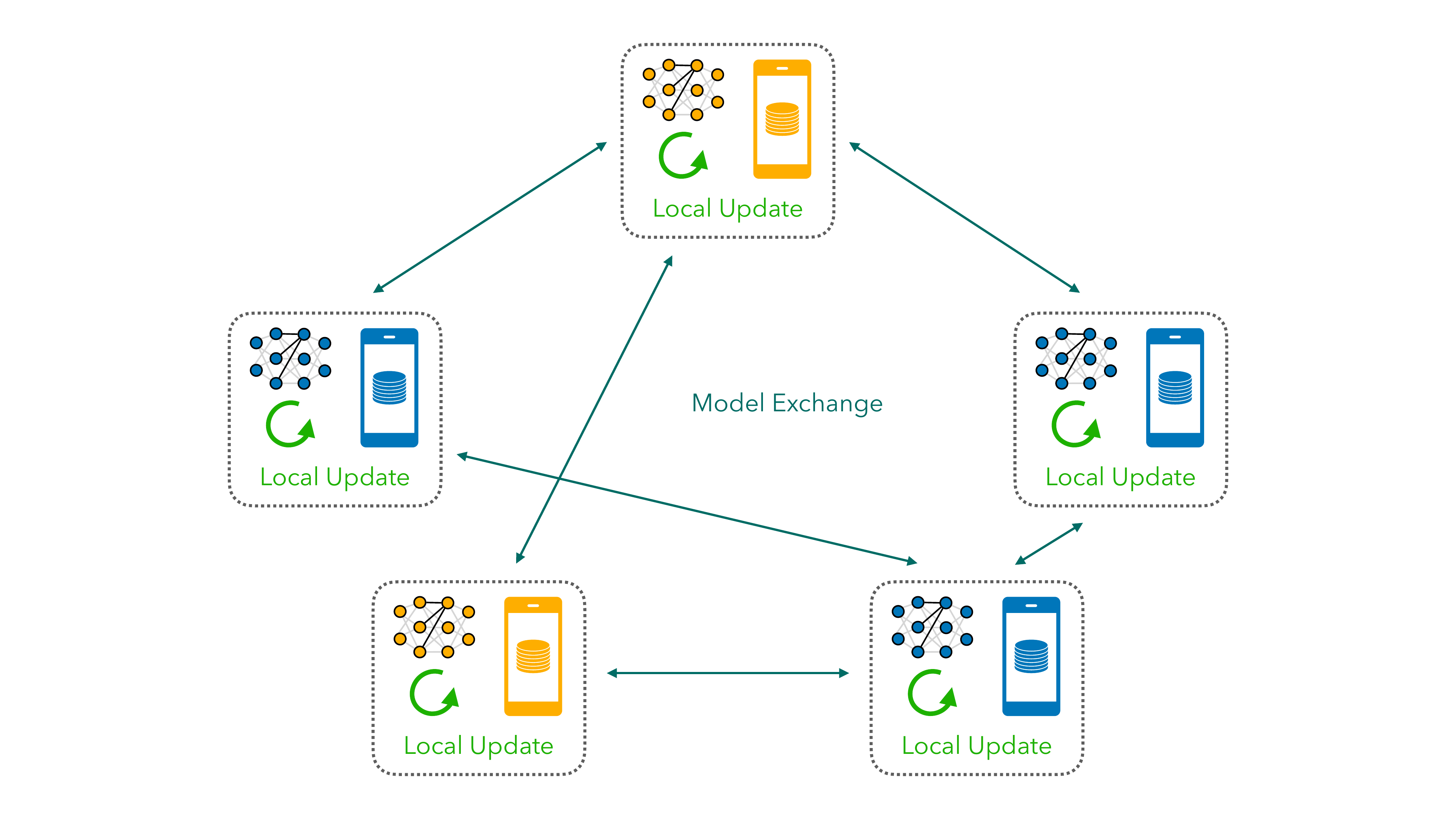}
	}
    \hspace{2em}
	\subcaptionbox{\label{subfig:DFL_attack}DFL in the presence of malicious agents}{
		\includegraphics[trim=300 56 300 56,clip, width=0.46\textwidth]{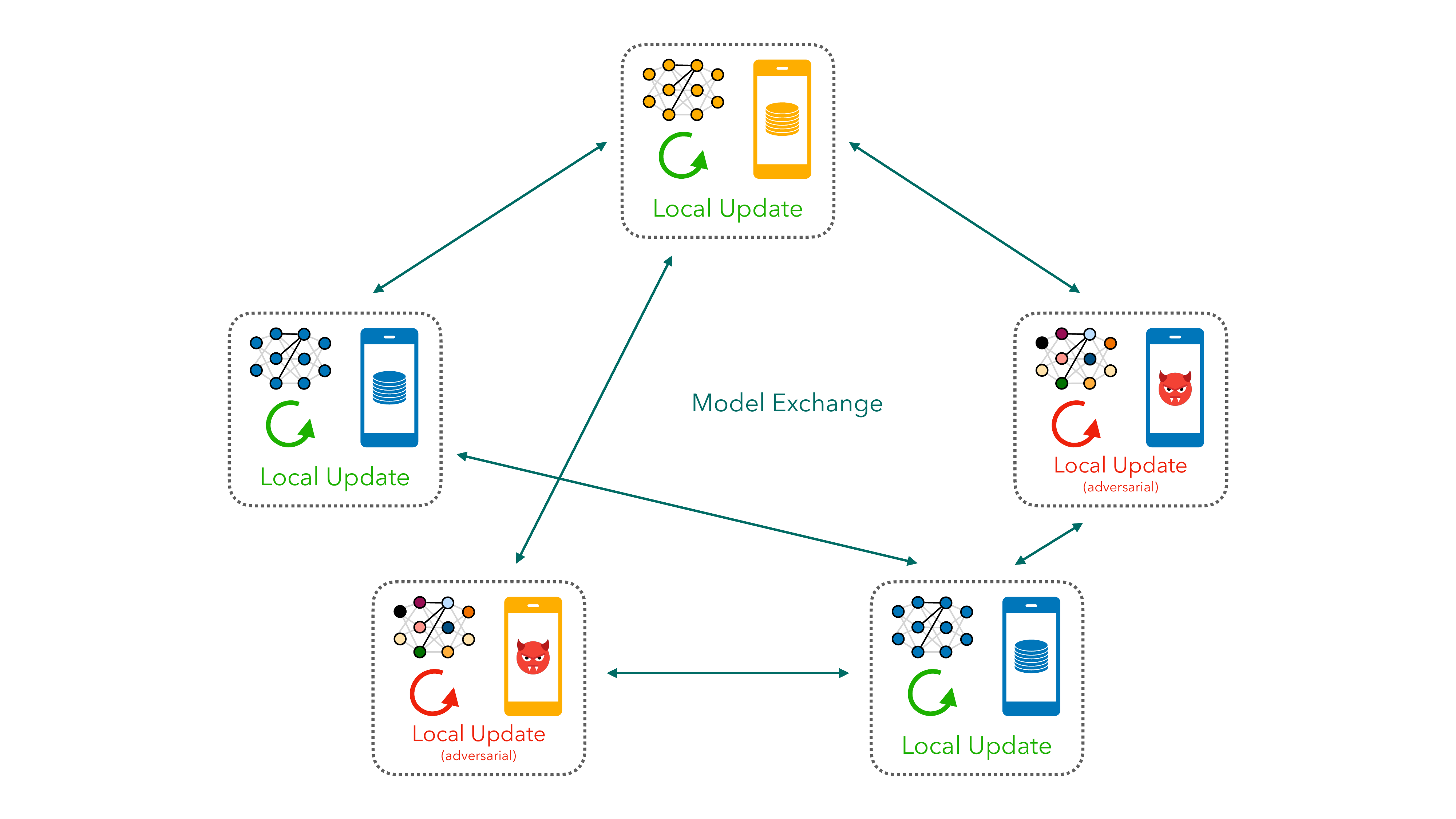}}
	\caption{A pictogram of the decentralized federated learning (DFL) paradigm.}
\end{figure}

The decentralized nature of DFL enhances the communication efficiency while better preserving data privacy of the individual agents through keeping their individually stored data localized.
However, this makes the system more vulnerable to poisoning attacks from malicious agents \cite{Lyu2020, taxonomy2021jere,tolpegin2020data}; see Figure~\ref{subfig:DFL_attack} for an illustration.
These malicious agents can inject arbitrary but specifically designed ``poisoned'' models into the system.
For the benign agents, verifying the authenticity of these models becomes challenging because the local training data and training processes of other participants are hidden in the system.
Without carefully filtering out such poisoned models,  incorporating them during the aggregation steps will degrade the performance of the final trained models of benign agents.

From the perspective of a benign agent, one straightforward strategy to assess the usefulness and trustworthiness of a model is to first evaluate its performance on the agent's own local dataset, which we represent through a loss function~$L$, and consecutively utilize only those models with relatively small average loss~$L$; see \cite{carrillo2024fedcbo}. To simplify our exposition, in what follows we assume that all agents have the same loss function $L$ and refer to Remark~\ref{rem:DCFL} for a discussion on the more realistic setting of different agents possessing different objective functions~$L$.
The approach of relying on $L$ to filter simple poisoning attacks (for instance, when malicious agents send ``trash'' models, such as randomly generated models) may be effective since in those cases the received models will generally perform poorly on the datasets of benign agents. 
However, this strategy becomes insufficient against more sophisticated attacks, such as label-flipping attacks \cite{fung2020limitations, tolpegin2020data, jebreel2023fl, JEBREEL2024111}, where poisoned models may still achieve low average loss~$L$ on the datasets of benign agents while embedding harmful biases for specific classes; see Section~\ref{sec:algorithm} for more detailed explanations.
These challenges underscore the need for a more flexible and robust framework that benign agents in FL systems can leverage to defend against advanced attacks. 

Motivated by the above discussion, in this paper, we propose to incorporate a secondary layer of evaluation for the benign agents to assess the trustworthiness of models from other agents. This layer is implemented, for a given type of attack, through a suitable robustness criterion encoded by an upper-level objective function~$G$. We thereby abstract the robust training task in the DFL setting and formulate it as a bi-level optimization problem of the form 
\begin{equation}
    \label{eq:bilevel_opt}
    \thetaG := \argmin_{\theta^*\in \Theta} G(\theta^*)
    \quad \text{s.t.\@}\quad
    \theta^* \in \Theta := \argmin_{\theta \in \bbR^d} L(\theta)
\end{equation}
under the FL paradigm.
If malicious agents now perform more elaborate attacks and share poisoned models achieving small average loss on the datasets of benign agents, i.e., models that yield a good value for the lower-level objective function~$L$, the robustness criterion $G$ will serve as a tool to separate these poisoned models from benign ones.
Enhancing robustness can thus be mathematically formulated as finding the global minimizer $\thetaG$ of $G$ within the set $\Theta$ of global minimizers of $L$ (in practice, approximate minimizers).
In this way, problem~\eqref{eq:bilevel_opt} serves as a mathematical device to tilt the benign agents' preferences towards models that satisfy the additional robustness properties implemented through $G$. Naturally, different choices of $G$ may be required to defend against different types of attacks. 

To solve a bi-level optimization problem of the form \eqref{eq:bilevel_opt} in the FL context and to analyze the impact of malicious agents on the system, 
we combine the viewpoints from \cite{carrillo2024fedcbo}, which studies FL from the perspective of interacting particle systems (IPS), with those of consensus-based bi-level optimization (CB\textsuperscript{2}O)~\cite{trillos2024CB2O}, an IPS-based approach~\cite{pinnau2017consensus,carrillo2018analytical,fornasier2021consensus,huang2022global,riedl2023all,riedl2023all2} specifically designed to solve optimization problems of the form \eqref{eq:bilevel_opt} for potentially nonconvex upper- and lower-level objective functions.

Before providing a brief discussion on the CB\textsuperscript{2}O framework~\cite{trillos2024CB2O} and its capability to ensure robustness in FL systems, let us first comment on the generalization of the above setting to the more realistic one where different agents may have different objective functions~$L$.
\begin{remark}[Decentralized clustered federated learning (DCFL)]
    \label{rem:DCFL}
    In real-world scenarios, edge devices (users/agents) typically possess heterogeneous datasets, implying that different users have different lower-level objective functions $L$.
    Yet, it is reasonable to assume a certain (unknown) group structure among users (illustrated by the two different colors, yellow and blue, in Figure~\ref{subfig:DFL}), which reflects the idea that users with similar backgrounds are likely to make similar decisions and thus generate data following similar distributions (which translates to agents in the same group having similar loss functions). This idea is made precise by the clustered federated learning setting~\cite{sattler2020clustered,ghosh2020efficient,ruan2022fedsoft,long2023multi,ma2022convergence,carrillo2024fedcbo}. In this context, the goal is to propose communication protocols that can produce learning models for each cluster of users, rather than a single model for all users. Naturally, preserving data privacy is still an important constraint during the training process (in particular, group membership of agents is never revealed during training); see Section \ref{sec:algorithm} for a more detailed description of the clustered federated learning setting. The IPS-based approach designed in \cite{carrillo2024fedcbo} accommodates this clustered FL setting.
    As we discuss in Remark~\ref{rem:DCFL_FedCB2O_theory}, the analysis conducted in Section~\ref{sec:main} for the case of a single group of users can be extended to the DCFL setting by combining the results of \cite{carrillo2024fedcbo} with the ones of Section~\ref{sec:main} and \cite{trillos2024CB2O}.
    For simplicity, the theoretical analysis in Section~\ref{sec:main} will thus be restricted to the case where all users share a single lower-level objective $L$ (i.e., the homogeneous data setting), which is of interest in its own right.
\end{remark}

\paragraph{Consensus-Based Bi-Level Optimization.}
For a system with $N$ agents,  CB\textsuperscript{2}O~\cite{trillos2024CB2O} can be used to implement the training of the agents' models $\theta^1, \dots, \theta^N \in \R^d$. Formally, we describe the agents' model parameters as a system of time-evolving processes, i.e., $\theta^i=(\theta^i_t)_{t\geq0}$ for $i=1,\dots,N$, which, for user-specified parameters $\alpha,\beta, \lambda, \sigma > 0$, satisfy the system of stochastic differential equations~(SDEs) 
\begin{equation}
\label{eq:dyn_micro2}
\begin{split}
    d\theta_t^i
    &=
    -\lambda \left(\theta_t^i - \mAlphaBeta{\rho_t^N} \right)dt + \sigma D\left(\theta_t^i - \mAlphaBeta{\rho_t^N}\right) dB_t^i,
    \qquad\theta_0^i\sim \rho_0,
\end{split}
\end{equation}
where $((B_t^i)_{t\geq0})_{i=1,\dots,N}$ are independent standard Brownian motions in $\bbR^d$, and $\rho_t^N$ denotes the empirical measure of all model parameters at time $t$. Here, for an arbitrary probability measure $\varrho\in\CP(\bbR^d)$ the consensus point~$\mAlphaBetanoarg$ is defined according to
\begin{equation}
    \label{eq:consensus_point}
    \mAlphaBeta{\varrho}
    := \int \theta \frac{\omegaa(\theta)}{\N{\omegaa}_{L^1(\Ibeta{\varrho})}} d\Ibeta{\varrho}(\theta),
    \quad \text{with}\quad
    \omegaa(\theta):= \exp \left(-\alpha G(\theta) \right)
\end{equation}
and with $\Ibeta{\varrho}:=\mathbbm{1}_{\Qbeta{\varrho}} \varrho$,
where for some (fixed at initialization) parameters $\delta_q>0$, sufficiently small, and $R > 0$, sufficiently large, the set $\Qbeta{\varrho}$ is defined as
\begin{align}
    \label{eq:Q_beta}
    \Qbeta{\varrho} 
    &:=
    \left\{\theta\in B_R(0) :  L(\theta)\leq \frac{2}{\beta}\int_{\beta/2}^{\beta} \qa{\varrho}\,da +\delta_q \right\},
\end{align}
with the $a$-quantile function~$\qa{\varrho}$ of $\varrho$ under $L$ defined by
\begin{align}
    \label{eq:q_beta}
    \qa{\varrho}
    &:=
    \arginf_{q\in\bbR} \big\{a \leq \varrho(L(\theta)\leq q) \big\}.
\end{align}

To provide some intuition on the system \eqref{eq:dyn_micro2} and what it enforces, we first observe that the drift term (first term in \eqref{eq:dyn_micro2}) drives each individual agent to align its model parameter with the consensus point $\mAlphaBeta{\rho_t^N}$, which is a weighted average of all models in the system. This weighted average favors models from agents that have a small value of $L$ and that, in addition, attain a small value for the upper-level objective $G$. To see this, observe that the quantity $\frac{2}{\beta} \int_{\beta/2}^{\beta} \qbeta{\varrho} da + \delta_q$, as defined in \eqref{eq:Q_beta}, can be viewed as a proxy for $\underbar{L}$ (denoting, from now on, the infimum of $L$ over all possible model parameters) based on the currently available information from the density $\varrho$, provided that $\beta$ and $\delta_q$ are sufficiently small. Consequently, the sub-level set $\Qbeta{\varrho}$ can be interpreted as an approximation of the neighborhood of the set $\Theta$ of global minimizers of $L$ that is inferred from the information available in $\varrho$. Building upon this intuition, the expression for the consensus point $\mAlphaBeta{\varrho}$ can thus be understood as taking a weighted average w.r.t.\@ the upper-level objective $G$ within the neighborhood of the set $\Theta$ of global minimizers of the lower-level objective $L$, and the system \eqref{eq:dyn_micro2} can be thought of as a system of particles that jointly target the global minimizer $\thetaG$ of the bi-level optimization problem \eqref{eq:bilevel_opt}.
The diffusion term (second term in \eqref{eq:dyn_micro2}) is used, as in many optimization schemes, to induce exploration of the loss landscape.
Throughout this paper, $D(\dummy)=\N{\dummy}_2\Id$.
For more detailed explanations and additional insights into the design and rationale behind the CB\textsuperscript{2}O system~\eqref{eq:dyn_micro2}\,--\,\eqref{eq:q_beta}, we refer readers to \cite{trillos2024CB2O}.
There, it is also explained that, in practice, the CB\textsuperscript{2}O system can be simplified \revisedOne{(by setting $R=\infty$, $\delta_q=0$, and by replacing $\frac{2}{\beta}\int_{\beta/2}^{\beta} \qa{\varrho}\,da$ with $\qbeta{\varrho}$ and in the definition of $\Qbeta{\varrho}$ in \eqref{eq:Q_beta}),} while here we have formulated it in a form that is tractable for the type of rigorous mathematical analysis that we discuss shortly.

The mean-field limit of the finite particle system \eqref{eq:dyn_micro2}, i.e., as $N \rightarrow \infty$, is described by the stochastic process $\overbar{\theta} = (\overbar{\theta}_t)_{t\geq 0}$ satisfying the self-consistent nonlinear nonlocal SDE
\begin{equation}
\label{eq:dyn_macro_no_mali}
    d\overbar{\theta}_t
    =
    -\lambda \left(\overbar{\theta}_t - \mAlphaBeta{\rho_t} \right) dt + \sigma D\left(\overbar{\theta}_t - \mAlphaBeta{\rho_t}\right) dB_t,
    \qquad \overbar{\theta}_0 \sim \rho_0,
\end{equation}
where $\rho_t := \mathrm{Law} (\overbar{\theta}_t)$.
Under mild assumptions on the initial distribution~$\rho_0$ and minimal assumptions on the objective functions $L$ and $G$, \cite[Theorem~2.7]{trillos2024CB2O} proves that CB\textsuperscript{2}O converges in mean-field law to the target global minimizer $\thetaG$ of the bi-level optimization problem \eqref{eq:bilevel_opt}.

In the context of DFL, this result can be understood as follows.
If all agents, aiming to jointly train one model that minimizes the loss function $L$ while enjoying the robustness property encoded through $G$, \textit{strictly adhere} to the training protocol defined by the dynamics \eqref{eq:dyn_micro2},
they will, provided that the number of agents is sufficiently large, eventually converge to the target minimizer $\thetaG$ of the bi-level optimization problem \eqref{eq:bilevel_opt}.

\paragraph{Consensus-Based Bi-Level Optimization in the Presence of Attacks.}

However, when malicious agents, who \textit{purposely deviate} from the dynamics specified by \eqref{eq:dyn_micro2} (for instance, by executing adversarial or poisoning attacks), are present in the system, it is unclear whether the models of benign agents will still converge to the target minimizer $\thetaG$.
To make this question mathematically more precise, consider the (mean-field) system
\begin{subequations}\label{eq:dyn_macro}
\begin{align}
    d\bar{\theta}^b_t
    &=
    -\lambda \left(\bar{\theta}^b_t - \mAlphaBeta{\rho_t} \right) dt + \sigma D\left(\bar{\theta}^b_t - \mAlphaBeta{\rho_t}\right) dB^b_t, \label{eq:dyn_macro_b}\\
    d\bar{\theta}^m_t
    &= a_t\,dt + A_t\,dB^m_t,
    \label{eq:dyn_macro_m}
\end{align}
\end{subequations}
where $\rho_t = w_b\rho^b_t + w_m\rho^m_t$ with $\rho^b_t = \mathrm{Law} (\bar{\theta}^b_t)$ and $\rho^m_t = \mathrm{Law} (\bar{\theta}^m_t)$ satisfying
\begin{subequations}\label{eq:fokker_planck}
\begin{align}
    \partial_t\rho^b_t
	&= \lambda\divergence \left(\left(\theta -\mAlphaBeta{\rho_t}\right)\rho^b_t\right)
	+ \frac{\sigma^2}{2}\sum_{k=1}^d \partial_{kk} \left(D\left(\theta-\mAlphaBeta{\rho_t}\right)_{kk}^2\rho^b_t\right),\label{eq:fokker_planck_b}\\
    \rho^m
    &\in\CC([0,T],\CP_4(\bbR^d)).\label{eq:fokker_planck_m}
\end{align}
\end{subequations}
Here, $\overbar{\theta}^b$ represents a typical benign agent in the system who follows the robust training protocol defined by the CB\textsuperscript{2}O dynamics,
while $\overbar{\theta}^m$ denotes a generic malicious agent who deviates therefrom by executing attacks. We model the behavior of malicious particles $\overbar{\theta}^m$ through an SDE with an arbitrary drift $a_t$ and an arbitrary diffusion $A_t$ as in \eqref{eq:dyn_macro_m} to emphasize that malicious agents can behave arbitrarily and perform a wide range of attacks. The specific form of $a_t$ and $A_t$ is of no particular importance, as long as they are regular enough to ensure that the SDE is well-defined and that the corresponding law $\rho^m$ is continuous in time as in \eqref{eq:fokker_planck_m}, a requirement introduced solely to facilitate a rigorous theoretical analysis in Section~\ref{sec:main}.
The weights $w_b$ and $w_m$ represent the proportions of benign and malicious agents in the system. We note that the consensus point computed by benign agents takes into account all agents in the system, benign or malicious, given that benign agents have no a priori way to distinguish between them.

Based on the modified mean-field dynamics \eqref{eq:dyn_macro} and \eqref{eq:fokker_planck} with maliciously and irregularly behaving agents attacking the dynamics,
we pose the question: 
\begin{center}
    {\it``Can the benign particles (agents) $\overbar{\theta}^b$ still converge to their target minimizer $\thetaG$\\despite the presence of malicious particles (agents) $\overbar{\theta}^m$?''}
\end{center}

\subsection{Contributions} 
In this paper, we provide a positive answer to this question and support it by both theoretical analysis and experimental evidence.
On the theoretical side,
by conducting a global convergence analysis of the mean-field system \eqref{eq:dyn_macro} and \eqref{eq:fokker_planck} in the presence of malicious agents,
we demonstrate that consensus-based bi-level optimization (CB\textsuperscript{2}O) \cite{trillos2024CB2O} is robust against a wide range of attacks (Theorem~\ref{thm:main}).
Our analysis highlights in particular how benign agents can effectively mitigate adversarial effects by appropriately choosing the hyperparameters $\beta$ and $\alpha$ of the method. 
This establishes a rigorous theoretical foundation for the applicability of CB\textsuperscript{2}O in adversarial settings.
On the algorithmic side, to demonstrate the robustness of CB\textsuperscript{2}O practically, we tackle the problem of defending against poisoning attacks in the DCFL setting.
Building upon ideas from \cite{carrillo2024fedcbo}, where FedCBO is proposed for the attack-free DCFL problem, we extend the CB\textsuperscript{2}O dynamics to the clustered federated learning setting and propose a novel, robustified DCFL framework, which we call FedCB\textsuperscript{2}O.
In doing so, we develop a new agent selection mechanism to facilitate the consensus point computation while accounting for practical demands that arise in real-world FL applications.
This mechanism, which may be of independent interest to any DFL algorithm requiring agent selection, is integrated into the FedCB\textsuperscript{2}O algorithm (Algorithm~\ref{alg:FedCB2O}).
Through extensive experiments, we validate the effectiveness of the FedCB\textsuperscript{2}O algorithm in realistic FL scenarios in the presence of malicious agents performing label-flipping attacks.

\subsection{Related Works}
\paragraph{Attacks and Defenses in Federated Learning.}
Adversarial attacks in federated learning (FL) can be broadly classified into two main categories: attacks targeting federated models and privacy attacks \cite{rodriguez2023survey,hallaji2024decentralized}.
While the former class aims to degrade the performance and reliability of the final trained models, the latter seeks to reconstruct or infer the private data 
of other agents. 
This paper focuses on scenarios where malicious agents attempt to jeopardize the performance of an FL system during training time, commonly referred to as poisoning attacks \cite{rodriguez2023survey}.
Since both training data and the model training process of each agent are hidden from others, malicious agents can engage in two types of poisoning: {data poisoning attacks} and {model poisoning attacks}.
To execute a data poisoning attack, malicious agents manipulate their own local datasets used to train models.
They may do this by modifying the labels of a subset of the local training data (label-flipping attacks) \cite{tolpegin2020data, li2021lomar,li2021detection,jiang2023data,hallaji2023label,jebreel2023fl,JEBREEL2024111}, by altering data features by adding manually designed patterns to images \cite{chen2017targeted,bagdasaryan2020backdoor,sun2019can} or generating poisoned samples using generative models \cite{zhang2019poisoning,zhang2020poisongan}, which are collectively referred to as sample poisoning attacks.
To execute model poisoning attacks, malicious agents either randomly generate model parameters based on other agents' models \cite{fraboni2021free,fang2020local} or solve an optimization problem to maximize their attack's success while minimizing the differences between their poisoned models and other models in the system \cite{bhagoji2019analyzing,shejwalkar2021manipulating}.

To defend against poisoning attacks in FL, numerous strategies have been proposed.
The most common approach, which is relevant in our context, is to replace the simple averaging (mean) in the local aggregation step (as in FedAvg \cite{mcmahan2017communication}) with a robust aggregation operator to reduce sensitivity to outliers or extreme values.
This class of methods is commonly referred to as {robust aggregation} \cite{rodriguez2023survey, hallaji2024decentralized} and  includes techniques based on statistically robust estimators, such as median \cite{yin2018byzantine}, trimmed-mean \cite{yin2018byzantine}, geometric-mean \cite{wu2020federated,pillutla2022robust}, Krum and multi-Krum \cite{blanchard2017machine}.
It is worth noting that Sageflow \cite{park2021sageflow} employs entropy-based filtering and loss-weighted averaging during the local aggregation step, which is similar to the use of the weighted averaging w.r.t.\@ robustness criterion $G$ in our CB\textsuperscript{2}O framework.
However, our framework offers greater flexibility in designing robustness criteria while providing theoretical guarantees for general nonconvex objective functions.
For an in-depth discussion of various attack and defense strategies, we refer readers to the comprehensive surveys \cite{rodriguez2023survey, hallaji2024decentralized}.

\paragraph{Federated Learning and Swarm-based Optimization Methods.}
Inspired by swarm intelligence observed in nature, where agents collaborate to achieve a common goal, different swarm-based optimization methods have been proposed and studied in the literature, including particle swarm optimization (PSO) \cite{kennedy1995particle,kennedy1997particle}, ant colony optimization \cite{dorigo2005ant}, and consensus-based optimization (CBO) \cite{pinnau2017consensus,carrillo2018analytical,bailo2024cbx}, among others.
In recent years, mathematical foundations of interacting particle systems (IPS) have been solidified through the development of a rigorous analytical framework that leverages tools from stochastic and PDE analysis, see \cite{grassi2021mean,riedl2024perspective} and the references therein.

Federated learning (FL) \cite{mcmahan2017communication,konevcny2016federated,kairouz2021advances}, which naturally involves collaborative training in a distributed manner, shares a similar spirit with swarm-based optimization methods.
Several works have begun to explore FL problems from the perspective of swarm-based optimization.
For example, \cite{park2021fedpso} integrates PSO into the FL setting and proposes the FedPSO algorithm to reduce communication costs.
However, this work assumes homogeneous data and is limited to an attack-free scenario, which constrains its applicability to complex real-world FL applications.
To address the challenges of data heterogeneity and poisoning attacks,
\cite{fan2023cb} introduces a small shared global dataset among participants and develops a communication-efficient and Byzantine-robust distributed swarm learning (CB-DSL) framework by combining the principle of PSO with distributed gradient-based methods \cite{mcmahan2017communication}.
They also provide mathematical guarantees for CB-DSL to converge to stationary points in nonconvex settings, under assumptions about the relationship between local gradient updates and global exploration forces. 
For a comprehensive overview of distributed swarm learning, we refer readers to the surveys \cite{wang2024distributed, shammar2024swarm}.

Another notable line of research focuses on incorporating CBO into the FL setting.
The authors of \cite{carrillo2024fedcbo} propose a novel IPS called FedCBO to address decentralized clustered federated learning (DCFL) problems, accompanied by rigorous global convergence guarantees under mild assumptions on the objective functions.
However, FedCBO is designed for attack-free scenarios, leaving it vulnerable to adversarial attacks.
To remedy this, we incorporate in this paper a variant of CBO, called consensus-based bi-level optimization (CB\textsuperscript{2}O) \cite{trillos2024CB2O}, into the DCFL setting.
We propose a novel paradigm that enables benign agents to defend against malicious agents during training, mitigating the vulnerabilities of existing approaches, justified both theoretically and empirically.

The works mentioned above pave the way for connecting the fields of federated learning and swarm-based optimization, with benefits for both domains. On the one hand, well-developed mathematical tools from swarm-based optimization can help FL to establish solid theoretical foundations and inspire new algorithms.
Conversely, practical challenges arising in FL applications motivate and inspire the development of novel swarm-based optimization methods and pose new, mathematically intriguing questions.

\subsection{Organization}
In Section~\ref{sec:main}, we present the main theoretical contributions of this paper.
Specifically, we state in Theorem~\ref{thm:main} a global mean-field law convergence result for the CB\textsuperscript{2}O dynamics \eqref{eq:dyn_macro} and \eqref{eq:fokker_planck} in the presence of malicious agents who can perform a wide range of adversarial attacks. 
The accompanying proofs, presented in Sections~\ref{sec:rQQLP}\,--\,\ref{sec:proof_thm}, provide insights 
into how the choice of the hyperparameters $\beta$ and $\alpha$ makes CB\textsuperscript{2}O robust in adversarial settings.
Building upon this theoretical foundation, we turn towards a practical application of the CB\textsuperscript{2}O system in the setting of robust federated learning, where certain agents perform label-flipping (LF) attacks.
In Section~\ref{subsec:LF_attack} and \ref{subsec:FailFedCBO} we first revisit LF attacks in the context of decentralized clustered federated learning (DCFL) and illustrate the vulnerability of the FedCBO algorithm \cite{carrillo2024fedcbo} to such attacks.
Motivated by these observations, we introduce in Section~\ref{subsec:RobustFedCB2O}  FedCB\textsuperscript{2}O, a novel interacting particle system, which extends the CB\textsuperscript{2}O system to the DCFL setting.
We then adapt FedCB\textsuperscript{2}O in Section~\ref{subsec:FedCB2O_alg} to account for practical demands that arise in real-world FL applications and propose with Algorithm~\ref{alg:FedCB2O} a practical algorithm, whose efficiency is validated experimentally in a DCFL setting in the presence of malicious agents performing LF attacks.
Section \ref{sec:conclusion} concludes the paper.

For the sake of reproducible research we provide the code implementing the FedCB\textsuperscript{2}O algorithm proposed in this work and used to run the numerical experiments in Section   \ref{sec:experiments} in the GitHub repository \url{https://github.com/SixuLi/FedCB2O}.

\subsection{Notation}
Euclidean balls are denoted as \mbox{$B_{r}(\theta) := \{\tilde\theta \in \bbR^d\!:\! \Nnormal{\tilde\theta-\theta}_2 \leq r\}$}.
The distance between a point $\theta$ and a set $S\subset\R^d$ is $\dist(\theta, S) := \inf_{\tilde\theta \in S}\, \Nnormal{\tilde\theta-\theta}_2$, and the neighborhood of a set $S$ with radius $r$ is $\CN_r(S) := \{\tilde\theta \in \R^d : \dist(\tilde\theta, S) \leq r \}$.
We introduce $[N]:=\{1,2, \dots, N\}$, and $[N]\backslash j$ as the set $[N]$ excluding the element $j$.
The symbol $\propto$ is used to indicate equality up to a normalizing constant.
For the space of continuous functions~$f:X\rightarrow Y$ we write $\CC(X,Y)$, with $X\subset\bbR^n$ and a suitable topological space $Y$.
The operators $\nabla$ and $\Delta$ denote the gradient and Laplace operators of a function on~$\bbR^d$.
The main objects of study in this paper are laws of stochastic processes, $\rho\in\CC([0,T],\CP(\bbR^d))$, where the set $\CP(\bbR^d)$ contains all Borel probability measures over $\bbR^d$.
With $\rho_t\in\CP(\bbR^d)$ we refer to the snapshot of such law at time~$t$.
In case we refer to some fixed distribution, we write~$\indivmeasure$.
Measures~$\indivmeasure \in \CP(\bbR^d)$ with finite $p$-th moment $\int \Nnormal{\theta}_2^p\,d\indivmeasure(\theta)$ are collected in $\CP_p(\bbR^d)$.
For any $1\leq p<\infty$, $W_p$ denotes the \mbox{Wasserstein-$p$} distance between two Borel probability measures~$\indivmeasure_1,\indivmeasure_2\in\CP_p(\bbR^d)$ given by $W_p^p(\indivmeasure_1,\indivmeasure_2) = \inf_{\pi\in\Pi(\indivmeasure_1,\indivmeasure_2)}\int\Nnormal{\theta-\tilde\theta}_2^p\,d\pi(\theta,\tilde\theta)$,
where $\Pi(\indivmeasure_1,\indivmeasure_2)$ denotes the set of all couplings of $\indivmeasure_1$ and $\indivmeasure_2$.

\section{\texorpdfstring{Robustness of CB\textsuperscript{2}O Against Attacks}{Robustness of CB2O Against Attacks}}
\label{sec:main}

We now present and discuss the main theoretical result about the global convergence of the benign agent density~\eqref{eq:dyn_macro_b} and \eqref{eq:fokker_planck_b} of the mean-field CB\textsuperscript{2}O dynamics~\eqref{eq:dyn_macro} and \eqref{eq:fokker_planck} in mean-field law in the presence of a malicious agent density~\eqref{eq:dyn_macro_m} and \eqref{eq:fokker_planck_m} for a lower-level objective function~$L$ and an upper-level objective function~$G$ that satisfy the following assumptions.

\begin{assumption}\label{def:assumptions}
	Throughout we are interested in $L \in \CC(\bbR^d)$ and $G \in \CC(\bbR^d)$,
    for which
	\begin{enumerate}[label=A\arabic*,labelsep=10pt,leftmargin=35pt]
		\item\label{asm:thetaG} there exists a unique $\thetaG\in\Theta:=\argmin_{\theta\in\bbR^d} L(\theta)$ with $\underbar{L}:=L(\thetaG)=\inf_{\theta\in\bbR^d} L(\theta)$ such that $G(\thetaG) = \inf_{\theta^*\in\Theta} G(\theta^*)$,
        \item\label{asm:LipL} there exist $h_L,R^H_{L}>0$, and $H_L<\infty$ such that
        \begin{align}
            L(\theta)-\underbar{L}
            \leq H_L\N{\theta-\thetaG}_2^{h_L}
            \quad \text{ for all } \theta\in B_{R^H_{L}}(\thetaG),
        \end{align}
        \item\label{asm:icpL} there exist $L_\infty,R_L,\eta_L > 0$, and $\nu_L \in (0,\infty)$ such that
        \begin{subequations}
		\begin{align}
			\label{eq:icpL_1}
			\dist(\theta,\Theta)
            &\leq \frac{1}{\eta_L}\left(L(\theta)-\underbar{L}\right)^{\nu_L} \quad \text{ for all } \theta \in \CN_{R_L}(\Theta),\\
            \label{eq:icpL_2}
            L(\theta)-\underbar{L}
            &> L_\infty \quad \text{ for all } \theta \in \big(\CN_{R_L}(\Theta)\big)^c,
		\end{align}
        \end{subequations}
        \item\label{asm:LipG} there exist $h_G,R^H_{G} > 0$, and $H_G<\infty$ such that
        \begin{align}
            G(\theta)-G(\thetaG)
            \leq H_G\N{\theta-\thetaG}_2^{h_G}
            \quad \text{ for all } \theta\in B_{R^H_{G}}(\thetaG),
        \end{align}
        \item\label{asm:icpG} there exist $G_{\infty},R_G,\eta_G > 0$, and $\nu_G \in (0,\infty)$ such that for all $r_G\leq R_G$ there exists $\thetaGtilde\in B_{r_G}(\thetaG)$ such that
		\begin{subequations}
        \begin{align}
			\label{eq:icpG_1}
			\Nbig{\theta-\thetaGtilde}_2
            &\leq \frac{1}{\eta_G}\left(G(\theta)-G(\thetaGtilde)\right)^{\nu_G} \quad \text{ for all } \theta \in B_{r_G}(\thetaG),\\
			\label{eq:icpG_2}
			G(\theta)-G(\thetaGtilde)
            &> G_\infty \quad \text{ for all } \theta \in \CN_{r_G}(\Theta)\backslash B_{r_G}(\thetaG),
		\end{align}
        \end{subequations}
        \item\label{asm:growthG} there exist $k_G,R^K_G > 0$, and $K_G<\infty$ such that
        \begin{equation}\label{asm:quadratic_growth_G}
            G(\theta) - G(\thetaG) \geq K_G \N{\theta - \thetaG}_2^{k_G} \quad  \text{ for all } \theta \in \big(B_{R_G^K}(\thetaG) \big)^c \cap \CN_{R_G} (\Theta).
        \end{equation}
    \end{enumerate}
\end{assumption}
Assumptions~\ref{asm:thetaG}--\ref{asm:icpG} are identical to the ones in \cite[Assumption~2.6]{trillos2024CB2O} and we redirect readers to this paper for their discussion.
Assumption~\ref{asm:growthG} additionally imposes the growth of the upper-level objective function~$G$ in the farfield in a neighborhood of the set~$\Theta$ of global minimizers of the lower-level objective function~$L$.
This condition ensures that no regions in the farfield simultaneously exhibit small values for both $L$ and $G$.

We are now ready to state the main result about the global convergence of CB\textsuperscript{2}O in the presence of attacks as well as its robustness against those.
The proof is deferred to Section~\ref{sec:proof_thm} with auxiliary statements being presented in Sections~\ref{sec:rQQLP} and \ref{sec:attack_control}.
The proof framework follows the one of \cite{fornasier2021consensus,fornasier2021convergence,riedl2022leveraging,riedl2024perspective,trillos2024CB2O}.

\begin{theorem} [Convergence of the mean-field CB\textsuperscript{2}O dynamics~\protect{\eqref{eq:dyn_macro}} and~\protect{\eqref{eq:fokker_planck}} in the presence of attacks]
    \label{thm:main}
    Let $\varrho\in\CP(\bbR^d)$ be of the form $\varrho = w_b \varrho^b + w_m \varrho^m$ with $\varrho^b, \varrho^m\in\CP(\R^d)$ and $w_b, w_m \geq 0$ such that \mbox{$w_b + w_m = 1$}.
    Let $L \in \CC(\bbR^d)$ and $G \in \CC(\bbR^d)$ satisfy Assumptions~\ref{asm:thetaG}--\ref{asm:growthG}.
	Moreover, let $\rho^b_0 \in \CP_4(\bbR^d)$ be such that $\thetaG\in\supp{\rho^b_0}$.
    Fix any $\varepsilon \in (0,W_2^2(\rho^b_0,\delta_{\thetaG})/2)$ and $\vartheta \in (0,1)$,
    choose parameters $\lambda,\sigma > 0$ with $2\lambda > d\sigma^2 $, and
    define the time horizon
	\begin{equation}
    \label{eq:end_time_star_statement}
		T^* := \frac{1}{(1-\vartheta)\big(2\lambda-d\sigma^2\big)}\log\left({W_2^2(\rho^b_0,\delta_{\thetaG})/(2\varepsilon)}\right).
	\end{equation}
	Then,
	for $\delta_q>0$ in \eqref{eq:Q_beta} sufficiently small,
	there exist $\alpha_0 > 0$ and $\beta_0 > 0$, 
    depending (among problem-dependent quantities) on $R$, $\delta_q$, $\varepsilon$, $\vartheta$, and in particluar on $w_b$ and $w_m$,
    such that for all $\alpha > \alpha_0$ and $\beta < \beta_0$,
    if $\rho^m \in \CC([0,T^*], \CP_4(\bbR^d))$ and if $\rho^b \in \CC([0,T^*], \CP_4(\bbR^d))$ is a weak solution to the Fokker-Planck Equation~\eqref{eq:fokker_planck_b} on the time interval $[0,T^*]$ with initial condition $\rho^b_0$, and if the mapping $t \mapsto \mAlphaBeta{\rho_t}$ is continuous for $t\in[0,T^*]$,
    it holds 
    \begin{equation}
        \label{eq:end_time_statement}
        W_2^2(\rho^b_T,\delta_{\thetaG})/2 = \varepsilon
        \quad\text{ with }\quad
        T\in\left[\frac{1-\vartheta}{(1+\vartheta/2)}\;\!T^*,T^*\right].
    \end{equation}
	Furthermore, $W_2^2(\rho^b_t,\delta_{\thetaG})\leq W_2^2(\rho^b_0,\delta_{\thetaG}) \exp\big(\!-(1-\vartheta)\big(2\lambda- d\sigma^2\big)t\big)$ for all $t \in [0,T]$.
\end{theorem}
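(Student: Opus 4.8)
The plan is to follow the Lyapunov-functional strategy underlying the global convergence analyses of \cite{fornasier2021consensus,trillos2024CB2O}, adapted so that the malicious mass entering through $\rho_t = w_b\rho^b_t + w_m\rho^m_t$ is controlled rather than assumed to concentrate. Set $V(t) := \tfrac12 W_2^2(\rho^b_t,\delta_{\thetaG}) = \tfrac12\,\E\N{\bar\theta^b_t - \thetaG}_2^2$. Since $D(\dummy) = \N{\dummy}_2\Id$, so that $\N{D(z)}_F^2 = d\N{z}_2^2$, I would apply Itô's formula to \eqref{eq:dyn_macro_b} — justified by $\rho^b\in\CC([0,T^*],\CP_4(\bbR^d))$ — and take expectations to obtain the exact identity
\[
\frac{d}{dt}V(t) = -(2\lambda - d\sigma^2)\,V(t) + (\lambda - d\sigma^2)\big\langle \E[\bar\theta^b_t - \thetaG],\, \mAlphaBeta{\rho_t} - \thetaG\big\rangle + \tfrac{d\sigma^2}{2}\N{\mAlphaBeta{\rho_t} - \thetaG}_2^2 .
\]
Writing $\mu_t := \mAlphaBeta{\rho_t} - \thetaG$ and using $\N{\E[\bar\theta^b_t - \thetaG]}_2 \leq \sqrt{2V(t)}$, both error terms are, by Cauchy--Schwarz and Young's inequality, controlled by $\N{\mu_t}_2$. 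The whole argument therefore reduces to forcing the consensus deviation $\N{\mu_t}_2$ to be an arbitrarily small multiple of $\sqrt{V(t)}$ through the choices $\alpha$ large and $\beta$ small.

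The heart of the proof — and the \emph{main obstacle} — is this quantitative control of $\mAlphaBeta{\rho_t}$ against adversarially placed mass, which I would obtain from the two filtering mechanisms built into \eqref{eq:consensus_point}. First (the role of Section~\ref{sec:rQQLP}), I would show that the sublevel-set filter $\Qbeta{\rho_t}$ of \eqref{eq:Q_beta} confines the effective measure $\Ibeta{\rho_t} = \mathbbm{1}_{\Qbeta{\rho_t}}\rho_t$ to a small neighborhood $\CN_r(\Theta)$: by Assumption~\ref{asm:icpL}, points with $L(\theta) - \minL > L_\infty$ lie outside $\CN_{R_L}(\Theta)$ and are discarded once the quantile proxy $\tfrac{2}{\beta}\int_{\beta/2}^{\beta}\qa{\rho_t}\,da + \delta_q$ is close to $\minL$. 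This closeness holds for $\beta$ small, and the admissible threshold on $\beta$ depends on $w_b$, since the benign mass near $\thetaG$ must dominate the lowest $L$-quantiles of $\rho_t$; this is precisely the origin of the $w_b,w_m$-dependence of $\beta_0$. Far-field attacks are thereby neutralized irrespective of their behavior. Second (the role of Section~\ref{sec:attack_control}), inside $\CN_r(\Theta)$ I would invoke Assumptions~\ref{asm:LipG}--\ref{asm:growthG}: the weight $\omegaa = \exp(-\alpha G)$ concentrates the average near $\thetaG$, while \ref{asm:growthG} ensures that any malicious mass lying near $\Theta$ but away from $\thetaG$ carries a strictly larger value of $G$ and is exponentially suppressed as $\alpha\to\infty$. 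A quantitative Laplace principle then yields a bound $\N{\mu_t}_2 \leq \Phi(\alpha,\beta)$ with $\Phi(\alpha,\beta)\to 0$ as $\alpha\to\infty$ and $\beta\to0$, provided $\rho_t$ retains enough mass near $\thetaG$; the required uniform lower bound on $\rho^b_t(B_{r'}(\thetaG))$, for some fixed $r'>0$, over $[0,T^*]$ follows from $\thetaG\in\supp{\rho^b_0}$ together with a comparison argument for the Fokker--Planck equation \eqref{eq:fokker_planck_b}, as in \cite{fornasier2021consensus}.

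With the consensus estimate in hand, I would close the argument by a stopping-time/Grönwall scheme. Define $T := \inf\{t\geq 0 : V(t)\leq\varepsilon\}\wedge T^*$. On $[0,T)$ one has $V(t) > \varepsilon$, so $\N{\mu_t}_2 \leq \Phi(\alpha,\beta)$ is small relative to $\sqrt{V(t)} > \sqrt\varepsilon$; choosing $\alpha_0,\beta_0$ (depending on $\varepsilon,\vartheta,R,\delta_q,w_b,w_m$) so that the Young-absorbed error terms are dominated by $\tfrac{\vartheta}{2}(2\lambda - d\sigma^2)V(t)$ in both directions yields the two-sided inequality
\[
-(1+\vartheta/2)(2\lambda - d\sigma^2)\,V(t) \;\leq\; \frac{d}{dt}V(t) \;\leq\; -(1-\vartheta)(2\lambda - d\sigma^2)\,V(t) \qquad\text{on } [0,T].
\]
Grönwall applied to the upper bound gives the decay estimate $W_2^2(\rho^b_t,\delta_{\thetaG}) \leq W_2^2(\rho^b_0,\delta_{\thetaG})\exp(-(1-\vartheta)(2\lambda - d\sigma^2)t)$; evaluating at $T^*$ and using the definition \eqref{eq:end_time_star_statement} shows $V(T^*)\leq\varepsilon$, hence $T\leq T^*$, while continuity of $t\mapsto V(t)$ and $V(0)>\varepsilon$ give $V(T)=\varepsilon$. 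Integrating the lower bound gives $\varepsilon = V(T) \geq V(0)\exp(-(1+\vartheta/2)(2\lambda - d\sigma^2)T)$, and substituting $\log(V(0)/\varepsilon) = (1-\vartheta)(2\lambda - d\sigma^2)T^*$ produces $T\geq \frac{1-\vartheta}{1+\vartheta/2}T^*$, which is exactly \eqref{eq:end_time_statement}.

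Beyond the consensus control, the remaining subtlety is the self-referential nature of the estimate: the Laplace bound needs mass of $\rho^b_t$ near $\thetaG$, which is only guaranteed while $V(t)$ stays controlled, and conversely the decay of $V$ relies on that very mass bound. Defining $T$ as above and running both arguments simultaneously on $[0,T]$ breaks this circularity, exactly as in the attack-free analyses; the genuine novelty here lies in quantifying every threshold in terms of the malicious proportion $w_m$ and in showing that the combined $L$- and $G$-filters render the arbitrary malicious drift $a_t$ and diffusion $A_t$ harmless.
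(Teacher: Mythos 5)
Your proposal follows essentially the same route as the paper's proof: the same Lyapunov functional $\CV(\rho^b_t)=\frac{1}{2}W_2^2(\rho^b_t,\delta_{\thetaG})$, the same two-sided differential inequalities closed by Gr\"onwall (your exact It\^o identity is a slightly sharper form of the bounds the paper imports from \cite[Lemma~4.1]{trillos2024CB2O}, and is correct since $\Nnormal{D(z)}_F^2=d\Nnormal{z}_2^2$), the same two-filter control of the consensus point --- the quantile filter with $\beta_0\propto w_b$, matching \eqref{eq:beta} and the inequality $w_b\rho^b_t(B)\leq\rho_t(B)$, together with a Laplace principle in which the growth condition~\ref{asm:growthG} exponentially suppresses malicious mass inside $\Qbeta{\rho_t}$, producing exactly the $\max\{0,\log(\tfrac{w_m}{w_b}R_G^K/\sqrt{\varepsilon})\}$ correction in \eqref{eq:alpha} --- and the same endpoint analysis yielding the window $\big[\tfrac{1-\vartheta}{1+\vartheta/2}T^*,T^*\big]$.

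The one concrete repair concerns your stopping time: $T:=\inf\{t : V(t)\leq\varepsilon\}\wedge T^*$ does not by itself break the circularity you correctly identify in your final paragraph. The mass lower bound on $\rho^b_t(B_{r'}(\thetaG))$ (via \cite[Proposition~4.4]{trillos2024CB2O}, or the comparison argument of \cite{fornasier2021consensus}) is only available while $\sup_{s\leq t}\Nnormal{\mAlphaBeta{\rho_s}-\thetaG}_2\leq B$ holds, and that very bound is what the Laplace estimate is supposed to deliver; with only the $V$-clause in the definition of $T$, neither estimate can be invoked on $[0,T)$, so ``running both arguments simultaneously'' has no formal content. The paper's $T_{\alpha,\beta}$ in \eqref{eq:endtime_T} therefore includes the second clause $\Nnormal{\mAlphaBeta{\rho_{t'}}-\thetaG}_2<C(t')$ in the supremum, and the bootstrap is completed by two additional ingredients your outline omits: an initial-time verification $\Nnormal{\mAlphaBeta{\rho_0}-\thetaG}_2<C(0)$ (estimate \eqref{eq:proof:lapl1}) guaranteeing $T_{\alpha,\beta}>0$, and the three-case analysis at $T_{\alpha,\beta}$, where in the case $\CV(\rho^b_{T_{\alpha,\beta}})>\varepsilon$ one re-derives the \emph{strict} inequality $\Nnormal{\mAlphaBeta{\rho_{T_{\alpha,\beta}}}-\thetaG}_2<C(T_{\alpha,\beta})$ (estimates \eqref{eq:proof:lapl2} and \eqref{eq:proof:lapl22}, exploiting $\varepsilon<\CV(\rho^b_{T_{\alpha,\beta}})$ and the $\tfrac{2w_m}{w_b}R_G^K$ term absorbed into the choice of $\alpha_0$) and thereby contradicts the definition of the stopping time. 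Folding the consensus clause into your stopping time and adding these two checks turns your outline into precisely the paper's argument; everything else in your proposal, including the $w_b$- and $w_m$-dependence of $\beta_0$ and $\alpha_0$, matches.
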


In the context of DFL,
this result can be understood as follows.
Despite the presence of malicious agents who attack and seek to interfere with the training dynamics,
the benign agents, provided that the number of agents is sufficiently large to make the conducted mean-field analysis descriptive,
still converge to the target model $\thetaG$ that minimizes the loss function $L$ while enjoying the robustness properties encoded through $G$. In order to achieve this, the CB\textsuperscript{2}O protocol merely needs to adjust the choice of the hyperparameters~$\beta$ and $\alpha$ to accommodate for the presence of malicious agents.
As described in detail in \eqref{eq:beta} and \eqref{eq:alpha}, the parameter $\beta$ needs to be readjusted to scale proportionally to the fraction of benign agents, i.e., $\beta\propto w_b$, and the parameter $\alpha$ needs to be increased by an amount of $\max\big\{0,\log\big(\frac{w_m}{w_b}R_G^K/\sqrt{\varepsilon}\big)\big\}$.
These adjustments of $\beta$ and $\alpha$ are reasonable, allowing in particular to recover the attack-free scenario as $w_m\rightarrow0$ and $w_b\rightarrow1$.

\revisedOne{For a comprehensive discussion of further technical aspects and facets of Theorem~\ref{thm:main}, we refer readers to the detailed comments after \cite[Theorem~2.7]{trillos2024CB2O}.}


Let us now present the proof for the mean-field global convergence result of CB\textsuperscript{2}O in the presence of attacks, Theorem~\ref{thm:main}, together with some auxiliary results.

\subsection{Robust Quantitative Quantiled Laplace Principle}
\label{sec:rQQLP}

We first provide an extension of the quantitative quantiled Laplace principle~\cite[Proposition~4.2]{trillos2024CB2O}, which takes into consideration the presence of attacks and carefully distills their influence.

\begin{proposition}[Robust quantitative quantiled Laplace principle]
	\label{prop:QQLP}
    Let $\varrho\in\CP(\bbR^d)$ be of the form $\varrho = w_b \varrho^b + w_m \varrho^m$ with $\varrho^b, \varrho^m\in\CP(\R^d)$ and $w_b, w_m \geq 0$ such that \mbox{$w_b + w_m = 1$}.
    Fix $\alpha>0$,
    let $r_G\in(0,\min\{R_G,R^H_{G},R_G^K,(\min\{G_{\infty},(\eta_GR_G^K)^{1/\nu_G}\}/(2H_G))^{1/h_G}\}]$ and $\delta_q\in(0,\min\{L_\infty,(\eta_Lr_G)^{1/\nu_L}\}/2]$.
    For any $r>0$ define $G_r:=\sup_{\theta\in B_{r}(\thetaG)}G(\theta)-G(\thetaG)$.
    Then, under
    the inverse continuity property~\ref{asm:icpL} on $L$,
    the H\"older continuity assumption~\ref{asm:LipL} on $L$,
    the inverse continuity property~\ref{asm:icpG} on $G$, the H\"older continuity assumption~\ref{asm:LipG} on $G$ and the growth condition~\ref{asm:growthG} on $G$,
    and provided that there exists
    \begin{equation}
        \label{eq:beta_QQLP}
    	\beta\in(0,1)
    	\quad\text{satisfying}\quad
    	q^L_{\beta}[\varrho]+\delta_q\leq\underbar{L}+\min\{L_\infty,(\eta_Lr_G)^{1/\nu_L}\},
    \end{equation}
    for any $r\in(0,\min\{r_R,r_G,R^H_{L},(\delta_q/H_L)^{1/h_L}\}]$ and for $u > 0$ such that 
    \[u + G_{r} + H_Gr_G^{h_G} \leq \min\{G_{\infty},(\eta_GR_G^K)^{1/\nu_G}\},\]
    we have  
    \begin{equation}
        \label{eq:lem:laplace_LG}
    \begin{split}
        &\N{\mAlphaBeta{\varrho} - \thetaG}_2
        \leq
        \frac{(u+G_r + H_Gr_G^{h_G})^{\nu_G}}{\eta_G} + \frac{\exp\left(-\alpha u\right)}{\varrho^b\big(B_r(\thetaG)\big)}\int_{\Qbeta{\varrho}}\!\Nbig{\theta\!-\!\thetaG}_2 d\varrho^b(\theta)\\
        &\;\;\;\;\,+\!\frac{w_m\exp\left(-\alpha u\right)}{w_b\varrho^b\big(B_r(\thetaG)\big)}\int_{\Qbeta{\varrho}\cap B_{\!R_G^K}(\thetaG)}\!\Nbig{\theta\!-\!\thetaG}_2 d\varrho^m(\theta)\\
        &\;\;\;\;\,+\!\frac{w_m\exp(\alpha G_r)}{w_b\varrho^b\big(B_r(\thetaG)\big)}\int_{\Qbeta{\varrho}\cap\big(B_{\!R_G^K}(\thetaG)\big)^{\!c}} \!\Nbig{\theta\!-\!\thetaG}_2 \exp\!\left(\!-\alpha K_G \N{\theta\!-\!\thetaG}_2^{k_G}\!\right) d\varrho^m(\theta).\!\!\!\!\!
    \end{split}
    \end{equation}
    We furthermore have $B_r(\thetaG)\subset\Qbeta{\varrho}\subset\CN_{r_G}(\Theta)$.
\end{proposition}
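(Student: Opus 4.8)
The plan is to follow the quantitative Laplace-principle template of \cite{trillos2024CB2O}, isolating at each step the additional contribution of the malicious measure $\varrho^m$. I would begin by proving the two inclusions $B_r(\thetaG)\subset\Qbeta{\varrho}\subset\CN_{r_G}(\Theta)$, since both are used repeatedly below and constitute the final claim. For the left inclusion, take $\theta\in B_r(\thetaG)$: as $r\le r_R$ the ball lies in $B_R(0)$, and the H\"older bound~\ref{asm:LipL} together with $r\le(\delta_q/H_L)^{1/h_L}$ gives $L(\theta)\le\underbar{L}+\delta_q$; because every quantile satisfies $\qa{\varrho}\ge\underbar{L}$, the threshold $\frac{2}{\beta}\int_{\beta/2}^{\beta}\qa{\varrho}\,da+\delta_q$ defining $\Qbeta{\varrho}$ is at least $\underbar{L}+\delta_q$, whence $\theta\in\Qbeta{\varrho}$. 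For the right inclusion, monotonicity of quantiles gives $\frac{2}{\beta}\int_{\beta/2}^{\beta}\qa{\varrho}\,da\le\qbeta{\varrho}$, so any $\theta\in\Qbeta{\varrho}$ obeys $L(\theta)-\underbar{L}\le\qbeta{\varrho}+\delta_q-\underbar{L}\le\min\{L_\infty,(\eta_Lr_G)^{1/\nu_L}\}$ by~\eqref{eq:beta_QQLP}; then~\eqref{eq:icpL_2} forces $\theta\in\CN_{R_L}(\Theta)$ and~\eqref{eq:icpL_1} yields $\dist(\theta,\Theta)\le\frac{1}{\eta_L}(\eta_Lr_G)=r_G$.

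Next I would rewrite the consensus-point error as the weighted average $\mAlphaBeta{\varrho}-\thetaG=\big(\int_{\Qbeta{\varrho}}(\theta-\thetaG)\,\omegaa(\theta)\,d\varrho\big)\big/\big(\int_{\Qbeta{\varrho}}\omegaa(\theta)\,d\varrho\big)$ and bound its norm by the corresponding ratio of $\int\N{\theta-\thetaG}_2\,\omegaa\,d\varrho$ over $\int\omegaa\,d\varrho$. The denominator I would lower bound using \emph{only} benign mass: since $\varrho\ge w_b\varrho^b$, $B_r(\thetaG)\subset\Qbeta{\varrho}$, and $\sup_{B_r(\thetaG)}G\le G(\thetaG)+G_r$, one obtains $\int_{\Qbeta{\varrho}}\omegaa\,d\varrho\ge w_b\exp(-\alpha(G(\thetaG)+G_r))\,\varrho^b(B_r(\thetaG))$. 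Restricting to $\varrho^b$ here is essential, as malicious agents cannot be relied upon to populate the good region; it is also the source of the $\varrho^b(B_r(\thetaG))$ denominators and the $w_m/w_b$ prefactors in the final bound.

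I would then split the numerator domain into a near region $S:=\{\theta\in\Qbeta{\varrho}:G(\theta)-G(\thetaGtilde)\le u+G_r+H_Gr_G^{h_G}\}$ and its complement. On $S$, the hypothesis $u+G_r+H_Gr_G^{h_G}\le\min\{G_\infty,(\eta_GR_G^K)^{1/\nu_G}\}$ combined with~\eqref{eq:icpG_2} forces $\theta\in B_{r_G}(\thetaG)$, so the inverse-continuity property~\eqref{eq:icpG_1} gives $\N{\theta-\thetaGtilde}_2\le(u+G_r+H_Gr_G^{h_G})^{\nu_G}/\eta_G$; since the consensus point concentrates near the auxiliary $G$-minimizer $\thetaGtilde\in B_{r_G}(\thetaG)$, this governs the near contribution and yields the first term exactly as in the attack-free \cite[Proposition~4.2]{trillos2024CB2O}. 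On the complement, $G(\theta)>G(\thetaGtilde)+u+G_r+H_Gr_G^{h_G}\ge G(\thetaG)+u+G_r$ (the H\"older slack~\ref{asm:LipG}, with $\thetaGtilde\in B_{r_G}(\thetaG)$, bridging $G(\thetaGtilde)$ and $G(\thetaG)$), so $\omegaa(\theta)/\exp(-\alpha(G(\thetaG)+G_r))\le\exp(-\alpha u)$; dividing by the denominator and decomposing $\varrho=w_b\varrho^b+w_m\varrho^m$, the benign part (enlarged back to all of $\Qbeta{\varrho}$) produces the second term.

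The main obstacle is the malicious far-field contribution, and this is exactly where the new growth assumption~\ref{asm:growthG} enters. I would split the malicious far integral by $B_{R_G^K}(\thetaG)$. Inside, the $\exp(-\alpha u)$ estimate from the complement of $S$ applies, giving the third term after inserting $w_m/w_b$ and enlarging to $\Qbeta{\varrho}\cap B_{R_G^K}(\thetaG)$. Outside $B_{R_G^K}(\thetaG)$, the $\exp(-\alpha u)$ bound is useless, since malicious agents may place arbitrarily large mass at points far from $\thetaG$ that still lie in $\Qbeta{\varrho}$ (i.e.\ with small $L$). Here I would use that $\Qbeta{\varrho}\subset\CN_{r_G}(\Theta)\subset\CN_{R_G}(\Theta)$, so~\ref{asm:growthG} gives $G(\theta)-G(\thetaG)\ge K_G\N{\theta-\thetaG}_2^{k_G}$ and hence $\omegaa(\theta)\le\exp(-\alpha G(\thetaG))\exp(-\alpha K_G\N{\theta-\thetaG}_2^{k_G})$; dividing by the denominator produces the factor $\exp(\alpha G_r)$ and, crucially, retains the decaying weight $\exp(-\alpha K_G\N{\theta-\thetaG}_2^{k_G})$ \emph{inside} the integral, giving the fourth term. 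Keeping this weight rather than bounding it by a constant is the delicate point: it is precisely what allows the subsequent analysis in Section~\ref{sec:attack_control} to absorb the malicious far field using the finite fourth moment of $\varrho^m$ and a sufficiently large $\alpha$. The inclusions proved at the outset then furnish the concluding statement $B_r(\thetaG)\subset\Qbeta{\varrho}\subset\CN_{r_G}(\Theta)$.
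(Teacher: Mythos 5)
Your proposal follows the paper's proof in all structural respects: the same preliminary inclusions $B_r(\thetaG)\subset\Qbeta{\varrho}\subset\CN_{r_G}(\Theta)$, the same lower bound on the normalizing constant using only benign mass on $B_r(\thetaG)$ (correctly noting $\sup_{B_r(\thetaG)}G=G(\thetaG)+G_r$), the same splitting of the malicious contribution by $B_{R_G^K}(\thetaG)$, and the same use of the growth condition~\ref{asm:growthG}, retaining the weight $\exp(-\alpha K_G\N{\theta-\thetaG}_2^{k_G})$ inside the final integral. The one substantive soft spot is your near-region step. You split by the sublevel set $S=\{\theta\in\Qbeta{\varrho}:G(\theta)-G(\thetaGtilde)\le u+G_r+H_Gr_G^{h_G}\}$ and, via \eqref{eq:icpG_1}, control $\N{\theta-\thetaGtilde}_2$ on $S$; but the quantity to be bounded is $\N{\mAlphaBeta{\varrho}-\thetaG}_2$, so the near contribution involves $\N{\theta-\thetaG}_2$, and the conversion costs $\N{\thetaGtilde-\thetaG}_2$, which Assumption~\ref{asm:icpG} only controls by $r_G$ (or, applying \eqref{eq:icpG_1} at $\theta=\thetaG$ together with \ref{asm:LipG}, by $(H_Gr_G^{h_G})^{\nu_G}/\eta_G$). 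As written, your near term is therefore bounded by $(u+G_r+H_Gr_G^{h_G})^{\nu_G}/\eta_G$ \emph{plus} this extra additive quantity, which is strictly weaker than the first term of \eqref{eq:lem:laplace_LG}; the phrase ``the consensus point concentrates near the auxiliary $G$-minimizer'' does not eliminate it, and for general $\nu_G$ the surplus cannot be absorbed into the stated constant. The paper avoids this issue by decomposing instead with the ball $B_{\widetilde{r}}(\thetaG)$ centered at the target point, with radius $\widetilde{r}=(u+\widetilde{G}_r)^{\nu_G}/\eta_G\le(u+G_r+H_Gr_G^{h_G})^{\nu_G}/\eta_G$ where $\widetilde{G}_r:=\sup_{\theta\in B_r(\thetaG)}G(\theta)-G(\thetaGtilde)$, so that the near term is bounded by the radius directly; the complementary estimate $\inf_{\theta\in\Qbeta{\varrho}\cap(B_{\widetilde{r}}(\thetaG))^c}G(\theta)-G(\thetaGtilde)-\widetilde{G}_r\ge u$ is the step inherited from \cite{trillos2024CB2O}. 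Your argument is repairable by adopting that decomposition, but your sublevel split as described does not establish the inequality with the stated first term.

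Two smaller points. First, your third term must land on the domain $\Qbeta{\varrho}\cap B_{R_G^K}(\thetaG)$; this works because $r_G\le R_G^K$ gives $S\subset B_{r_G}(\thetaG)\subset B_{R_G^K}(\thetaG)$, and the hypothesis $u+G_r+H_Gr_G^{h_G}\le(\eta_GR_G^K)^{1/\nu_G}$ is exactly what the paper invokes to verify $\widetilde{r}^m\le R_G^K$ — this deserves an explicit sentence rather than being left implicit. Second, your closing remark that Section~\ref{sec:attack_control} absorbs the far field ``using the finite fourth moment of $\varrho^m$'' misdescribes Proposition~\ref{prop:attack}: the bound there is uniform over all $\varrho^m\in\CP(\bbR^d)$ and rests solely on the monotonicity of $x\mapsto x\exp(-\alpha K_Gx^{k_G})$ for $x\ge R_G^K$ once $\alpha\ge1/\big(k_GK_G(R_G^K)^{k_G}\big)$; no moment condition on $\varrho^m$ enters. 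Your instinct to keep the decaying weight inside the integral (rather than bounding it by a constant) is nonetheless exactly right and matches the paper's design.
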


Let us first point out that the bound~\eqref{eq:lem:laplace_LG} in Proposition~\ref{prop:QQLP} reduces in the attack-free case, i.e., when $w_m=0$ (and thus $w_b=1$ as well as $\varrho=\varrho^b$), to \cite[Proposition~4.2]{trillos2024CB2O}.

\begin{remark}
    By distilling the influence of the malicious agents when proving the robust quantitative quantiled Laplace principle in Proposition~\ref{prop:QQLP},
    the result provides insights into how a reasonable attack to CB\textsuperscript{2}O needs to be designed to have any potential effect.
    The aim of an attack~$\varrho^m$ is to maximize the last two terms in \eqref{eq:lem:laplace_LG}.
    First and naturally, the bigger the portion~$w_m$ of the malicious agent density, the bigger is the potential of the attack.
    This is, however, provided that the attack is sophisticated, i.e., does not perform poorly w.r.t.\@ the lower-level objective function~$L$, given that the part of $\varrho^m$ that is supported outside of $\Qbeta{\varrho}$ has no influence on the magnitude of the aforementioned terms.
    Within the set $\Qbeta{\varrho}$ the adversaries' aim must be to suggest model parameters as distant as possible from $\thetaG$ but not farther than $R_G^K$ due to the growth of the upper-level objective function~$G$ outside of $B_{\!R_G^K}(\thetaG)$ that we imposed in Assumption~\ref{asm:growthG}.
    
    The label-flipping attack described in Section~\ref{subsec:LF_attack}, for instance, accomplishes precisely these goals.
    
    Any such attack, however, can only be of moderate impact, as we establish in Proposition~\ref{prop:attack} in Section~\ref{sec:attack_control} by providing upper bounds on the last two terms in \eqref{eq:lem:laplace_LG}, which eventually permits to counteract the attack with updated choices of the hyperparameters~$\beta$ and~$\alpha$.
\end{remark}

\begin{proof}[Proof of Proposition~\ref{prop:QQLP}]
    The proof follows the lines of \cite[Proposition~4.2]{trillos2024CB2O} while taking into account the presence of a malicious agents density~$\varrho^m$, distilling its influence, and exploiting the additional assumption~\ref{asm:growthG}.
    To keep the presentation below concise,
    we focus on those parts of the proof that differ from the one presented in \cite[Section~4.3]{trillos2024CB2O}. 
    We thus recommend the reader to follow in parallel the proof of \cite[Proposition~4.2]{trillos2024CB2O} to which we refer in several instances.

    \textbf{Preliminaries.}
    Following \cite{trillos2024CB2O}, we begin by establishing that $\qbetahalf{\varrho}\leq \frac{2}{\beta}\int_{\beta/2}^{\beta} \qa{\varrho}\,da\leq \qbeta{\varrho}$,
    as well as $B_r(\thetaG)\subset\Qbeta{\varrho}\subset\CN_{r_G}(\Theta)$ given that $L$ satisfies the H\"older continuity condition~\ref{asm:LipL} and the inverse continuity property~\ref{asm:icpL}, and in particular \eqref{eq:beta_QQLP}.
    With $r_G\leq R_G$, the inverse continuity property~\ref{asm:icpG} on $G$ holds in the set $\Qbeta{\varrho}$, in which $\mAlphaBetanoarg$ is computed.

    \textbf{Main proof.}
    In order to control the term $\Nbig{\mAlphaBeta{\varrho} - \thetaG}_2$,
    let us first recall that the measure $\varrho$ is of the form $\varrho = w_b \varrho^b + w_m \varrho^m$,
    which allows us to split the error using the definition of the consensus point $\mAlphaBeta{\varrho} = \int \theta \omegaa(\theta)/\Nnormal{\omegaa}_{L^1(\Ibeta{\varrho})}\,d\Ibeta{\varrho}(\theta)$ and Jensen's inequality:
    \begin{equation} \label{proof:Laplace_principle:eq:main_decomposition}
	\begin{split}
	    &\Nbig{\mAlphaBeta{\varrho} - \thetaG}_2
		\leq \int \Nbig{\theta-\thetaG}_2 \frac{\omegaa(\theta)}{\N{\omegaa}_{L^1(\Ibeta{\varrho})}} d\Ibeta{\varrho}(\theta)\\
        &\quad= w_b\int_{\Qbeta{\varrho}} \!\Nbig{\theta\!-\!\thetaG}_2 \frac{\omegaa(\theta)}{\N{\omegaa}_{L^1(\Ibeta{\varrho})}} d\varrho^b(\theta)\!+\!w_m\int_{\Qbeta{\varrho}} \!\Nbig{\theta\!-\!\thetaG}_2 \frac{\omegaa(\theta)}{\N{\omegaa}_{L^1(\Ibeta{\varrho})}} d\varrho^m(\theta).
	\end{split}
	\end{equation}
    \textbf{Contribution of benign agent density~$\varrho^b$ in \eqref{proof:Laplace_principle:eq:main_decomposition}.}
    Let us start with the first term in the last line of \eqref{proof:Laplace_principle:eq:main_decomposition}, for which we follow the steps taken in \cite{trillos2024CB2O}.
    Let $\widetilde{r}^b \geq r > 0$ and recall that $r\in(0,\min\{r_R,r_G,R^H_{L},(\delta_q/H_L)^{1/h_L}\}]$ by assumption.
	We can decompose
    \begin{equation}
        \label{proof:Laplace_principle:eq:decomposition_benign}
    \begin{split}
        &\int_{\Qbeta{\varrho}} \Nbig{\theta-\thetaG}_2 \frac{\omegaa(\theta)}{\N{\omegaa}_{L^1(\Ibeta{\varrho})}} d\varrho^b(\theta)\\
        &\qquad\qquad\qquad\quad\leq \int_{\Qbeta{\varrho}\cap B_{\widetilde{r}^b}(\thetaG)} \Nbig{\theta-\thetaG}_2 \frac{\omegaa(\theta)}{\N{\omegaa}_{L^1(\Ibeta{\varrho})}} d\varrho^b(\theta) \\
        &\qquad\qquad\qquad\quad\quad\,+ \int_{\Qbeta{\varrho}\cap \big(B_{\widetilde{r}^b}(\thetaG)\big)^{c}} \Nbig{\theta-\thetaG}_2 \frac{\omegaa(\theta)}{\N{\omegaa}_{L^1(\Ibeta{\varrho})}} d\varrho^b(\theta).
    \end{split}
    \end{equation}
    The first term in \eqref{proof:Laplace_principle:eq:decomposition_benign} is bounded by $\widetilde{r}^b$.
    Recalling the definition of $G_r$ and with the notation $\widetilde{G}_r:=\sup_{\theta\in B_{r}(\thetaG)}G(\theta)-G(\thetaGtilde)$, 
    we choose $\widetilde{r}^b = (u+\widetilde{G}_r)^{\nu_G}/{\eta_G}$ as in \cite{trillos2024CB2O}, which is a valid choice as it can be easily checked that $\widetilde{r}^b\geq r$; see \cite{trillos2024CB2O}.
    For the second term in \eqref{proof:Laplace_principle:eq:decomposition_benign}, we recall from \cite{trillos2024CB2O} that $\Nnormal{\omegaa}_{L^1(\Ibeta{\varrho})} \geq \exp\big(-\alpha (\widetilde{G}_r+G(\thetaGtilde))\big)\varrho\big(B_r(\thetaG)\big)$.
    With this we have
    \begin{equation*}
	\begin{split}
        &\int_{\Qbeta{\varrho}\cap \big(B_{\widetilde{r}^b}(\thetaG)\big)^{c}} \Nbig{\theta-\thetaG}_2 \frac{\omegaa(\theta)}{\N{\omegaa}_{L^1(\Ibeta{\varrho})}} d\varrho^b(\theta)\\
		&\quad\leq\int_{\Qbeta{\varrho}\cap \big(B_{\widetilde{r}^b}(\thetaG)\big)^{c}} \Nbig{\theta-\thetaG}_2 \frac{\exp\big(-\alpha (G(\theta)-(\widetilde{G}_r+G(\thetaGtilde)))\big)}{\varrho\big(B_r(\thetaG)\big)} d\varrho^b(\theta)\\
		&\quad\leq \frac{\exp\left(\!-\alpha \left(\inf_{\theta \in \Qbeta{\varrho}\cap \big(B_{\widetilde{r}^b}(\thetaG)\big)^{c}} \!G(\theta) \!-\! G(\thetaGtilde) \!-\! \widetilde{G}_r\right)\right)}{\varrho\big(B_r(\thetaG)\big)}\int_{\Qbeta{\varrho}}\!\Nbig{\theta\!-\!\thetaG}_2 d\varrho^b(\theta)\\
        &\quad\leq \frac{\exp\left(-\alpha u\right)}{w_b\varrho^b\big(B_r(\thetaG)\big)}\int_{\Qbeta{\varrho}}\Nbig{\theta-\thetaG}_2 d\varrho^b(\theta),
	\end{split}
	\end{equation*}
    where in the last step we first exploited that
    with $\widetilde{r}^b = (u+\widetilde{G}_r)^{\nu_G}/{\eta_G}$
    it holds, under the assumption $u + G_{r} + H_Gr_G^{h_G} \leq G_{\infty}$, that $\inf_{\theta \in (B_{\widetilde{r}^b}(\thetaG))^c\cap\Qbeta{\varrho}} G(\theta) - G(\thetaGtilde) - \widetilde{G}_r \geq u$ thanks to \ref{asm:icpG} as derived in \cite{trillos2024CB2O}.
    Secondly, we used that $\varrho(S)\geq w_b\varrho^b(S)$ for any set $S$.
    Since furthermore $\widetilde{r}^b\leq (u+G_r + H_Gr_G^{h_G})^{\nu_G}/\eta_G$ (see \cite{trillos2024CB2O}), we obtain for the first term in \eqref{proof:Laplace_principle:eq:main_decomposition} the upper bound    
    \begin{equation}
        \label{proof:Laplace_principle:eq:decomposition_benign_complete}
    \begin{split}
        & \int_{\Qbeta{\varrho}} \Nbig{\theta-\thetaG}_2 \frac{\omegaa(\theta)}{\N{\omegaa}_{L^1(\Ibeta{\varrho})}} d\varrho^b(\theta) \\
        &\qquad\qquad\leq \frac{(u+G_r + H_Gr_G^{h_G})^{\nu_G}}{\eta_G} + \frac{\exp\left(-\alpha u\right)}{w_b \varrho^b\big(B_r(\thetaG)\big)}\int_{\Qbeta{\varrho}}\Nbig{\theta-\thetaG}_2 d\varrho^b(\theta).
    \end{split}
    \end{equation}
    \textbf{Contribution of malicious agent density~$\varrho^m$ in \eqref{proof:Laplace_principle:eq:main_decomposition}.}
    Let us now tackle the second term in the last line of \eqref{proof:Laplace_principle:eq:main_decomposition}.
    Let $R_G^K \geq \widetilde{r}^m \geq r > 0$ and recall that $r\in(0,\min\{r_R,r_G,R^H_{L},(\delta_q/H_L)^{1/h_L}\}]$ by assumption.
	We can decompose
    \begin{equation}
        \label{proof:Laplace_principle:eq:decomposition_malicious}
    \begin{split}
        &\int_{\Qbeta{\varrho}} \Nbig{\theta-\thetaG}_2 \frac{\omegaa(\theta)}{\N{\omegaa}_{L^1(\Ibeta{\varrho})}} d\varrho^m(\theta)\\
        &\qquad\qquad\qquad\quad\leq \int_{\Qbeta{\varrho}\cap B_{\widetilde{r}^m}(\thetaG)} \Nbig{\theta-\thetaG}_2 \frac{\omegaa(\theta)}{\N{\omegaa}_{L^1(\Ibeta{\varrho})}} d\varrho^m(\theta) \\
        &\qquad\qquad\qquad\quad\quad\,+ \int_{\Qbeta{\varrho}\cap \big(B_{\widetilde{r}^m}(\thetaG)\big)^{c}} \Nbig{\theta-\thetaG}_2 \frac{\omegaa(\theta)}{\N{\omegaa}_{L^1(\Ibeta{\varrho})}} d\varrho^m(\theta).
    \end{split}
    \end{equation}
    Again, the first term in \eqref{proof:Laplace_principle:eq:decomposition_malicious} is bounded by $\widetilde{r}^m$.
    We can choose $\widetilde{r}^m = (u+\widetilde{G}_r)^{\nu_G}/{\eta_G}$ as before, which fulfills $\widetilde{r}^m\geq r$ as discussed already.
    Moreover, thanks to the assumption $u + G_{r} + H_Gr_G^{h_G} \leq (\eta_GR_G^K)^{1/\nu_G}$ it further holds
    \begin{equation*}
        \widetilde{r}^m
        = \frac{(u+\widetilde{G}_r)^{\nu_G}}{\eta_G}
        = \frac{(u+G_r + (G(\thetaGtilde)\!-\!G(\thetaG)))^{\nu_G}}{\eta_G}
        \leq \frac{(u+G_r + H_Gr_G^{h_G})^{\nu_G}}{\eta_G}
        \leq R_G^K.
    \end{equation*}
    In order to obtain the inequality in the next-to-last step be reminded that $\thetaGtilde\in B_{r_G}(\thetaG)$ and that $r_G\leq R^H_{G}$, allowing us to employ the H\"older continuity~\ref{asm:LipG} of $G$ to derive the upper bound $\absbig{G(\thetaGtilde)-G(\thetaG)}\leq H_G\Nbig{\thetaGtilde-\thetaG}_2^{h_G}\leq H_Gr_G^{h_G}$.
    For the second term in \eqref{proof:Laplace_principle:eq:decomposition_malicious} recall that $\Nnormal{\omegaa}_{L^1(\Ibeta{\varrho})} \geq \exp\big(-\alpha (\widetilde{G}_r+G(\thetaGtilde))\big)\varrho\big(B_r(\thetaG)\big)$ as before.
    With this we have
    \begin{equation}
        \label{proof:Laplace_principle:eq:decomposition_malicious_2}
	\begin{split}
        &\int_{\Qbeta{\varrho}\cap \big(B_{\widetilde{r}^m}(\thetaG)\big)^{c}} \Nbig{\theta-\thetaG}_2 \frac{\omegaa(\theta)}{\N{\omegaa}_{L^1(\Ibeta{\varrho})}} d\varrho^m(\theta)\\
		&\quad\leq\int_{\Qbeta{\varrho}\cap \big(B_{\widetilde{r}^m}(\thetaG)\big)^{c}} \Nbig{\theta-\thetaG}_2 \frac{\exp\big(-\alpha (G(\theta)-(\widetilde{G}_r+G(\thetaGtilde)))\big)}{\varrho\big(B_r(\thetaG)\big)} d\varrho^m(\theta)\\
		&\quad= \int_{\Qbeta{\varrho}\cap \big(B_{\widetilde{r}^m}(\thetaG)\big)^{c}\cap B_{R_G^K}(\thetaG)} \Nbig{\theta-\thetaG}_2 \frac{\exp\big(-\alpha (G(\theta)-(\widetilde{G}_r+G(\thetaGtilde)))\big)}{\varrho\big(B_r(\thetaG)\big)} d\varrho^m(\theta)\\
        &\quad\quad\,+\int_{\Qbeta{\varrho}\cap \big(B_{R_G^K}(\thetaG)\big)^{c}} \Nbig{\theta-\thetaG}_2 \frac{\exp\big(-\alpha (G(\theta)-(\widetilde{G}_r+G(\thetaGtilde)))\big)}{\varrho\big(B_r(\thetaG)\big)} d\varrho^m(\theta),
	\end{split}
	\end{equation}
    where in the last step, compared to the contribution from the benign agent density,
    we split the integral into two parts using the ball $B_{R_G^K}(\thetaG)$. Recall $\widetilde{r}^m\leq R_G^K$.
    For the first term in the last line of \eqref{proof:Laplace_principle:eq:decomposition_malicious_2}, we proceed analogously as before since the choice of $\widetilde{r}^m$ is identical.
    We thus obtain
    \begin{equation*}
	\begin{split}
        &\int_{\Qbeta{\varrho}\cap \big(B_{\widetilde{r}^m}(\thetaG)\big)^{c}\cap B_{R_G^K}(\thetaG)} \Nbig{\theta-\thetaG}_2 \frac{\exp\big(-\alpha (G(\theta)-(\widetilde{G}_r+G(\thetaGtilde)))\big)}{\varrho\big(B_r(\thetaG)\big)} d\varrho^m(\theta)\\
		&\quad\leq \frac{\exp\left(\!-\alpha \left(\inf_{\theta \in \Qbeta{\varrho}\cap \big(B_{\widetilde{r}^m}(\thetaG)\big)^{c}} \!G(\theta) \!-\! G(\thetaGtilde) \!-\! \widetilde{G}_r\right)\right)}{\varrho\big(B_r(\thetaG)\big)}\int_{\Qbeta{\varrho}\cap B_{R_G^K}(\thetaG)}\!\Nbig{\theta\!-\!\thetaG}_2 d\varrho^m(\theta)\\
        &\quad\leq \frac{\exp\left(-\alpha u\right)}{w_b\varrho^b\big(B_r(\thetaG)\big)}\int_{\Qbeta{\varrho}\cap B_{R_G^K}(\thetaG)}\Nbig{\theta-\thetaG}_2 d\varrho^m(\theta).
	\end{split}
	\end{equation*}
    For the second term in the last line of \eqref{proof:Laplace_principle:eq:decomposition_malicious_2}, on the other hand,
    we can compute
    \begin{equation*}
	\begin{split}
        &\int_{\Qbeta{\varrho}\cap \big(B_{R_G^K}(\thetaG)\big)^{c}} \Nbig{\theta-\thetaG}_2 \frac{\exp\big(-\alpha (G(\theta)-(\widetilde{G}_r+G(\thetaGtilde)))\big)}{\varrho\big(B_r(\thetaG)\big)} d\varrho^m(\theta)\\
		&\quad=\frac{\exp(\alpha G_r)}{\varrho\big(B_r(\thetaG)\big)}\int_{\Qbeta{\varrho}\cap\big(B_{R_G^K}(\thetaG) \big)^c} \!\Nbig{\theta\!-\!\thetaG}_2 \exp\big(\!-\!\alpha (G(\theta)\!-\!G(\thetaG))\big) \, d\varrho^m(\theta)\\
        &\quad\leq\frac{\exp(\alpha G_r)}{\varrho\big(B_r(\thetaG)\big)}\int_{\Qbeta{\varrho}\cap\big(B_{R_G^K}(\thetaG) \big)^c} \!\Nbig{\theta\!-\!\thetaG}_2 \exp\left(-\alpha K_G \N{\theta - \thetaG}_2^{k_G}\right) d\varrho^m(\theta)\\
        &\quad\leq\frac{\exp(\alpha G_r)}{w_b\varrho^b\big(B_r(\thetaG)\big)}\int_{\Qbeta{\varrho}\cap\big(B_{R_G^K}(\thetaG) \big)^c} \!\Nbig{\theta\!-\!\thetaG}_2 \exp\left(-\alpha K_G \N{\theta - \thetaG}_2^{k_G}\right) d\varrho^m(\theta),
	\end{split}
	\end{equation*}
    where we employed in the penultimate step that the growth condition~\ref{asm:growthG} on $G$ holds on the set $\big(B_{R_G^K}(\thetaG) \big)^c \cap \CN_{R_G} (\Theta)$ and $\Qbeta{\varrho} \subset \CN_{R_G} (\Theta)$.
    The last step simply uses again that $\varrho(S)\geq w_b\varrho^b(S)$ for any set $S$.
    Since, analogously to the above, we have $\widetilde{r}^m\leq (u+G_r + H_Gr_G^{h_G})^{\nu_G}/\eta_G$, we deduce for the second term in \eqref{proof:Laplace_principle:eq:main_decomposition} the upper bound
    \begin{equation}
        \label{proof:Laplace_principle:eq:decomposition_malicious_complete}
    \begin{split}
        & \int_{\Qbeta{\varrho}} \Nbig{\theta-\thetaG}_2 \frac{\omegaa(\theta)}{\N{\omegaa}_{L^1(\Ibeta{\varrho})}} d\varrho^m(\theta) \\
        &\quad\leq \frac{(u+G_r + H_Gr_G^{h_G})^{\nu_G}}{\eta_G} + \frac{\exp\left(-\alpha u\right)}{w_b\varrho^b\big(B_r(\thetaG)\big)}\int_{\Qbeta{\varrho}\cap B_{R_G^K}(\thetaG)}\Nbig{\theta-\thetaG}_2 d\varrho^m(\theta)\\
        &\quad\quad\,+\frac{\exp(\alpha G_r)}{w_b\varrho^b\big(B_r(\thetaG)\big)}\int_{\Qbeta{\varrho}\cap\big(B_{R_G^K}(\thetaG) \big)^c} \!\Nbig{\theta\!-\!\thetaGtilde}_2 \exp\left(-\alpha K_G \N{\theta - \thetaG}_2^{k_G}\right) d\varrho^m(\theta).
    \end{split}
    \end{equation}
    Collecting the estimates \eqref{proof:Laplace_principle:eq:decomposition_benign_complete} and \eqref{proof:Laplace_principle:eq:decomposition_malicious_complete}, multiplying them respectively by $w_b$ and $w_m$, which satisfy $w_b+w_m=1$, concludes the proof with \eqref{proof:Laplace_principle:eq:main_decomposition}.
\end{proof}

\subsection{Control of Attacks}
\label{sec:attack_control}

It remains to establish bounds on the last two terms appearing in \eqref{eq:lem:laplace_LG}, which are a result of the presence of the malicious agents density. 

\begin{proposition}[Control of attacks]
\label{prop:attack}
    Let $\varrho,\varrho^m\in\CP(\bbR^d)$. 
    Fix $\alpha \geq 1/\big(k_G K_G (R_G^K)^{k_G}\big)$.
    Moreover, let~
    \[r_G\in(0, \min\{R_G,R^H_{G},R_G^K,(\min\{G_{\infty},(\eta_GR_G^K)^{1/\nu_G},K_G (R_G^K)^{k_G}\}/(2H_G))^{1/h_G}\}]\] and 
    \[\delta_q\in(0,\min\{L_\infty,(\eta_Lr_G)^{1/\nu_L}\}/2] , .\]
    For any $r>0$ define $G_r:=\sup_{\theta\in B_{r}(\thetaG)}G(\theta)-G(\thetaG)$.
    Then, under
    the inverse continuity property~\ref{asm:icpL} on $L$,
    the H\"older continuity assumption~\ref{asm:LipL} on $L$
    and the growth condition~\ref{asm:growthG} on $G$,
    and provided that there exists
    \begin{equation}
        \label{eq:beta_QQLP2}
    	\beta\in(0,1)
    	\quad\text{satisfying}\quad
    	q^L_{\beta}[\varrho]+\delta_q\leq\underbar{L}+\min\{L_\infty,(\eta_Lr_G)^{1/\nu_L}\},
    \end{equation}
    for any $r\in(0,\min\{r_R,r_G,R^H_{L},(\delta_q/H_L)^{1/h_L}\}]$ and for $u > 0$ such that 
    \begin{equation*}
        u + G_{r} + H_Gr_G^{h_G} \leq \min\{G_{\infty},(\eta_GR_G^K)^{1/\nu_G},K_G (R_G^K)^{k_G}\},
    \end{equation*}
    we have
    \begin{equation}
    \label{eq:mali_bound_inside_ball}
    \begin{split}
        \sup_{\varrho^m\in\CP(\bbR^d)} \int_{\Qbeta{\varrho}\cap B_{R_G^K}(\thetaG)}\Nbig{\theta-\thetaG}_2 d\varrho^m(\theta)
        \leq R_G^K \sup_{\varrho^m\in\CP(\bbR^d)}  \varrho^m (\CN_{r_G}(\Theta))
        \leq R_G^K
    \end{split}
    \end{equation}
    as well as
    \begin{equation}
    \label{eq:mali_bound_outside_ball}
    \begin{split}
        &\!\!\exp(\alpha G_r)\!\!\sup_{\varrho^m\in\CP(\bbR^d)}\int_{\Qbeta{\varrho}\cap\big(B_{R_G^K}(\thetaG)\big)^{c}} \!\Nbig{\theta\!-\!\thetaG}_2 \exp\left(-\alpha K_G \N{\theta\!-\!\thetaG}_2^{k_G}\right) d\varrho^m(\theta)\!\!\\
        &\qquad\qquad\,\leq R_G^K \exp\left(-\alpha u\right) \sup_{\varrho^m\in\CP(\bbR^d)}\varrho^m (\CN_{r_G}(\Theta))
        \leq R_G^K \exp\left(-\alpha u\right).
    \end{split}
    \end{equation}
\end{proposition}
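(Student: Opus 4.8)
The plan is to reduce both estimates to a single mechanism: a pointwise bound on the integrand that is \emph{uniform in} $\varrho^m$, combined with the containment $\Qbeta{\varrho}\subset\CN_{r_G}(\Theta)$, so that the only surviving dependence on the malicious density is its mass $\varrho^m(\CN_{r_G}(\Theta))\le 1$. First I would invoke the Preliminaries of Proposition~\ref{prop:QQLP}: under the present hypotheses (\ref{asm:LipL}, \ref{asm:icpL}, and \eqref{eq:beta_QQLP2}), the identical argument yields $B_r(\thetaG)\subset\Qbeta{\varrho}\subset\CN_{r_G}(\Theta)$. This containment is what permits replacing $\varrho^m\big(\Qbeta{\varrho}\cap S\big)$ by $\varrho^m(\CN_{r_G}(\Theta))$ for any measurable $S$, and hence taking the supremum over the arbitrary $\varrho^m$ harmlessly.

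For \eqref{eq:mali_bound_inside_ball} the argument is immediate: on $B_{R_G^K}(\thetaG)$ one has $\N{\theta-\thetaG}_2\le R_G^K$, so the integral is at most $R_G^K\,\varrho^m\!\big(\Qbeta{\varrho}\cap B_{R_G^K}(\thetaG)\big)\le R_G^K\,\varrho^m(\CN_{r_G}(\Theta))\le R_G^K$, uniformly in $\varrho^m$.

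The substance lies in \eqref{eq:mali_bound_outside_ball}. Here I would analyze the scalar profile $\phi(s):=s\exp(-\alpha K_G s^{k_G})$ that governs the integrand on $\big(B_{R_G^K}(\thetaG)\big)^c$, where $s=\N{\theta-\thetaG}_2\ge R_G^K$. Differentiating gives $\phi'(s)=\exp(-\alpha K_G s^{k_G})\big(1-\alpha K_G k_G s^{k_G}\big)$, so $\phi$ has a single interior maximum at $s_\ast=(\alpha K_G k_G)^{-1/k_G}$ and is strictly decreasing thereafter. The hypothesis $\alpha\ge 1/\big(k_G K_G (R_G^K)^{k_G}\big)$ is exactly what forces $s_\ast\le R_G^K$, placing the entire integration region on the decreasing branch; consequently $\phi(s)\le\phi(R_G^K)=R_G^K\exp\!\big(-\alpha K_G (R_G^K)^{k_G}\big)$ pointwise there. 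Integrating this uniform bound and using the containment yields $R_G^K\exp\!\big(-\alpha K_G (R_G^K)^{k_G}\big)\,\varrho^m(\CN_{r_G}(\Theta))$. Finally I would absorb the prefactor $\exp(\alpha G_r)$: the constraint $u+G_r+H_Gr_G^{h_G}\le K_G (R_G^K)^{k_G}$, together with $H_Gr_G^{h_G}\ge 0$, gives $G_r-K_G(R_G^K)^{k_G}\le -u$, whence $\exp(\alpha G_r)\exp\!\big(-\alpha K_G (R_G^K)^{k_G}\big)\le\exp(-\alpha u)$, producing the claimed bound $R_G^K\exp(-\alpha u)$.

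I expect the monotonicity step to be the only genuine obstacle, and the crux is recognizing that the stated threshold on $\alpha$ is precisely the condition $s_\ast\le R_G^K$ needed to push the peak of $\phi$ inside the ball $B_{R_G^K}(\thetaG)$; once the integrand is dominated by its boundary value $\phi(R_G^K)$, the supremum over the arbitrary malicious density $\varrho^m$ becomes trivial, since the remaining dependence is only through a probability mass on $\CN_{r_G}(\Theta)$ bounded by $1$. Everything else is bookkeeping with the admissibility constraints on $r_G$, $\delta_q$, $r$, and $u$ that secure the containment $\Qbeta{\varrho}\subset\CN_{r_G}(\Theta)$ and keep the relevant exponents well-defined.
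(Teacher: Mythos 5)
Your proposal is correct and takes essentially the same route as the paper's proof: you establish the containment $\Qbeta{\varrho}\subset\CN_{r_G}(\Theta)$ exactly as in the preliminaries of Proposition~\ref{prop:QQLP}, bound the first integral by $R_G^K\,\varrho^m(\CN_{r_G}(\Theta))\le R_G^K$, and handle the second via the monotonicity of $x\mapsto x\exp\left(-\alpha K_G x^{k_G}\right)$ for $x\ge R_G^K$, which is precisely what the threshold $\alpha\ge 1/\big(k_G K_G (R_G^K)^{k_G}\big)$ secures, before absorbing $\exp(\alpha G_r)$ through $u+G_r\le K_G (R_G^K)^{k_G}$. This matches the paper's argument step for step, so there is nothing to add.
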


\begin{proof}
    \textbf{Preliminaries.}
    To begin with, we notice that the assumptions and in particular \eqref{eq:beta_QQLP2} allow us to show $\Qbeta{\varrho}\subset\CN_{r_G}(\Theta)$ as in the proof of Proposition~\ref{prop:QQLP}.

    \textbf{Term \eqref{eq:mali_bound_inside_ball}.}
    For any $\varrho^m\in\CP(\bbR^d)$ we have
    \begin{equation*}
    \begin{split}
        \int_{\Qbeta{\varrho}\cap B_{R_G^K}(\thetaG)}\Nbig{\theta-\thetaG}_2 d\varrho^m(\theta)
        &\leq \int_{\CN_{r_G} (\Theta)\cap B_{R_G^K}(\thetaG)}\Nbig{\theta-\thetaG}_2 d\varrho^m(\theta)\\
        &\leq \min \left\{R_G^K, \dist(\thetaG, \Theta) + r_G \right\} \varrho^m (\CN_{r_G}(\Theta)) \\
        &\leq R_G^K \varrho^m (\CN_{r_G}(\Theta)) \leq R_G^K.
    \end{split}
    \end{equation*}
    Taking the supremum over the measures $\varrho^m$ yields \eqref{eq:mali_bound_inside_ball}.

    \textbf{Term \eqref{eq:mali_bound_outside_ball}.}
    Observe that thanks to the choice $\alpha \geq 1/\big(k_G K_G (R_G^K)^{k_G}\big)$ the scalar function $f(x):=x\exp\!\big(\!-\!\alpha K_Gx^{k_G}\big)$ is non-increasing for $x\geq R_G^K$.
    In order to verify this,
    compute the derivative $f'(x)=(1-\alpha k_GK_G x^{k_G})\exp\!\big(\!-\!\alpha K_Gx^{k_G}\big)$ and notice that $f'(x)\leq0$ for $x\geq R_G^K$ with the choice of $\alpha$.
    Thus, for all $\theta \in \big(B_{R_G^K}(\thetaG)\big)^c$, it holds
    \begin{equation*}
        \Nbig{\theta-\thetaG}_2 \exp\left(-\alpha K_G \N{\theta-\thetaG}_2^{k_G}\right)
        \leq R_G^K \exp\left(-\alpha K_G (R_G^K)^{k_G}\right),
    \end{equation*}
    which allows us to derive for any $\varrho^m\in\CP(\bbR^d)$ that
    \begin{equation*}
    \begin{split}
        &\int_{\Qbeta{\varrho}\cap\big(B_{R_G^K}(\thetaG)\big)^{c}} \Nbig{\theta-\thetaGtilde}_2 \exp\left(-\alpha K_G \N{\theta-\thetaG}_2^{k_G}\right) d\varrho^m(\theta) \\
        &\qquad\qquad\,\leq R_G^K \exp\left(-\alpha K_G (R_G^K)^{k_G}\right) \varrho^m (\CN_{r_G}(\Theta))\\
        &\qquad\qquad\,\leq R_G^K \exp\left(-\alpha (u+G_r))\right) \varrho^m (\CN_{r_G}(\Theta))
        \leq R_G^K \exp\left(-\alpha (u+G_r))\right),
    \end{split}
    \end{equation*}
    where we used in the second step that by assumption $K_G (R_G^K)^{k_G}\geq u+G_r  + H_Gr_G^{h_G} \geq u+G_r$.
    Taking the supremum over the measures $\varrho^m$ yields \eqref{eq:mali_bound_outside_ball}.
\end{proof}

\subsection{Proof of Theorem~\ref{thm:main}}
\label{sec:proof_thm}

For the sake of convenience,
we introduce the notation 
\begin{equation}
	\label{eq:V}
    \CV(\rho^b_t)
    = \frac{1}{2}W_2^2\big(\rho^b_t,\delta_{\thetaG}\big)
    = \frac{1}{2}\int\N{\theta-\thetaG}_2^2d \rho^b_t(\theta),
\end{equation}
which is the quantity that we will analyze.

\begin{proof}[Proof of Theorem~\ref{thm:main}]
    The proof follows the same lines of \cite[Theorem~2.7]{trillos2024CB2O} for the measure~$\rho^b$ of benign agents while taking into account the presence of malicious agents~$\rho^m$.
    To keep its presentation below concise,
    we focus on those parts of the proof where it differs from the one presented in \cite[Section~4.5]{trillos2024CB2O} for the situation without malicious agents. 
    We thus recommend the reader to follow in parallel the proof of \cite[Theorem~2.7]{trillos2024CB2O} to which we refer in several instances.
    
    Let us start by recalling the definitions of $G_r$ and $c\left(\vartheta,\lambda,\sigma\right)$ from \cite{trillos2024CB2O}, and define, with the shorthand $\widetilde{G}_{\infty}:=\min\{G_{\infty},(\eta_GR_G^K)^{\frac{1}{\nu_G}},K_G (R_G^K)^{k_G}\}$,
    \begin{align*} 
        r_{G,\varepsilon}
        :=\min\left\{\left(\frac{1}{2H_G}\left(\eta_G\frac{c\left(\vartheta,\lambda,\sigma\right)\sqrt{\varepsilon}}{3}\right)^{\frac{1}{\nu_G}}\right)^{\frac{1}{h_G}},R_G,R^H_{G},R_G^K,\left(\frac{\widetilde{G}_{\infty}}{2H_G}\right)^{\frac{1}{h_G}}\right\}
    \end{align*}
    in line with the requirements of Propositions~\ref{prop:QQLP} and \ref{prop:attack}.
    We further emphasize that $\delta_q>0$ is sufficiently small in the sense that $\delta_{q}\leq \frac{1}{2} \min\{L_\infty,(\eta_Lr_{G,\varepsilon})^{1/\nu_L}\}$.

    \noindent
    \textbf{Choice of $\beta$.}
    With this choice, $\xi_{L,\varepsilon} := \min\{L_\infty,(\eta_Lr_{G,\varepsilon})^{1/\nu_L}\}-\delta_{q}$ fulfills $\xi_{L,\varepsilon}>0$.
    Define $r_{H,\varepsilon}:= \min\{R^H_{L}, ({\xi_{L,\varepsilon}}/{H_L})^{1/h_L}\}$, and choose $\beta\in(0,1)$ such that
    \begin{equation} \label{eq:beta}
        \beta <
        \beta_0
        := \frac{1}{2}w_b\rho^b_0\big(B_{r_{H,\varepsilon}/2}(\thetaG)\big) \exp(-p_{H,\varepsilon}T^*),
    \end{equation}
    where $p_{H,\varepsilon}$ is as defined in \cite[Proposition~4.4, Equation~(4.35)]{trillos2024CB2O} with $B=c\sqrt{\CV(\rho^b_0)}$ and with $r=r_{H,\varepsilon}$.
    Such choice of $\beta$ is possible since $\beta_0\in(0,1)$ as discussed in \cite{trillos2024CB2O} and since $w_b>0$.
    For such $\beta$ we yet again have $\qbeta{\rho_t} \leq \underbar{L}+\xi_{L,\varepsilon}$ for all $t\in[0,T^*]$ for the following reason:
    As in \cite{trillos2024CB2O}, $B_{r_{H,\varepsilon}}(\thetaG)\subset \{\theta:L(\theta)-\underbar{L}\leq\xi_{L,\varepsilon}\}$.
    Moreover, by \cite[Proposition~4.4]{trillos2024CB2O} with $r_{H,\varepsilon}$, $p_{H,\varepsilon}$ and $B$ (for $r$, $p$ and $B$) as defined before, it holds for all $t\in[0,T^*]$ that 
    \begin{align}
	\begin{aligned}
        w_b\rho^b_{t}\big(B_{r_{H,\varepsilon}}(\thetaG)\big)
		&\geq
        w_b\left(\int \phi_{r_{H,\varepsilon}}(\theta) \,d\rho^b_0(\theta)\right)\exp(-p_{H,\varepsilon}t) \\
		&\geq
        \frac{1}{2}w_b\,\rho^b_0\big(B_{r_{H,\varepsilon}/2}(\thetaG)\big) \exp(-p_{H,\varepsilon}T^*) 
		> \beta,
	\end{aligned}
	\end{align}
	where the last step is by choice of $\beta$.
    With this, the aforementioned set inclusion, and after recalling that thanks to $\rho_t=w_b\rho^b_t+w_m\rho^m_t$ it holds for any set~$B\subset\bbR^d$ that $w_b\rho^b_t(B)\leq\rho_t(B)$,
    we have for all $t\in[0,T^*]$ that
    \begin{equation}
        \beta
        <
        w_b\rho^b_{t}\big(B_{r_{H,\varepsilon}}(\thetaG)\big)
        \leq
        \rho_{t}\big(\{\theta:L(\theta)-\underbar{L}\leq\xi_{L,\varepsilon}\}\big)
    \end{equation}
    and thus, by definition of $\qbeta{\dummy}$ as the infimum, that $\qbeta{\rho_t}\leq\underbar{L}+\xi_{L,\varepsilon}$ for all $t\in[0,T^*]$.

    \noindent
    \textbf{Choice of $\alpha$.}
    Let us further define $u_\varepsilon := \frac{1}{4}\min\big\{\!\left(\eta_Gc(\vartheta,\lambda,\sigma)\sqrt{\varepsilon}/{3}\right)^{1/{\nu_G}}\!,{\widetilde{G}_{\infty}}\big\}>0$, and $\widetilde{r}_\varepsilon,r_\varepsilon>0$ as in \cite{trillos2024CB2O}, which satisfy with the identical argument $u_\varepsilon + G_{r_\varepsilon} + H_Gr_{G,\varepsilon}^{h_G}\leq 2u_\varepsilon + \widetilde{G}_{\infty}/2\leq \widetilde{G}_\infty$.
    
    With all parameters now in line with the requirements of Propositions~\ref{prop:QQLP} and \ref{prop:attack},
    it remains to choose $\alpha$
    such that $\alpha > 
		\alpha_0$ with
	\begin{equation}
	   \label{eq:alpha}
	\begin{split}
		\alpha_0
		&:= \max\Bigg\{\frac{1}{k_G K_G (R_G^K)^{k_G}},\frac{1}{u_\varepsilon}\Bigg(\log\left(\frac{12}{c(\vartheta,\lambda,\sigma)}\right)\!-\!\log\left(\rho^b_{0}\big(B_{r_\varepsilon/2}(\thetaG)\big)\right)\\
        &\qquad\quad+\!\max\left\{\frac{1}{2},\frac{p_\varepsilon}{(1-\vartheta)\big(2\lambda-d\sigma^2\big)}\right\}\log\left(\frac{\CV(\rho^b_0)}{\varepsilon}\right)\!+\!\max\left\{0,\log\left(\frac{w_m}{w_b}\frac{R_G^K}{\sqrt{\varepsilon}}\right)\right\}\!\Bigg)\Bigg\},
	\end{split}
	\end{equation}
    where $p_{\varepsilon}$ is as $p$ defined in \cite[Proposition~4.4, Equation~(4.35)]{trillos2024CB2O} with  $B=c(\vartheta,\lambda,\sigma)\sqrt{\CV(\rho^b_0)}$ and with $r=r_\varepsilon$.

    \noindent
    \textbf{Main proof.}
    Let us now define the time horizon $T_{\alpha,\beta} \geq 0$, which may depend on $\alpha$ and $\beta$,
    by
	\begin{align} \label{eq:endtime_T}
		T_{\alpha,\beta}
        := \sup\big\{t\geq0 : \CV(\rho^b_{t'}) > \varepsilon \text{ and } \N{\mAlphaBeta{\rho_t}-\thetaG}_2 < C(t') \text{ for all } t' \in [0,t]\big\}
	\end{align}
	with $C(t):=c(\vartheta,\lambda,\sigma)\sqrt{\CV(\rho^b_t)}$.
	Notice for later use that $C(0)=B$.

    Our aim is to show $\CV(\rho^b_{T_{\alpha,\beta}}) = \varepsilon$ with $T_{\alpha,\beta}\in\big[\frac{1-\vartheta}{(1+\vartheta/2)}\;\!T^*,T^*\big]$ and that we have at least exponential decay of $\CV(\rho^b_t)$ until time $T_{\alpha,\beta}$, i.e., until accuracy~$\varepsilon$ is reached.
	
By the continuity of the mappings~$t\mapsto\CV(\rho^b_{t})$ and~$t\mapsto\mAlphaBeta{\rho_{t}}$, it follows that ${T_{\alpha,\beta}>0}$, since $\CV(\rho^b_{0}) > \varepsilon$ and $\Nbig{\mAlphaBeta{\rho_{0}} - \thetaG}_2 < C(0)$.
	While the former is immediate by assumption, for the latter, an application of Propositions~\ref{prop:QQLP} and~\ref{prop:attack} with $r_{G,\varepsilon}$, $r_\varepsilon$, $u_\varepsilon$ and $\rho_0$ yields
    \begin{align} \label{eq:proof:lapl1}
	\begin{split}
        &\N{\mAlphaBeta{\rho_{0}} - \thetaG}_2
        \leq
        \frac{(u_\varepsilon+G_{r_\varepsilon} + H_Gr_{G,\varepsilon}^{h_G})^{\nu_G}}{\eta_G}+\\
        &\quad\,\quad\, + \frac{\exp\left(-\alpha u_\varepsilon\right)}{\rho_0^b\big(B_{r_\varepsilon}(\thetaG)\big)}\int_{\Qbeta{\rho_0}}\N{\theta-\thetaG}_2d\rho^b_{0}(\theta)+\!\frac{2w_m\exp\left(-\alpha u_\varepsilon\right)}{w_b\rho_0^b\big(B_{r_\varepsilon}(\thetaG)\big)}R_G^K\\
        &\quad\,\leq \frac{c\left(\vartheta,\lambda,\sigma\right)\sqrt{\varepsilon}}{3} + \frac{\exp\left(-\alpha u_\varepsilon\right)}{\rho_0^b\big(B_{r_\varepsilon}(\thetaG)\big)}\sqrt{2\CV(\rho^b_0)} +\frac{2w_m\exp\left(-\alpha u_\varepsilon\right)}{w_b\rho_0^b\big(B_{r_\varepsilon}(\thetaG)\big)}R_G^K\\
        &\quad\,\leq c\left(\vartheta,\lambda,\sigma\right)\sqrt{\varepsilon}
        < c\left(\vartheta,\lambda,\sigma\right)\sqrt{\CV(\rho^b_0)}
        = C(0),
	\end{split}
	\end{align}
    where the first step in the last line holds by choice of $\alpha$ in \eqref{eq:alpha}.

     Next, we show that the functional $\CV(\rho^b_t)$ decays up to time $T_{\alpha,\beta}$
    \begin{enumerate}[label=(\roman*),labelsep=10pt,leftmargin=35pt,topsep=2pt]
        \item at least exponentially fast (with rate $(1-\vartheta)(2\lambda-d\sigma^2)$), and \label{enumerate:proof:atleastexpdecay}
        \item at most exponentially fast (with rate $(1+\vartheta/2)(2\lambda-d\sigma^2)$).\label{enumerate:proof:atmostexpdecay}
    \end{enumerate}
	To obtain \ref{enumerate:proof:atleastexpdecay}, recall that \cite[Lemma~4.1]{trillos2024CB2O} provides an upper bound on $\frac{d}{dt}\CV(\rho^b_t)$ given by
    \begin{equation}
	\begin{split}
	    \frac{d}{dt}\CV(\rho^b_t)
		&\leq
		-\left(2\lambda - d\sigma^2\right) \CV(\rho^b_t)
	    + \sqrt{2}\left(\lambda + d\sigma^2\right) \sqrt{\CV(\rho^b_t)} \N{\mAlphaBeta{\rho_t}-\thetaG}_2\phantom{.} \\
	    &\quad\,+ \frac{d\sigma^2}{2} \N{\mAlphaBeta{\rho_t}-\thetaG}_2^2\\
        &\leq -(1-\vartheta)\left(2\lambda-d\sigma^2\right)\CV(\rho^b_t)
		\quad \text{ for all } t \in (0,T_{\alpha,\beta}),
	\end{split}
	\end{equation}
	where the last step follows from the definition of $T_{\alpha,\beta}$ in \eqref{eq:endtime_T} by construction.
    Analogously, for \ref{enumerate:proof:atmostexpdecay}, by the second part of \cite[Lemma~4.1]{trillos2024CB2O},
    we obtain a lower bound on $\frac{d}{dt}\CV(\rho^b_t)$ of the form
    \begin{equation}
    \begin{split}
        \frac{d}{dt}\CV(\rho^b_t)
        &\geq
        -\left(2\lambda - d\sigma^2\right) \CV(\rho^b_t)
        - \sqrt{2}\left(\lambda + d\sigma^2\right) \sqrt{\CV(\rho^b_t)} \N{\mAlphaBeta{\rho_t}-\thetaG}_2 \\
        &\geq -(1+\vartheta/2)\left(2\lambda - d\sigma^2\right) \CV(\rho^b_t)
        \quad \text{ for all } t \in (0,T_{\alpha,\beta}),
    \end{split}
    \end{equation}
    where the second inequality again exploits the definition of $T_{\alpha,\beta}$.
	Gr\"onwall's inequality now implies for all $t \in [0,T_{\alpha,\beta}]$ the upper and lower bounds
	\begin{align}
		\CV(\rho^b_0) \exp\left(- (1+\vartheta/2)\left(2\lambda-d\sigma^2\right) t\right)
        \leq
        \CV(\rho^b_t)
		\leq
        \CV(\rho^b_0) \exp\left(- (1-\vartheta)\left(2\lambda-d\sigma^2\right) t\right), \label{eq:evolution_J}
	\end{align}
    i.e., \ref{enumerate:proof:atleastexpdecay} and \ref{enumerate:proof:atmostexpdecay}.
	As in \cite{trillos2024CB2O}, $\max_{t \in [0,T_{\alpha,\beta}]} \Nbig{\mAlphaBeta{\rho_{t}} - \thetaG}_2 \leq \max_{t \in [0,T_{\alpha,\beta}]} C(t)\leq C(0)$.
    
    To conclude, it remains to prove that $\CV(\rho^b_{T_{\alpha,\beta}}) = \varepsilon$ with $T_{\alpha,\beta}\in\big[\frac{1-\vartheta}{(1+\vartheta/2)}\;\!T^*,T^*\big]$.
    For this we distinguish the following three cases.

    \noindent
	\textbf{Case $T_{\alpha,\beta} \geq T^*$:}
	We can use the definition of $T^*$ in \eqref{eq:end_time_star_statement} and the time-evolution bound of $\CV(\rho^b_t)$ in \eqref{eq:evolution_J} to conclude that $\CV(\rho^b_{T^*}) \leq \varepsilon$.
	Hence, by definition of $T_{\alpha,\beta}$ in \eqref{eq:endtime_T} together with the continuity of $t\mapsto\CV(\rho^b_t)$, we find $\CV(\rho^b_{T_{\alpha,\beta}}) =\varepsilon$ with $T_{\alpha,\beta} = T^*$.
	
	\noindent
	\textbf{Case $T_{\alpha,\beta} < T^*$ and $\CV(\rho^b_{T_{\alpha,\beta}}) \leq \varepsilon$:}
	By continuity of $t\mapsto\CV(\rho^b_t)$, it holds for $T_{\alpha,\beta}$, $\CV(\rho^b_{T_{\alpha,\beta}}) = \varepsilon$.
    Thus, $\varepsilon
        = \CV(\rho^b_{T_{\alpha,\beta}})
        \geq \CV(\rho^b_0) \exp\!\big(\!- (1+\vartheta/2)\big(2\lambda-d\sigma^2\big) T_{\alpha,\beta}\big)$ by \eqref{eq:evolution_J}, or reordered
    \begin{align}
        \frac{1-\vartheta}{(1+\vartheta/2)} \, T^*
        =\frac{1}{(1+\vartheta/2)\left(2\lambda-d\sigma^2\right)}\log\left(\frac{\CV(\rho^b_0)}{\varepsilon}\right)
        \leq T_{\alpha,\beta}
        < T^*.
    \end{align}
    \noindent
	\textbf{Case $T_{\alpha,\beta} < T^*$ and $\CV(\rho^b_{T_{\alpha,\beta}}) > \varepsilon$:}
	We shall show that this case can never occur by verifying that $\Nbig{\mAlphaBeta{\rho_{T_{\alpha,\beta}}} - \thetaG}_2 < C(T_{\alpha,\beta})$ due to the choices of $\alpha$ in~\eqref{eq:alpha} and $\beta$ in \eqref{eq:beta}.
	In fact, fulfilling simultaneously both $\CV(\rho^b_{T_{\alpha,\beta}})>\varepsilon$ and $\Nbig{\mAlphaBeta{\rho_{T_{\alpha,\beta}}} - \thetaG}_2 < C(T_{\alpha,\beta})$ would contradict the definition of $T_{\alpha,\beta}$ in \eqref{eq:endtime_T} itself.
	To this end, we apply again Propositions~\ref{prop:QQLP} and~\ref{prop:attack} with $r_{G,\varepsilon}$, $r_\varepsilon$, $u_\varepsilon$ and obtain
    \begin{align} \label{eq:proof:lapl2}
	\begin{split}
        &\N{\mAlphaBeta{\rho_{T_{\alpha,\beta}}} - \thetaG}_2
        \leq
        \frac{(u_\varepsilon+G_{r_\varepsilon} + H_Gr_{G,\varepsilon}^{h_G})^{\nu_G}}{\eta_G}+\\
        &\quad\,\quad\, +\frac{\exp\left(-\alpha u_\varepsilon\right)}{\rho_{T_{\alpha,\beta}}^b\big(B_{r_\varepsilon}(\thetaG)\big)}\int_{\Qbeta{\rho_{T_{\alpha,\beta}}}}\N{\theta\!-\!\thetaG}_2d\rho^b_{T_{\alpha,\beta}}(\theta)\!+\!\frac{2w_m\exp\left(-\alpha u_\varepsilon\right)}{w_b\rho_{T_{\alpha,\beta}}^b\big(B_{r_\varepsilon}(\thetaG)\big)}R_G^K\\
        &\quad\,\leq \frac{c\left(\vartheta,\lambda,\sigma\right)\sqrt{\varepsilon}}{3}\!+\!\frac{\exp\left(-\alpha u_\varepsilon\right)}{\rho_{T_{\alpha,\beta}}^b\big(B_{r_\varepsilon}(\thetaG)\big)}\left(\int_{\Qbeta{\rho_{T_{\alpha,\beta}}}}\N{\theta\!-\!\thetaG}_2d\rho^b_{T_{\alpha,\beta}}(\theta) \!+\! \frac{2w_m}{w_b}R_G^K\right)\\
        &\quad\,< \frac{c\left(\vartheta,\lambda,\sigma\right)\sqrt{\CV(\rho^b_{T_{\alpha,\beta}})}}{3}+\frac{\exp\left(-\alpha u_\varepsilon\right)}{\rho_{T_{\alpha,\beta}}^b\big(B_{r_\varepsilon}(\thetaG)\big)}\left(\sqrt{\CV(\rho^b_{T_{\alpha,\beta}})} + \frac{2w_m}{w_b}R_G^K\right),
	\end{split}
	\end{align}
    where for the last step we recall that in this case we assumed $\varepsilon<\CV(\rho^b_{T_{\alpha,\beta}})$.
    Since it holds for $B=C(0)$, $\max_{t \in [0,T_{\alpha,\beta}]}\Nbig{\mAlphaBeta{\rho_{t}} - \thetaG}_2 \leq B$, \cite[Proposition~4.4]{trillos2024CB2O} guarantees that there exists a $p_\varepsilon>0$ not depending on $\alpha$ (but depending on $B$ and $r_\varepsilon$) with
	\begin{align}
	\begin{aligned}
	    \rho^b_{T_{\alpha,\beta}}(B_{r_\varepsilon}(\thetaG))
		\geq \left(\int \phi_{r_\varepsilon} \,d\rho^b_0\right)\exp(-p_\varepsilon T_{\alpha,\beta}) 
		\geq \frac{1}{2}\rho^b_0\big(B_{r_\varepsilon/2}(\thetaG)\big) \exp(-p_\varepsilon T^*) 
		> 0,
	\end{aligned}
	\end{align}
	where we used $\thetaG\in\supp{\rho^b_0}$ for bounding the initial mass $\rho_0$ together with $T_{\alpha,\beta}\leq T^*$.
	With this we can continue the chain of inequalities in~\eqref{eq:proof:lapl2} to obtain
	\begin{align} \label{eq:proof:lapl22}
	\begin{split}
		&\N{\mAlphaBeta{\rho_{T_{\alpha,\beta}}} - \thetaG}_2
		< \frac{c\left(\vartheta,\lambda,\sigma\right)\sqrt{\CV(\rho^b_{T_{\alpha,\beta}})}}{3}\\
        &\quad\,\quad\,+\frac{2\exp\left(-\alpha u_\varepsilon\right)}{\rho_{0}^b\big(B_{r_\varepsilon/2}(\thetaG)\big)\exp(-p_\varepsilon T^*) }\left(\sqrt{\CV(\rho^b_{T_{\alpha,\beta}})} + \frac{2w_m}{w_b}R_G^K\right)\\
		&\quad\,\leq \frac{2c\left(\vartheta,\lambda,\sigma\right)\sqrt{\CV(\rho^b_{T_{\alpha,\beta}})}}{3} + \frac{c\left(\vartheta,\lambda,\sigma\right)\sqrt{\varepsilon}}{3}
        < c\left(\vartheta,\lambda,\sigma\right)\sqrt{\CV(\rho^b_{T_{\alpha,\beta}})}
		= C(T_{\alpha,\beta}),
	\end{split}
	\end{align}
    where the first inequality in the last line holds by choice of $\alpha$ in \eqref{eq:alpha} and the second since in this case $\varepsilon<\CV(\rho^b_{T_{\alpha,\beta}})$.
	This establishes again a contradiction.
\end{proof}

\section{\texorpdfstring{Robustness of FedCB\textsuperscript{2}O Against Label-Flipping Attacks in Decentralized Clustered Federated Learning}{Robustness of FedCB2O Against Label-Flipping Attacks in Decentralized Clustered Federated Learning}}\label{sec:algorithm}

In Section~\ref{subsec:LF_attack}, we describe the decentralized clustered federated learning (DCFL) setting considered in \cite{onoszko2021decentralized,beltran2023decentralized,carrillo2024fedcbo} and review label-flipping (LF) attacks~\cite{fung2020limitations, tolpegin2020data, jebreel2023fl, JEBREEL2024111} within this context.
We then revisit in Section~\ref{subsec:FailFedCBO} the FedCBO algorithm \cite{carrillo2024fedcbo}
and explain its vulnerability to LF attacks.
This motivates and leads, as we describe in Section~\ref{subsec:RobustFedCB2O}, to the development of the FedCB\textsuperscript{2}O system, an adaptation of CB\textsuperscript{2}O~\cite{trillos2024CB2O} to the DCFL problem.
In Section~\ref{subsec:FedCB2O_alg}, we then present the FedCB\textsuperscript{2}O algorithm (Algorithm~\ref{alg:FedCB2O}),
an implementation of the core principles of the FedCB\textsuperscript{2}O system that addresses the practical challenges encountered in real-world FL applications.
Finally, Section~\ref{sec:experiments} showcases the effectiveness of the FedCB\textsuperscript{2}O algorithm in practical scenarios by providing an extensive empirical study where we compare our algorithm's performance with those of baseline methodologies for the DCFL setting in the presence of malicious agents performing label-flipping attacks.

\subsection{Label-Flipping Attacks in Decentralized Clustered Federated Learning}\label{subsec:LF_attack}

In decentralized clustered federated learning problems \cite{onoszko2021decentralized,carrillo2024fedcbo},
each agent is assumed to belong to one of $K$ non-overlapping groups denoted by $S_1, \dots, S_K$.
An agent from group $S_k$ possesses data points generated from a distribution $\mathcal{D}_k$, which can be used to train the agent's own local model.
Denoting by $\ell \left(\theta; z \right): \Theta \rightarrow \mathbb{R}$ the loss function associated with a data point $z$, where $\Theta \subset \mathbb{R}^d$ is the parameter space of the learning models,
our goal is to minimize the population loss
\begin{equation}
    \label{eq:population_loss}
    L_k(\theta) := \mathbb{E}_{z \sim \mathcal{D}_k} \left[\ell\left(\theta; z \right) \right]
\end{equation}
simultaneously for all $k \in [K]$ under the data privacy constraints of FL. 
In other words, we wish to find for all loss functions~$L_k$ minimizers
\begin{equation}\label{eq:CFL_problem}
    \theta^{*, k} \in  \argmin_{\theta \in \Theta} L_k(\theta)
\end{equation}
without breaching the privacy protocol.
As pointed out in Remark~\ref{rem:DCFL}, the losses $L_k$ are the lower-level objective functions of the individual agents which depend on their group affiliation.

We consider in what follows $C$-class classification problems. In particular, each data point $z$ is of the form $z = (x,y) \in \CX \times \CY$,
where $\CX \subset \R^n$ is the data feature space and $\CY := \{1, \dots, C\}$ the label set.

The decentralized nature of the training process in DCFL increases the vulnerability of models to attacks from malicious agents.
One easy-to-implement but efficient and stealthy attack is the label-flipping (LF) attack, which was first introduced in the setting of centralized machine learning problems \cite{Biggio2012PoisoningAA, steinhardt2017certified}, and later studied in the context of distributed learning \cite{fung2020limitations,tolpegin2020data, jebreel2023fl,JEBREEL2024111}.
The goal of malicious agents is to poison the system such that trained models of benign agents incorrectly predict for samples from a source class with label $c_{S} \in \CY$ the target label $c_{T} \in \CY$.
To achieve this, attackers select in their own local datasets those samples with label $c_S$ and then flip their labels to the label $c_{T}$ while leaving the data features unchanged. They then train their models on the poisoned local datasets to obtain poisoned models, which they share with other participants in the DCFL system.
We illustrate the working principle of an LF attack in the DCFL setting in Figure~\ref{fig:LF_attack}.
\begin{figure}[htb]
\centering
\includegraphics[trim=12 102 30 146,clip,width=0.95\textwidth]{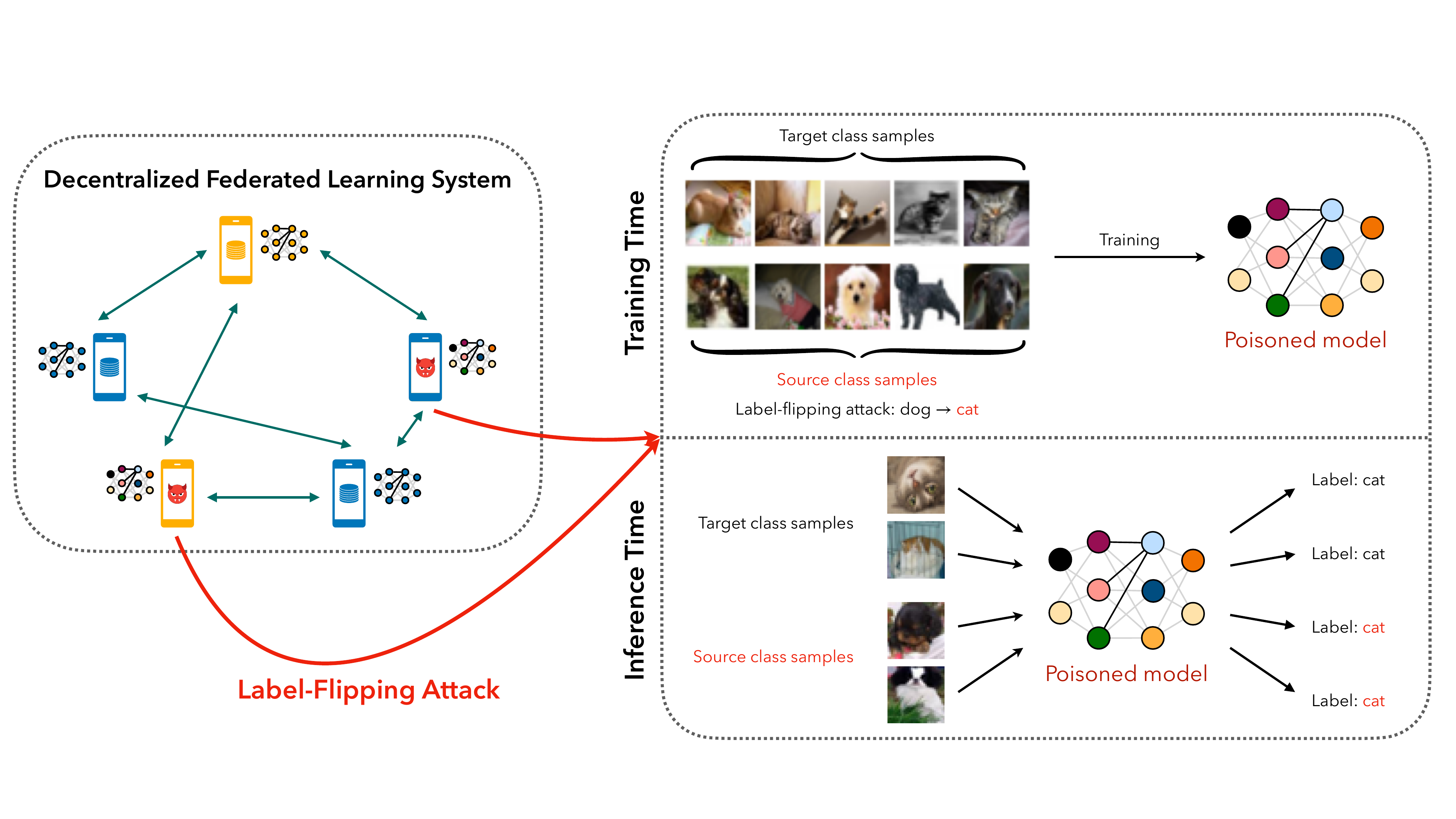}
\caption{An illustration of malicious agents performing a label-flipping attack in a decentralized clustered federated learning system.
The malicious agents flip in their own local dataset the labels of source class samples ($c_S =$ ``dogs'') to a target label ($c_T =$``cat'') while keeping the images themselves unchanged, before they train their models on the poisoned local dataset.
During the communication step, they share the poisoned models with the entire system.
} \label{fig:LF_attack}
\end{figure}

Mathematically, we can formulate the goals of the benign and malicious agents as two different optimization problems.
Benign agents in group $S_k$, $k \in [K]$, aim to minimize the population loss $L_k$ as defined in \eqref{eq:population_loss}.
For the sake of comparison, let us rewrite $L_k$ as a sum of class-wise losses $L_{k,c}$, i.e.,
\begin{equation}\label{eq:benignAgentObj}
    L_k(\theta) = \sum_{c=1}^{C}  w_{k,c} L_{k,c}(\theta), 
\end{equation}
where
\begin{equation}
    w_{k,c} := \mathbb{P}_{\CD_k} (y = c)\quad\text{and}\quad  L_{k,c}(\theta) := \E_{\{(x,y) \sim \mathcal{D}_k | y = c  \}} \left[\ell(\theta;(x,c)) \right] \quad \text{for } c \in [C].
\end{equation}
In contrast, to perform a LF attack, malicious agents intend to solve the poisoned problem
\begin{equation}\label{eq:MaliAgentObj}
    \argmin_{\theta \in \R^d}\; L^m_k(\theta) := \sum_{c\neq c_S} w_{k,c} L_{k,c}(\theta) + w_{k,c_{S}} \E_{\{(x,y) \sim \CD_k | y = {\blue {c_S}}\}} \left[ \ell \left( \theta; (x, {\red {c_T}}) \right) \right]. 
\end{equation}
The first term in \eqref{eq:MaliAgentObj} means that malicious agents do not alter samples other than the ones from the source class,
whereas the second term indicates that malicious agents flip all labels of samples from source class $c_S$ to target label $c_T$ while keeping the data features unchanged.

The LF attack significantly degrades the performance of trained models on the source class~$c_S$, while not affecting their performance on other classes.
This characteristic makes LF attacks generally challenging to detect, in particular when $C$ is large and only a few classes are attacked. We can observe this phenomenon in the experimental Section~\ref{sec:experiments}, see Table \ref{tab:baselines}.

\subsection{Vulnerability of FedCBO to Label-Flipping Attacks}
\label{subsec:FailFedCBO}

In this section, we revisit the FedCBO system proposed in \cite{carrillo2024fedcbo}, which was designed for the DCFL paradigm in the idealized setting where no malicious agents are present, and discuss the reasons for its vulnerability to LF attacks.

Let us therefore consider, without loss of generality, the DCFL setting from Section \ref{subsec:LF_attack} with $K=2$ clusters. 
We assume that all $N_1$ agents in cluster $1$ share the same loss function $L_1$, while all $N_2$ agents in cluster $2$ have another loss function $L_2$.
The associated positions of the particles, which, in the context of DCFL, correspond to the model parameters of the agents, 
are denoted by $\{\theta_t^{1, i_1}\}_{i_1=1}^{N_1} \subset \R^d$ for the agents from cluster $1$ and by $\{\theta_t^{2,i_2}\}_{i_2=1}^{N_2} \subset \R^d$ for the agents from cluster $2$.
To collaboratively minimize the objective functions $L_1$ and $L_2$ simultaneously while being oblivious to the cluster identities of the other agents,
the authors of \cite{carrillo2024fedcbo} propose to employ the FedCBO system,
which describes the dynamics of the collection of the $N = N_1 + N_2$ interacting particles
in terms of the SDE system
\begin{subequations}\label{eq:fedcbo_dynamic}
\begin{equation}\label{eq:fedcbo_g1_dynamic}
    d\theta_t^{1, i_1}
    = -\lambda_1 \left( \theta_t^{1,i_1} - m_{\alpha}^{L_1} (\rho_t^N) \right) dt - \lambda_2 \nabla L_1(\theta_t^{1, i_1})\,dt +  \square \,dB_t^{1,i_1} 
    \quad \text{for }  i_1 \in [N_1],
\end{equation}
\begin{equation}\label{eq:fedcbo_g2_dynamic}
    d\theta_t^{2, i_2}
    = -\lambda_1 \left( \theta_t^{2,i_2} - m_\alpha^{L_2} (\rho_t^N) \right) dt - \lambda_2 \nabla L_2(\theta_t^{2, i_2})\,dt  +  \square \,dB_t^{2,i_2}
    \quad \text{for }  i_2 \in [N_2],
\end{equation}
\begin{equation}\label{eq:fedcbo_ConsensusPoint}
    m_{\alpha}^{L_1} (\rho_t^N) \propto \sum_{k=1,2} \sum_{i_k=1}^{N_k} \omega_{{L_1}}^{\alpha} (\theta_t^{k, i_k}) \theta_t^{k, i_k}, 
    \qquad
    m_{\alpha}^{L_2} (\rho_t^N) \propto \sum_{k=1,2} \sum_{i_k=1}^{N_k} \omega_{{L_2}}^{\alpha} (\theta_t^{k, i_k}) \theta_t^{k, i_k},
\end{equation}
\end{subequations}
with parameters $\lambda_1, \lambda_2, \alpha > 0$ and with weights $\omega_{L_j}^{\alpha} (\theta) := \exp(- \alpha L_j(\theta))$ for $j=1,2$.
The empirical measure of all particles is denoted by $\rho_t^N := \frac{N_1}{N} \rho_t^{N_1} + \frac{N_2}{N} \rho_t^{N_2}$, where $\rho_t^{N_1} := \frac{1}{N_1} \sum_{i_1=1}^{N_1} \delta_{\thetaCOne}$ and $\rho_t^{N_2} := \frac{1}{N_2} \sum_{i_2=1}^{N_2} \delta_{\thetaCTwo}$ represent the empirical measures of the particles in cluster $1$ and $2$, respectively.
Since the noise terms in the above dynamics will not be the main focus in the sequel, we abbreviate their coefficients with $\square$ for notational simplicity and refer the reader to \cite{carrillo2024fedcbo} for full details.

The term $m_{\alpha}^{L_k} (\rho_t^N)$ defined in \eqref{eq:fedcbo_ConsensusPoint}, which each agent is able to evaluate independently on their own respective loss function~$L_k$ without knowing cluster affiliations of other agents,
encodes the weighted average of the positions of \textit{all} $N$ particles $\{\theta_t^{1,i_1}\}_{i_1=1}^{N_1}$ and $\{\theta_t^{2,i_2}\}_{i_2=1}^{N_2}$ w.r.t.\@ the respective objective function~$L_k$.
By design, the consensus points $m_{\alpha}^{L_k} (\rho_t^N)$ will coincide within the clusters, thereby facilitating the automatic ``clustering'' of the agents without any knowledge of their cluster identities.
To demonstrate this mechanism, let us imagine for the moment that particles from cluster $1$ concentrate around the low-loss regions of $L_1$,
and presumably have smaller $L_1$ loss than particles from cluster $2$ which are expected to rather move around the low-loss regions of $L_2$, which are typically less favorable w.r.t.\@ $L_1$.
Therefore, in the computation of $m_{\alpha}^{L_1} (\rho_t^N)$, the particles $\{\theta_t^{1,i_1}\}_{i_1=1}^{N_1}$ from cluster $1$ are expected to receive higher weights compared to particles from cluster $2$, leading $m_{\alpha}^{L_1} (\rho_t^N)$ to approximate the weighted average of particles predominantly from cluster $1$.
An analogous rationale applies to $m_{\alpha}^{L_2} (\rho_t^N)$, which effectively implements an evolving weighted average that primarily includes particles from cluster $2$.
Thus, in the definitions of the consensus points in \eqref{eq:fedcbo_ConsensusPoint}, $L_1$ and $L_2$ act as ``selection criteria'' that effectively differentiate between agents from clusters $1$ and $2$, respectively.

In the dynamics described by \eqref{eq:fedcbo_g1_dynamic} and \eqref{eq:fedcbo_g2_dynamic}, respectively,
the first drift term can then be understood as the model exchange and local aggregation step, where agents first download the model parameters $\{\theta_t^{1,i_1}\}_{i_1=1}^{N_1}$ and $\{\theta_t^{2,i_2}\}_{i_2=1}^{N_2}$ from other agents and consecutively compute a weighted average thereof as previously described.
The second drift term (potentially together with an associated noise term) relates to the agent's local update step, where each agent runs (stochastic) gradient descent using only their own local datasets to update their model parameters in the absence of communication with other participants.

The FedCBO algorithm \cite{carrillo2024fedcbo} achieves great performance in the DCFL setting, provided that all agents aim to optimize their own objective functions.
In an adversarial scenario, however, i.e., as soon as malicious agents, which execute LF attacks as described in Section \ref{subsec:LF_attack}, are present in the system,
the FedCBO system becomes vulnerable and prone to undesired behavior, as we experimentally demonstrate in Section~\ref{sec:experiments}.
In particular, benign agents may struggle to distinguish between other benign agents and malicious agents within the same cluster (see Figures~\ref{subfig:fedcbo_avg_select_time} and \ref{subfig:fedcbo_avg_select_time_mul_weights}).
To intuitively understand the reason, let us for the purpose of demonstration assume that there are only three agents from cluster $1$ in the system; two benign agents $A$ and $B$, and one malicious agent $C$ who performs a LF attack on the source class $c_S$ with target class $c_T$.
Let us further suppose that agent $C$ has more resources such as local data samples than agents $A$ and $B$.
As discussed in Section \ref{subsec:LF_attack}, at each communication round the malicious agent $C$ performs a LF attack locally before sharing the poisoned model $\theta_t^C$ with agents $A$ and $B$.
\begin{figure}[htb]
\centering
\includegraphics[trim=215 200 275 200,clip,width=0.7\textwidth]{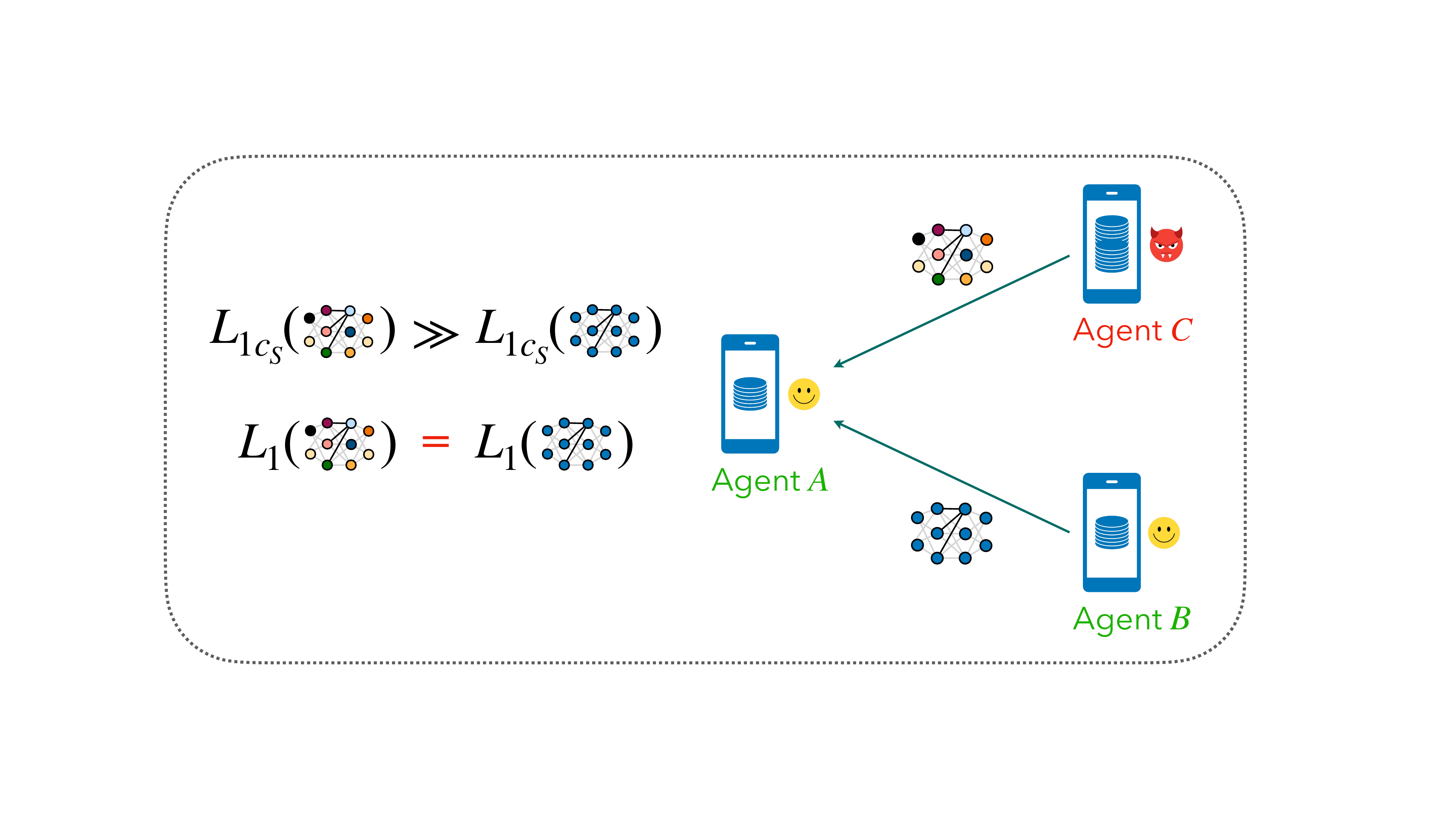}
\caption{An illustration of a successful LF attack. A malicious agent~$C$ with more resources may have similar $L_1$ loss as another benign agent~$B$, while performing a LF attack on source class $c_S$.
} 
\label{fig:fedcbo_fail}
\end{figure}
From the viewpoint of agent $A$ (see the associated illustration in Figure~\ref{fig:fedcbo_fail}),
during the local aggregation step, the following weights will be assigned to the models $\theta_t^B$ and $\theta_t^C$ based on \eqref{eq:fedcbo_ConsensusPoint}.
For $\theta_t^B$,
\begin{equation}
    \textstyle
    \omega_{L_1}^{\alpha}(\theta_t^B) =  \exp \left( -\alpha L_1 (\theta_t^B) \right) =  \exp \left(-\alpha \left( \sum_{c\neq c_S}
 w_{1,c} L_{1,c}(\theta_t^B)  + w_{1,c_S} L_{1,c_S} (\theta_t^B ) \right) \right),
\end{equation}
and, analogously, for $\theta_t^C$
\begin{equation}
    \textstyle
    \omega_{L_1}^{\alpha}(\theta_t^C) =  \exp \left( -\alpha L_1 (\theta_t^C) \right) = \exp \left(-\alpha \left( \sum_{c\neq c_S} 
 w_{1,c} L_{1,c}(\theta_t^C)  + w_{1,c_S} L_{1,c_S} (\theta_t^C ) \right) \right).
\end{equation}
Even though $L_{1, c_S}(\theta_t^C)\gg L_{1, c_S}(\theta_t^B)$ (as the malicious agent $\theta_t^C$ attacked the source class $c_S$),
the overall (average) loss $L_1(\theta_t^C)$ can still be equal, similar, or even smaller than the average loss $L_1(\theta_t^B)$, as the malicious agent $C$ has more resources and data points than the benign agent $B$,
which allows the attacker to achieve a substantially better performance for the poisoned model $\theta_t^C$ across all other classes $c \neq c_S$, i.e., $L_{1, c}(\theta_t^C) < L_{1, c}(\theta_t^B)$, compared to the benign model $\theta_t^B$. 
As a result, agent $A$ might not be able to distinguish the benign agent $B$ from the malicious agent $C$ by following the training protocol \eqref{eq:fedcbo_g1_dynamic}.
In other words, checking merely the local average losses $L_1$ or $L_2$ is insufficient to filter out malicious agents which perform LF attacks.

\subsection{\texorpdfstring{The FedCB\texorpdfstring{\textsuperscript{2}}{2}O System}{FedCB2O System}}\label{subsec:RobustFedCB2O}

The discussions in Sections \ref{sec:intro} and \ref{subsec:FailFedCBO} motivate to incorporate a suitable robustness criterion as a secondary layer~$G$ of evaluation for the benign agents in the DCFL optimization problem \eqref{eq:CFL_problem} to assess the trustworthiness of models from other agents.
Specifically, we consider to solve the bi-level optimization problems 
\begin{equation}\label{eq:DCFL_bilevel_opt}
    \thetaGk := \argmin_{\theta^{*, k}\in \Theta_k} G(\theta^{*, k})
    \quad \text{s.t.\@}\quad
    \theta^{*,k} \in \Theta_k := \argmin_{\theta \in \bbR^d} L_k(\theta) 
\end{equation}
for all $k \in [K]$ clusters ($K=2$ for simplicity) simultaneously, without violating the FL privacy protocol.
We again assume that agents from the same cluster $k$ share the lower-level objective function $L_k$ (see Remark \ref{rem:emp_loss} for a comment on the practical FL setting). 
Inspired by the FedCBO system \eqref{eq:fedcbo_dynamic} from \cite{trillos2024CB2O},
we extend the CB\textsuperscript{2}O dynamics~\eqref{eq:dyn_micro2} to the clustered setting, resulting in the FedCB\textsuperscript{2}O system, an interacting particle system describing the dynamics of the $N = N_1 + N_2$ particles (again, corresponding to model parameters of the agents) by
\begin{subequations}\label{eq:fedcb2o_dynamics}
\begin{equation}\label{eq:fedcb2o_g1_dynamics}
\begin{aligned}
    d\thetaCOne
    &= - \lambda_1 \left(\thetaCOne - m_{\alpha, \beta}^{G, L_1}(\rho_t^N) \right) dt - \lambda_2 \nabla L_1 (\thetaCOne) \,dt\\
    &\quad + \sigma_1 D \left(\thetaCOne - m_{\alpha, \beta}^{G, L_1}(\rho_t^N) \right) dB_t^{1,i_1} + \sigma_2 \Nbig{ \nabla L_1 (\thetaCOne)}_2 \,d\widetilde{B}_t^{1, i_1} 
    \quad \text{for }  i_1 \in [N_1],
\end{aligned}
\end{equation}
\begin{equation}\label{eq:fedcb2o_g2_dynamics}
\begin{aligned}
    \,d\thetaCTwo &= - \lambda_1 \left(\thetaCTwo - m_{\alpha, \beta}^{G, L_2}(\rho_t^N)\right) dt - \lambda_2 \nabla L_2 (\thetaCTwo) \,dt \\
    &\quad + \sigma_1 D \left(\thetaCTwo - m_{\alpha, \beta}^{G, L_2}(\rho_t^N)\right) dB_t^{2,i_2} + \sigma_2 \Nbig{ \nabla L_2 (\thetaCTwo) }_2 \,d\widetilde{B}_t^{2, i_2} 
    \quad \text{for }  i_2 \in [N_2].
\end{aligned}
\end{equation}
\end{subequations}
The empirical measure of all particles is denoted by $\rho_t^N$ and defined as in Section \ref{subsec:FailFedCBO}.
The consensus points $\mAlphaBetaLone{\rho_t^N}$ and $\mAlphaBetaLtwo{\rho_t^N}$ are now given as in \eqref{eq:consensus_point} replacing $L$ with $L_1$ and $L_2$, respectively, i.e., 
\begin{subequations}\label{eq:clustered_csp}
\begin{equation}\label{eq:clustered_csp_C1}
    m_{\alpha, \beta}^{G,L_1} (\rho_t^N) \propto \sum_{k=1,2}\, \sum_{ \theta_t^{k, i_k} \in Q_{\beta}^{L_1}[\rho_t^N]} \omegaa (\theta_t^{k, i_k}) \theta_t^{k, i_k},
\end{equation}
\begin{equation}\label{eq:cluster_csp_C2}
    m_{\alpha, \beta}^{G, L_2} (\rho_t^N) \propto \sum_{k=1,2}\, \sum_{ \theta_t^{k, i_k} \in Q_{\beta}^{L_2}[\rho_t^N]} \omegaa(\theta_t^{k, i_k}) \theta_t^{k, i_k},
\end{equation}
\end{subequations}
where $\omegaa(\theta) = \exp (-\alpha G (\theta))$, and where the sub-level sets $\QbetaLone{\dummy}$ and $\QbetaLtwo{\dummy}$ are defined as in \eqref{eq:Q_beta}, replacing $L$ with $L_1$ and $L_2$, respectively.
Notice that each agent is again able to evaluate the consensus point independently on their own loss function without knowledge of cluster membership.
Analogously to FedCBO in Section~\ref{subsec:FailFedCBO},
the FedCB\textsuperscript{2}O dynamics~\eqref{eq:fedcb2o_dynamics} has two key features.
The first, corresponding to the model exchange and local aggregation step in the DCFL paradigm, is the computation of the consensus point as of \eqref{eq:clustered_csp}.
The consensus point $\mAlphaBetaLone{\rho_t^N}$ is now computed as a weighted (w.r.t.\@ the robustness criterion $G$) average of those particles from both $\{\thetaCOne\}_{i_1=1}^{N_1}$ and $\{\thetaCTwo\}_{i_2=1}^{N_2}$ that belong to the sub-level set $\QbetaLone{\rho_t^N}$,
which, analogously to the sub-level set \eqref{eq:Q_beta} defined in the CB\textsuperscript{2}O system, can be regarded as an approximation of the neighborhood of the set $\Theta_1$ of all global minimizers of $L_1$.
Since particles from cluster $2$ are more likely to concentrate around $\Theta_2$, which are typically suboptimal w.r.t.\@ $L_1$, they don't affect the location of the consensus point.
Therefore, $\mAlphaBetaLone{\rho_t^N}$ predominantly incorporates contributions from particles from cluster $1$, which have small $L_1$ loss.
In other words, the sub-level set $\QbetaLone{\rho_t^N}$ acts as a first filter that excludes particles not belonging to the same cluster or particles with bad loss. 
The weighted averaging based on the robustness criterion $G$, on the other hand, serves as a second level of filtering,
that mitigates the influence of poisoned models with small $L_1$ losses but embedded misclassification biases, such as those obtained by malicious agents performing LF attacks. 
The second key feature in \eqref{eq:fedcb2o_dynamics} are the local gradient terms $\nabla L_1$ and $\nabla L_2$, which have the same interpretation as in FedCBO system \eqref{eq:fedcbo_dynamic}. 
They correspond to the agents' local update steps, where each agent runs (stochastic) gradient descent using their own datasets to update their local models in the absence of communication with others.

\begin{remark}
    In the FedCB\textsuperscript{2}O system~\eqref{eq:fedcb2o_dynamics},
    we include only the dynamics of the benign agents from the different clusters for notational simplicity.
    The presence of malicious agents in the FedCB\textsuperscript{2}O system can be incorporated similarly as in \eqref{eq:dyn_macro}, with their influence reflected in the computation of the consensus points.
    More precisely, there are then $N=N^b_1+N^b_2+N^m$ particles present in the system with the empirical measure given as $\rho_t^N := \frac{N^b_1}{N} \rho_t^{N^b_1} + \frac{N^b_2}{N} \rho_t^{N^b_2} + \frac{N^m}{N} \rho_t^{N^m}$.
\end{remark}

\begin{remark}[Theoretical analysis of FedCB\textsuperscript{2}O] \label{rem:DCFL_FedCB2O_theory}
    The mean-field convergence statements \cite[Theorem~2.7]{trillos2024CB2O} as well as Theorem~\ref{thm:main} for CB\textsuperscript{2}O in both an attack-free as well as adversarial setting,
    can be extended to the FedCB\textsuperscript{2}O dynamics~\eqref{eq:fedcb2o_dynamics}
    by leveraging the theoretical contributions of \cite{carrillo2024fedcbo}, where FedCBO has been analyzed using the analytical framework of CBO~\cite{fornasier2021consensus,riedl2022leveraging}.
    
    This permits to prove convergence in mean-field law for the FedCB\textsuperscript{2}O system~\eqref{eq:fedcb2o_dynamics} to the global minimizers~$\thetaGk$ of the bi-level optimization problems \eqref{eq:DCFL_bilevel_opt} by establishing exponentially fast decay of $\sum_{k=1}^2W_2^2(\rho^{b,k}_t,\delta_{\thetaGk})$, where $\rho^{b,1}$ and $\rho^{b,2}$ denote the laws of the corresponding mean-field limits of \eqref{eq:fedcb2o_g1_dynamics} and \eqref{eq:fedcb2o_g2_dynamics}, respectively.
\end{remark}

\begin{remark} \label{rem:emp_loss}
    In practical real-world FL and ML settings in general,
    each agent has access to only a finite number of data samples.
    This results in empirical loss functions that typically differ slightly even for agents in the same cluster.
    Our modeling assumption, where all agents in cluster $k$ share the same lower-level objective function $L_k$ for $k = 1,2$ is therefore a simplification. 
    To better reflect a realistic setting at the algorithmic level, we assume in the subsequent Sections \ref{subsec:RobustFedCB2O} and \ref{sec:experiments} that each agent $j$ possesses a lower-level objective $\AgentJObj$, which can be viewed as a slight perturbation of the underlying ``true" loss function $L_k$, depending on the cluster agent $j$ belongs to.
    The computation of the consensus point \eqref{eq:clustered_csp} for agent $j$ remains the same, except that $L_k$ is replaced with $\AgentJObj$, leading to slightly different consensus points, even for agents in the same cluster.
    We leave the mathematical modeling and analysis of this more realistic scenario for future work.
\end{remark}

\subsection{\texorpdfstring{The FedCB\texorpdfstring{\textsuperscript{2}}{2}O Algorithm}{The FedCB2O Algorithm}}\label{subsec:FedCB2O_alg}

To transform the interacting multi-particle system \eqref{eq:fedcb2o_dynamics} into an algorithm that is practicable in real-world FL problems,
a series of adjustments are required.
First of all, we follow the discretization proposed for the FedCBO algorithm in \cite{carrillo2024fedcbo}.
This involves the following three steps:
(i) We discretize the continuous-time dynamics \eqref{eq:fedcb2o_dynamics} using an Euler-Maruyama scheme;
(ii) The gradient term and the noise terms in \eqref{eq:fedcb2o_dynamics} are replaced with mini-batch stochastic gradient descent (SGD);
(iii) We apply the splitting scheme proposed in \cite{carrillo2024fedcbo}, where $\tau$ steps of local SGD are performed, followed by a single model exchange and local aggregation step which corresponds to the consensus point computation.
Applying this discretization scheme to the FedCB\textsuperscript{2}O system \eqref{eq:fedcb2o_dynamics} yields the general framework of the FedCB\textsuperscript{2}O algorithm, which we detail in Algorithm~\ref{alg:FedCB2O}.
\begin{algorithm}[!htb]
\setstretch{1.25}
\caption{FedCB\textsuperscript{2}O (benign agents)}
\label{alg:FedCB2O}
\begin{algorithmic}[1]
\REQUIRE
Number of iterations $T$; number of local gradient steps $\tau$; model download budget $M$; CB\textsuperscript{2}O  hyperparameters $\lambda_1, \lambda_2, \alpha, \beta$; discretization time step size $\gamma$; initialized sampling likelihood $P_0 \in \R^{N \times (N-1)}$ as the zero matrix;
\STATE Initialize models $\theta_0^j \in \R^d$ for $j \in [N]$
\FOR{$n=0, \dots, T-1$}
\STATE \textbf{LocalUpdate} ($\theta_n^j, \tau, \lambda_2, \gamma$) for $j \in [N]$;\\
\STATE \textbf{LocalAggregation} (agent $j$) for $j \in [N]$;
\ENDFOR
\RETURN $\theta_T^j$ for $j \in [N]$.
\end{algorithmic}

\underline{\textbf{LocalUpdate ($\widehat{\theta}_0, \tau, \lambda_2, \gamma$)} for agent $j$:} 
\begin{algorithmic}[1]
\FOR{$q=0, \dots, \tau - 1$}
\STATE (stochastic) gradient descent $\widehat{\theta}_{q+1} \leftarrow \widehat{\theta}_q - \lambda_2 \gamma \nabla \widetilde{L}_j(\widehat{\theta}_q)$;
\ENDFOR
\RETURN $\widehat{\theta}_{\tau}$.
\end{algorithmic}
\end{algorithm}

Like conventional FL algorithms, our FedCB\textsuperscript{2}O algorithm consists of two main steps per communication round: a local update step, and a model exchange and local aggregation step.
As discussed before, during the local update step each agent performs $\tau$ steps of SGD independently on its own local device.
Let us now focus on the model exchange and local aggregation step and discuss how the consensus point computation is modified to accommodate practical demands.
For clarity, we take the viewpoint of agent $j$, who is one of the participants in the FedCB\textsuperscript{2}O system.

In the original consensus point computation \eqref{eq:clustered_csp}, agent $j$ is required to download the models of all other $N-1$ participants before performing their evaluation w.r.t.\@ $\AgentJObj$ and $G$ and the weighted averaging.
However, in practice, the total number of participants $N$ in the system is typically large, especially in ``cross-device'' FL settings \cite{konevcny2016federatedopt,karimireddy2021breaking, yang2022practical, chen2023fs, karagulyan2024spam}.
Due to communication and storage limitations on local devices, it is therefore infeasible for agent $j$ to download all other models.
To take this practical constraint into consideration, let us therefore assume w.l.o.g.\@ that agent $j$ has a model download budget of $M \ll N$ models per communication round. 
This means that agent $j$ can only select $M$ models from the $N-1$ other participants to download per communication round, which we refer to as the agent selection process.
This rationale can be also transferred to the situation where some agents might be offline during the time agent $j$ is in the model aggregation phase.

The most straightforward approach is to uniformly sample $M$ agents at random, a strategy commonly used in conventional DFL algorithms \cite{sun2022decentralized} as well as IPS-based optimization \cite{carrillo2019consensus,jin2020random},
where it is known under the name random batch method.
However, this strategy may be inefficient, as agent $j$ ideally seeks to maximize the utility of its model download budget by selecting the most promising models for the subsequent aggregation step in order to accelerate training.
Notably, the computation of the consensus point \eqref{eq:clustered_csp} inherently involves already an agent selection process as it is computed as a weighted average over the \textit{subset} of those positions (model parameters) that have small loss values on agent $j$'s lower-level objective function $\AgentJObj$.
Yet, determining which models to include in the averaging would require agent $j$ to first download all models and evaluate them on its local dataset, a task that we wanted to avoid.
To resolve this issue, we propose that agent $j$ maintains a historical record of other participants in the system from whom it has downloaded models in previous communication rounds.
Specifically, agent $j$ stores a vector $P_n^j := \big(P_n^{j,1}, \dots, P_n^{j, j-1}, P_n^{j, j+1}, \dots, P_n^{j, N} \big) \in \R^{N-1}$, which encodes the potential ``benefit'' of choosing another agent's model, based on their past performances on agent $j$'s local dataset, i.e., the lower-level objective $\AgentJObj$, up to the communication round $n$.
In Remark \ref{rem:sampling_likelihood_idea} we explain the relationship between $P_n^j$ and the original particle selection principle present in the consensus point computation \eqref{eq:clustered_csp}.
During the agent selection process in round $n+1$, agent $j$ uses $P_n^j$ as a sampling likelihood to select $M$ models from the available participants.
We refer to Remark \ref{rem:sampling_likelihood_init} and the ProbSampling method in Algorithm~\ref{alg:localAggregation} for implementational details regarding the sampling likelihood $P_n^j$ and the agent selection process.

By incorporating the agent selection strategy discussed above,
we are now ready to present the local aggregation step in Algorithm \ref{alg:localAggregation}.
At the $(n+1)$-th iteration of the FedCB\textsuperscript{2}O algorithm,
after downloading $M$ models selected using the ProbSampling method, agent $j$ updates the sampling likelihood $P_{n}^j$ for the corresponding selected $M$ agents.
This update is based on the recent performance of their models on agent $j$'s loss function $\AgentJObj$, combined with their historical records through an exponential moving average controlled by the hyperparameter $\zeta$, as described in \eqref{eq:update_sampling_likelihood}.
Next, agent $j$ evaluates these $M$ models using the robustness criterion $G$, computes the consensus point as defined in \eqref{eq:comp_csp}, and updates its local model according to \eqref{eq:local_agg_update}.
This completes one round of the local aggregation step.
\begin{algorithm}[!htb]
\setstretch{1.25}
\caption{LocalAggregation (benign agent $j$)}
\label{alg:localAggregation}
\begin{algorithmic}[1]
\REQUIRE Agent $j$'s model $\theta_n^j \in \R^d$;  
sampling likelihood $P_n^j \in \R^{N-1}$; 
CB\textsuperscript{2}O hyperparameters $\lambda_1, \alpha, \Temp $; 
step size $\gamma$; 
model download budget $M$;
moving average parameter $\zeta$; \\
\STATE Set $A_n \leftarrow$ \textbf{ProbSampling} ($P_n^j, M$);\\
\STATE Download models $\theta_n^i$ for $i \in A_n$;\\
\STATE Evaluate models $\theta_n^i$ on agent $j$'s dataset and denote the corresponding losses as $\widetilde{L}_j^i$, $i \in A_n$;\\
\STATE Update sampling likelihood $P_n^j$ as\vspace{-5pt}
\begin{equation}\label{eq:update_sampling_likelihood}
P_{n+1}^{j, i} \leftarrow (1 - \zeta) P_{n}^{j, i} + \zeta \exp \!\big(\! -\Temp \widetilde{L}_j^i \big), \qquad \text{for} \;\; i \in A_n;
\end{equation}
\vspace{-15pt}
\STATE Evaluate models $\theta_n^i$ on the robustness criterion $G$ and denote their values by $G^i$, $i \in A_n$;\\
\STATE Compute consensus point $m_j$ by
\begin{equation}\label{eq:comp_csp}
m_j \leftarrow \frac{1}{\sum_{i \in A_n} \mu_j^i} \sum_{i \in A_n} \theta_n^i \mu_j^i \qquad \text{with } \mu_j^i = \exp(-\alpha G^i);
\end{equation}
\vspace{-10pt}
\STATE Update agent $j$'s model by
\begin{equation}\label{eq:local_agg_update}
\theta_{n+1}^j \leftarrow \theta_n^j - \lambda_1 \gamma \big(\theta_n^j - m_j \big);
\end{equation}
\vspace{-15pt}
\RETURN $\theta_{n+1}^j, P_{n+1}^j$.
\end{algorithmic}
\underline{\textbf{ProbSampling} 
($P_n^j, M$):}
\begin{algorithmic}[1]
\IF{ $ S := \{i \in [N] : P_n^{j,i} = 0 \} \neq \emptyset$}
\STATE For $A_n$, randomly pick $M$ agents in set $S$ uniformly if $|S| > M$ else pick set $S$;
\ELSE 
\STATE For $A_n$, randomly pick $M$ agents among set $[N] \backslash j$ with probability (normalized) $P_n^j$;
\ENDIF
\RETURN $A_n$.
\end{algorithmic}
\end{algorithm}

\begin{remark}\label{rem:sampling_likelihood_idea}
    In essence,
    the design of the sampling likelihood $P_n^j$ adapts the particle selection principle used in the computation of the consensus point \eqref{eq:clustered_csp},
    replacing the deterministic criterion based on the current loss function with a probabilistic approach based on the historical performance records encoded in $P_n^j$.
    The temperature hyperparameter $\Temp$ used in \eqref{eq:update_sampling_likelihood} plays a role analogous to the quantile hyperparameter $\beta$ in the sub-level set $Q^{\AgentJObj}_{\beta}[\dummy]$ as defined in \eqref{eq:Q_beta}.
    Specifically, $\Temp$ controls the likelihood that agent $j$ will prioritize selecting models with small $\AgentJObj$ values in subsequent communication rounds, similar to how $\beta$ determines the number of particles included in the consensus point computation based on their $\AgentJObj$ values. 
    This modification facilitates the practical feasibility of the consensus point computation \eqref{eq:clustered_csp} in real-world FL settings while preserving its core principles.
\end{remark}

\begin{remark}[Initialization of the sampling likelihood $P_0^j$]\label{rem:sampling_likelihood_init}
    At the beginning of the FedCB\textsuperscript{2}O algorithm, agent $j$ has no information about the other participants in the system.
    Consequently, the sampling likelihood $P_0^j \in \R^{N-1}$ is initialized as a zero vector.
    Until agent $j$ has selected all other agents at least once in previous communication rounds, it prioritizes selecting agents that have not yet been chosen.
    This ensures that agent $j$ has a preliminary assessment of the usefulness of all other agents' models before beginning the agent selection process based on their historical performance.
    The agent selection strategy is detailed as ProbSampling in Algorithm \ref{alg:localAggregation} and may be of independent interest to any DFL algorithm that requires an agent selection mechanism \cite{chen2022optimal,fu2023client}.
\end{remark}


\subsection{Experiments}\label{sec:experiments}

Let us first describe in detail our experimental setup.

\textbf{Dataset \& Attack Setup.}
We adopt the DCFL setting from \cite{carrillo2024fedcbo} using the standard EMNIST dataset \cite{cohen2017emnist},
which contains a total of $47$ classes comprising $10$ digits and $37$ letters in the English alphabet (lower and upper case).
We introduce malicious agents into the system to evaluate our method's robustness.

We begin by choosing a subset of $35,500$ training samples and $18,800$ test samples from the original EMNIST dataset.
To create clusters, we augment the dataset by applying rotations of $0^{\circ}$ and $180^{\circ}$ to each image, producing in this way $K=2$ clusters, each corresponding to one of the two rotation angles.
Each cluster contains $35,500$ training samples and $18,800$ test samples.
The system is configured with $N=100$ agents, evenly split across the two clusters.
Within each cluster, we assign $35$ benign agents and $15$ malicious agents.
For training, the $35,500$ training images in each cluster are randomly partitioned such that each benign agent is assigned $500$ images, while each malicious agent is assigned $1,200$ images, enhancing the overall capability of malicious agents compared to benign ones. 
The test data is the same for all agents in the same cluster. 
Examples of points in the rotated EMNIST dataset are depicted in Figure \ref{fig:rotated_emnist}.

Malicious agents attempt to attack benign agents of the same cluster by executing LF attacks, as described in Section~\ref{subsec:LF_attack}, targeting the source class images of ``O'' (upper case letter O) and relabeling them as ``$0$'' (number zero).
\begin{figure}[!htb]
\centering
\includegraphics[trim=20 20 20 20,clip,width=0.6\textwidth]{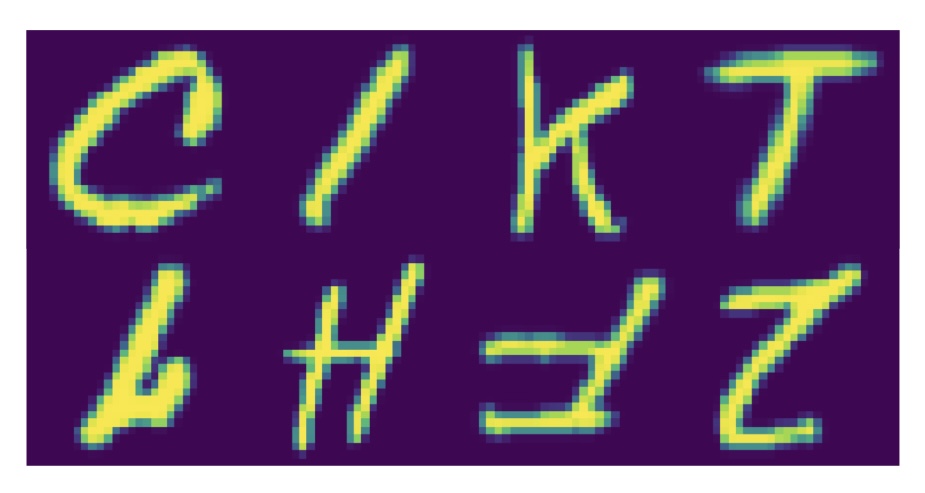}
\caption{Samples of the rotated EMNIST dataset.
Each row contains samples from one rotation.} 
\label{fig:rotated_emnist}
\end{figure}

\textbf{Baselines \& Implementations.}
We compare our FedCB\textsuperscript{2}O algorithm (Algorithm \ref{alg:FedCB2O}) with two baselines: FedCBO \cite{carrillo2024fedcbo}, and DFedAvgM \cite{sun2022decentralized} under both the attack-free and non-clustered settings, which we refer to as {Oracle}.
As a base model, we use a neural network (NN) with two convolutional layers, two max-pooling layers, followed by two dense layers with ReLU activations.
The total number of communication rounds is set to $T = 150$, with all agents participating in the training in every round.
During the local update step, each agent performs $\tau = 5$ epochs of mini-batch SGD with batch size of $64$, learning rate $\gamma = 0.004$, and momentum $0.9$.
The model download budget for each agent is set to $M = 20$.
To strengthen the attack, we assume that the malicious agents have knowledge of the clustering structure (which remains unknown to the benign agents) as well as the identities of other attackers in the system.
During the agent selection process, each malicious agent prioritizes selecting models from other attackers within the same cluster.
Once these models are selected, the remaining download budget is used to randomly pick models from benign agents within the same cluster.
Moreover, since malicious agents already know the identities of other participants, applying the aggregation method is not required for them.
Instead, during the local aggregation step, malicious agents perform weighted averaging based on the number of data points used to train the downloaded models, a commonly used approach in FL algorithms \cite{mcmahan2017communication, sun2022decentralized}.
We now provide additional implementation details for each baseline algorithm.
\begin{itemize}
    \item \textbf{FedCB\textsuperscript{2}O.}
    Since the FedCB\textsuperscript{2}O algorithm requires benign agents to evaluate downloaded models from other agents on their own local datasets, we partition the datasets of all benign agents accordingly.
    Each benign agent's dataset of $500$ samples is randomly split into $400$ training and $100$ validation samples. 
    The training set is used for local updates, whereas the validation set is used to evaluate downloaded models.
    We set the CB\textsuperscript{2}O hyperparameters to $\lambda_1 = 10$, $\lambda_2 = 1$ and $\alpha = 10$.
    In the {ProbSampling} method (Algorithm \ref{alg:localAggregation}), we use a temperature parameter of $\Temp = 2$ and a moving average parameter of $\zeta = 0.5$.
    The design of the robustness criterion $G$ is described in Remark \ref{rem:func_G}.

    \item \textbf{FedCBO.}
    We set the hyperparameters in the FedCBO algorithm \cite{carrillo2024fedcbo} to match those of FedCB\textsuperscript{2}O, except for $\alpha = 10$, which we empirically found to yield better performance for FedCBO. 
    For a fair comparison, we replace the {$\varepsilon$-greedy sampling} agent selection strategy originally used in FedCBO \cite{carrillo2024fedcbo} with the {ProbSampling} method proposed in this paper (Algorithm \ref{alg:localAggregation}).
    We also use the same train validation split for all benign agents as in the FedCB\textsuperscript{2}O implementation.

    \item \textbf{Oracle.}
    DFedAvgM \cite{sun2022decentralized} is a commonly used algorithm for DFL when data is homogeneous and when there are no attacks on the system.
    For a fair comparison, we run DFedAvgM in a setting where there is only one cluster (no rotations), so that the algorithm doesn't need to deal with the clustering structure.
    Additionally, we make the setup for DFedAvgM attack-free by considering two cases: (i) Removing the $15$ ``malicious'' agents entirely from the system; (ii) Keeping the ``malicious'' agents in the system but without them performing attacks.
    Both of these settings are attack-free and represent ideal scenarios where DFedAvgM is expected to perform at its best.
    For this reason, we use these scenarios as benchmarks that indicate the best performance one can hope for given our experimental setup.
    In the case (i), removing the ``malicious'' agents means losing some of the correct training data they would have contributed. 
    This is expected to slightly lower the overall accuracy of the final training results, so we refer to this setup as \textbf{Oracle Min}.
    In the case (ii), the ``malicious'' agents are kept in the system without carrying out any attacks. 
    This allows the system to benefit from a larger total number of data samples while remaining attack-free, which is expected to yield the best performance. 
    We refer to this setup as \textbf{Oracle Max}.
\end{itemize}

\begin{remark}[Robustness criterion $G$]\label{rem:func_G}
    As discussed in Section \ref{sec:intro}, defending against different attacks requires to select an appropriate robustness criterion $G$ in the bi-level optimization problem \eqref{eq:DCFL_bilevel_opt}.
    In the DCFL setting with malicious agents performing LF attacks, we design the upper-level objective function for a benign agent $j$ as
    \begin{equation}\label{eq:robust_criterion_G_LF}
        G_j (\theta; \theta^j) := \max_{c \in [C]}\, \widetilde{L}_{j, c} (\theta) - \widetilde{L}_{j, c} (\theta^j),
    \end{equation}
    where $\theta$ represents the model downloaded and evaluated by agent $j$, and $\theta^j$ denotes the current model of agent $j$. 
    $\widetilde{L}_{j, c}$ is the loss for class $c \in [C]$ given agent $j$'s local dataset.
    The intuition behind the robustness criterion \eqref{eq:robust_criterion_G_LF} is as follows. Agent $j$ determines whether the model $\theta$ is poisoned by evaluating its similarity to the agent's own model $\theta^j$ across all classes of the locally stored dataset.
    If there exists at least one class where $\theta$ and $\theta^j$ exhibit significantly different performance, $\theta$ is treated as a poisoned model and assigned a low weight during the averaging process \eqref{eq:clustered_csp}.

    The robustness criterion $G_j$, as defined in \eqref{eq:DCFL_bilevel_opt}, is ``personalized'' to agent $j$ and leverages the agent's own model. 
    This differs slightly from the original bi-level optimization framework \eqref{eq:DCFL_bilevel_opt}, where the robustness criterion $G$ doesn't depend on the agent's own changing parameter. This more complex and personalized robustness criterion $G_j$ demonstrates promising empirical performance.
    We leave the theoretical analysis of frameworks incorporating upper-level objectives similar to \eqref{eq:robust_criterion_G_LF} for future work.
\end{remark}

\textbf{Performance Metrics.}
We evaluate the performance of the different algorithms by computing the average predicition accuracies of benign agents' models on the test data that shares the same distribution as their training data (i.e., data points with the same rotation).
The evaluation is based on the following three metrics: (i) The average models' prediction accuracy\footnote{\revisedOne{The prediction accuracy of a model is computed as  $\frac{\text{Number of correctly predicted data points}}{\text{Total number of data points}} \times 100 \%$.}} across all classes (abbreviated as overall acc);
(ii) The average models' prediction accuracy on the source class (abbreviated as source class acc);
(iii) The probability that benign agents' models predict the source class samples as the target class label, referred to as the attack success rate (ASR).
Source class accuracy and ASR specifically measure the robustness of an algorithm against LF attacks.
All experiments are conducted using $3$ different random seeds for each algorithm, and the reported results represent the averages across these runs.

\textbf{Experimental Results.} The test results are summarized in Table \ref{tab:baselines}, and we make the following observations.
\begin{itemize}
    \item[(i)] The baseline Oracle Max achieves the best (or second-best) performance across all three metrics.
    This is expected, as this baseline represents an idealized scenario where the data of all agents is included and the system is attack-free.
    The Oracle Min baseline, which excludes the malicious agents from the system, achieves similar accuracy on the source class and attack success rate (ASR) as Oracle Max, but exhibits lower overall accuracy.
    This difference is expected since Oracle Max benefits from the additional data samples contributed by the ``malicious'' agents (who are not attacking in this setting), thereby providing more information to the system.

    \item[(ii)] FedCBO achieves an overall accuracy comparable to the baseline Oracle Max.
    This demonstrates that the FedCBO algorithm enables agents to implicitly identify the cluster identities of other participants during training, without prior knowledge of these identities.
    The high overall accuracy of FedCBO, despite the presence of malicious agents performing label-flipping (LF) attacks, is due to the fact that these attacks target only one specific class, resulting in minimal impact on the overall accuracy.
    This highlights why LF attacks are generally difficult to detect.
    However, FedCBO shows a significant degradation in accuracy on the source class, coupled with a notably high ASR.
    This indicates that FedCBO fails to effectively filter out malicious agents during the training process.
    Further evidence is shown in Figures \ref{subfig:fedcbo_avg_select_time} and \ref{subfig:fedcbo_avg_select_time_mul_weights}, which reveal that, during training, benign agents select models from malicious agents at a frequency comparable to their selection of models from other benign agents within the same cluster.
    Moreover, they assign nearly equal weights to malicious agents in the weighted averaging process,
    highlighting that the lower-level objective $\AgentJObj$ fails to distinguish between benign and malicious agents within the same cluster.
    
    \item[(iii)] Compared to FedCBO, our FedCB\textsuperscript{2}O algorithm achieves significantly higher source class accuracy and a much lower ASR, both of which are comparable to the idealized baselines Oracle Min and Oracle Max.
    Furthermore, the overall accuracy of FedCB\textsuperscript{2}O is very close to the one of Oracle Min, where malicious agents are completely removed from the system.
    These results demonstrate that FedCB\textsuperscript{2}O not only retains the ability of benign agents to distinguish the cluster identities of other agents as the FedCBO algorithm does, but also effectively mitigates the influence of malicious agents, ensuring robustness against LF attacks.
    For a further empirical illustration of the importance of incorporating the robustness criterion in the FedCB\textsuperscript{2}O algorithm, please refer to Remark \ref{rem:importance_G} and Figures \ref{subfig:fedcb2o_avg_select_time} and \ref{subfig:fedcb2o_avg_select_time_mul_weights}.
\end{itemize}

\begin{table}[!htb]
\centering
\caption{Comparison of FedCBO and FedCB\textsuperscript{2}O with baselines (DFedAvgM) across three performance metrics (in $\%$) with standard deviations.}
\label{tab:baselines}
\renewcommand\arraystretch{1.25}
    \begin{sc}
\begin{tabular}{ccccc}
\toprule
    \bf   & \bf FedCBO & \bf Oracle Min & \bf Oracle Max & \bf FedCB\textsuperscript{2}O \\
\midrule
Overall Acc  & $84.26 \pm 0.18 $  & $82.75 \pm 0.24$ & $\bf{84.43 \pm 0.09}$ & $82.79 \pm 0.14$ \\
\midrule
Source Class Acc & $40.23 \pm 4.02 $ & $63.53 \pm 1.97$ & $\bf{63.80 \pm 1.76}$ & $\bf{55.73 \pm 2.94}$ \\
\midrule
Attack Success Rate & $55.23 \pm 3.81 $ & $\bf{31.08 \pm 2.64}$ & $32.19 \pm 1.93$ & $\bf{38.73 \pm 3.42}$ \\
\bottomrule
\end{tabular}
    \end{sc}
\end{table}

\begin{remark}[Improved FedCB\textsuperscript{2}O algorithm]
    \label{rem:FedCB2Oimproved}
    As visible from Table \ref{tab:baselines}, the performance of FedCB\textsuperscript{2}O is similar to Oracle Min w.r.t.\@ all three metrics.
    This demonstrates that the FedCB\textsuperscript{2}O algorithm effectively excludes malicious agents by taking weighted averages based on the robustness criterion~$G$ designed in Remark~\ref{rem:func_G}.
    However, since the malicious agents target only one specific class, the data they possess for other classes may still be valuable to benign agents.
    Ideally, the algorithm should maximize the use of this useful information while minimizing the impact of the attacks. 
    To achieve this, we introduce a hyperparameter $T_G$, which determines the communication round at which benign agents begin leveraging the robustness criterion to eliminate the influence of malicious agents from the averages.
    Specifically, during the initial stages of training (i.e., before round $T_G$), benign agents simply use the average losses based on their own local datasets as the weighting criterion, similar to the approach in FedCBO \cite{carrillo2024fedcbo}.
    This allows benign agents to exploit the valuable information provided by malicious agents early in the training process.
    Starting at communication round $T_G$, benign agents then switch to using the robustness criterion~$G$ (as defined in Remark \ref{rem:func_G}) to eliminate the contributions of malicious agents, ensuring that benign agents are not significantly affected by the attacks in the long term.
    
    The results of the FedCB\textsuperscript{2}O algorithm, incorporating the hyperparameter $T_G$, are summarized in Table~\ref{tab:FedCB2O_T_G}.
    These results demonstrate that utilizing useful information from malicious agents during the early stages of training improves the overall accuracy of the algorithm.
    At the same time, activating the robustness criterion at the appropriate communication round $T_G$ ensures that the performance of benign agents on the source class does not degrade.
    In our experiments, a value of $T_G = 30$ strikes the best balance between leveraging useful information from malicious agents and effectively defending against their attacks.
\end{remark}

\setlength{\abovecaptionskip}{5pt}
\setlength{\belowcaptionskip}{0pt}
\begin{figure}[!htb]
\centering
	\subcaptionbox{FedCBO Algorithm\label{subfig:fedcbo_avg_select_time}}{
		\includegraphics[ width=0.46\textwidth]{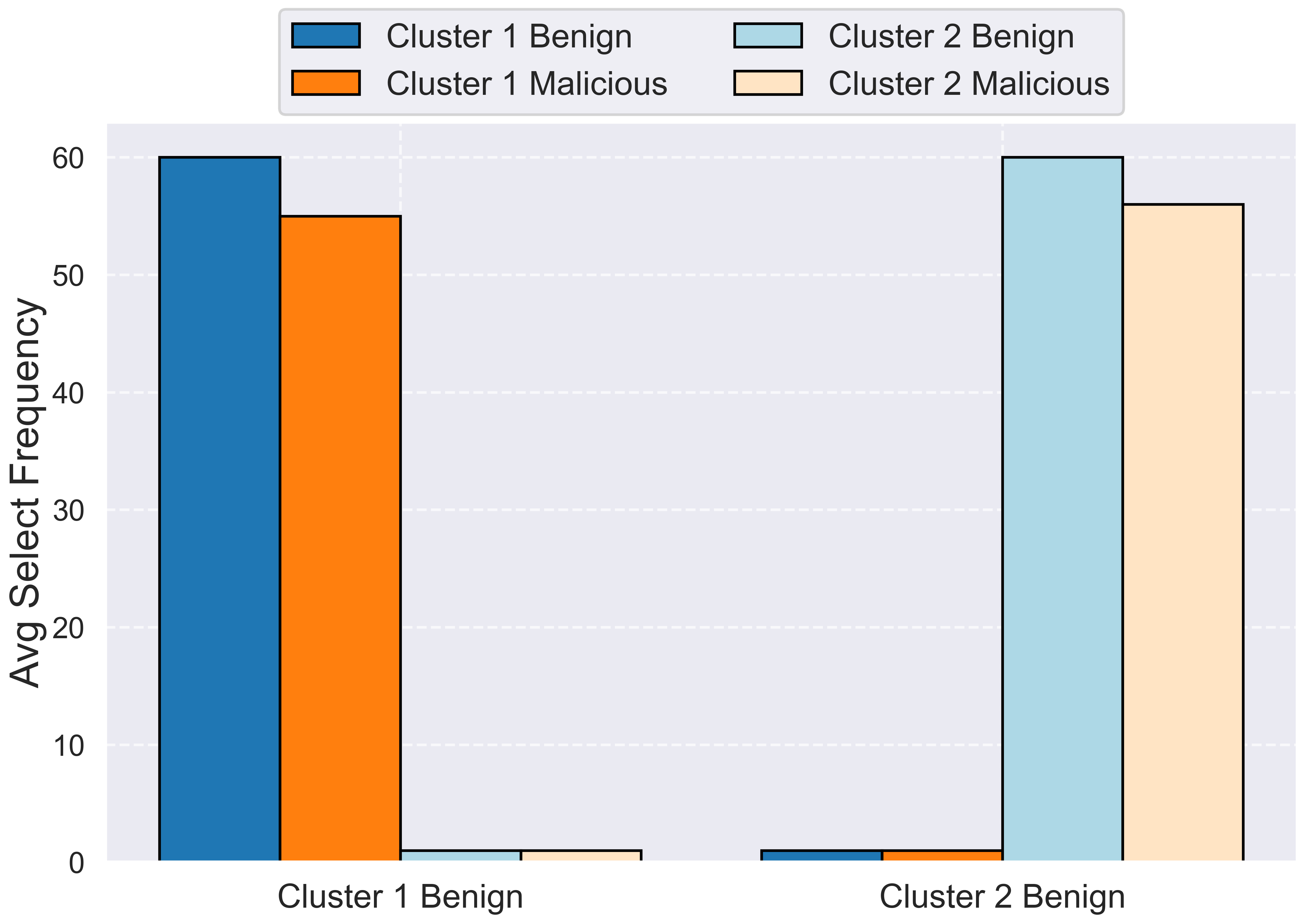}
	}
    \hspace{2em}
	\subcaptionbox{FedCBO Algorithm\label{subfig:fedcbo_avg_select_time_mul_weights}}{
\includegraphics[width=0.46\textwidth]{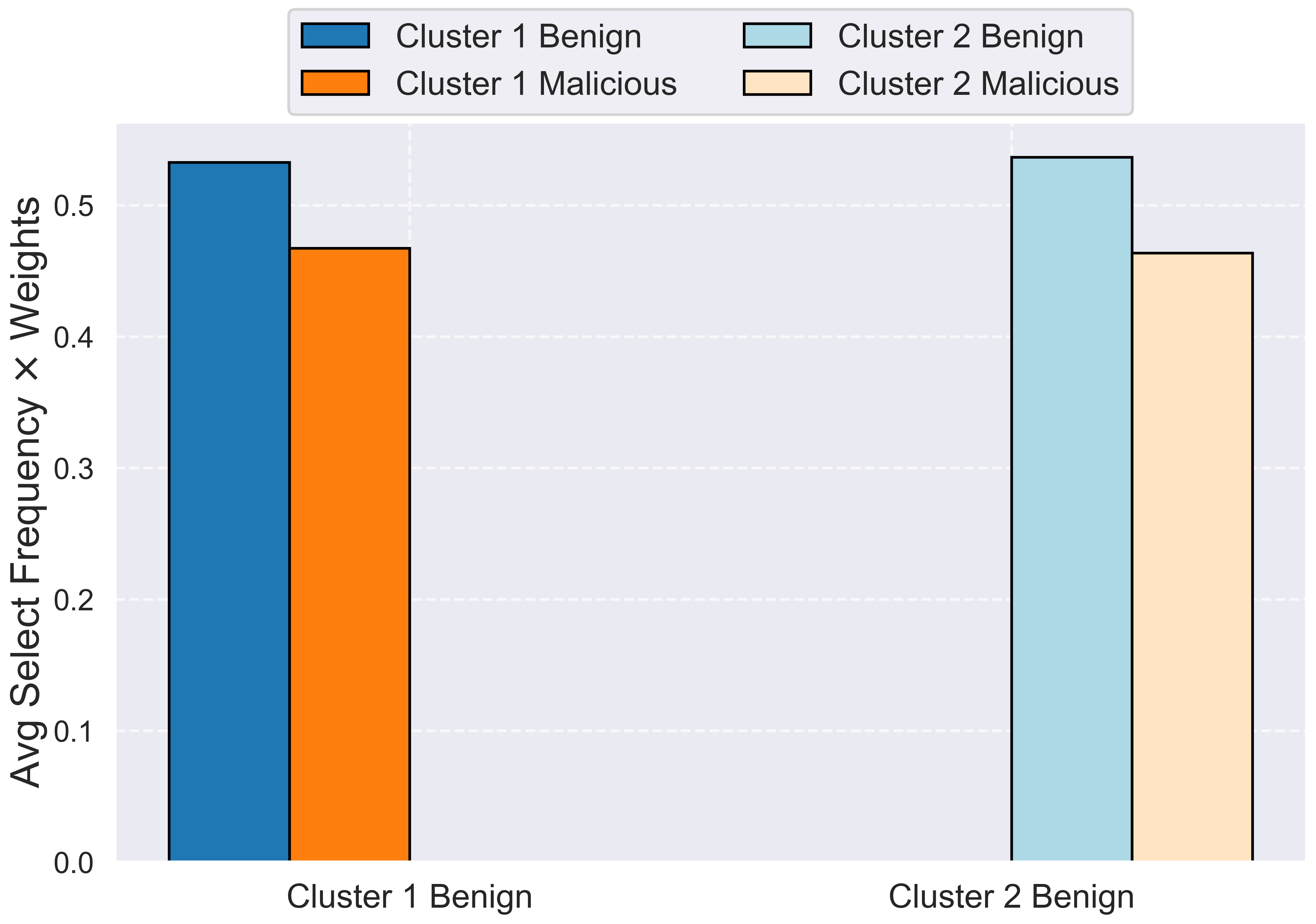}}\\
\vspace{1em}
\subcaptionbox{FedCB\textsuperscript{2}O Algorithm ($T_G=30$)\label{subfig:fedcb2o_avg_select_time}}{
		\includegraphics[ width=0.46\textwidth]{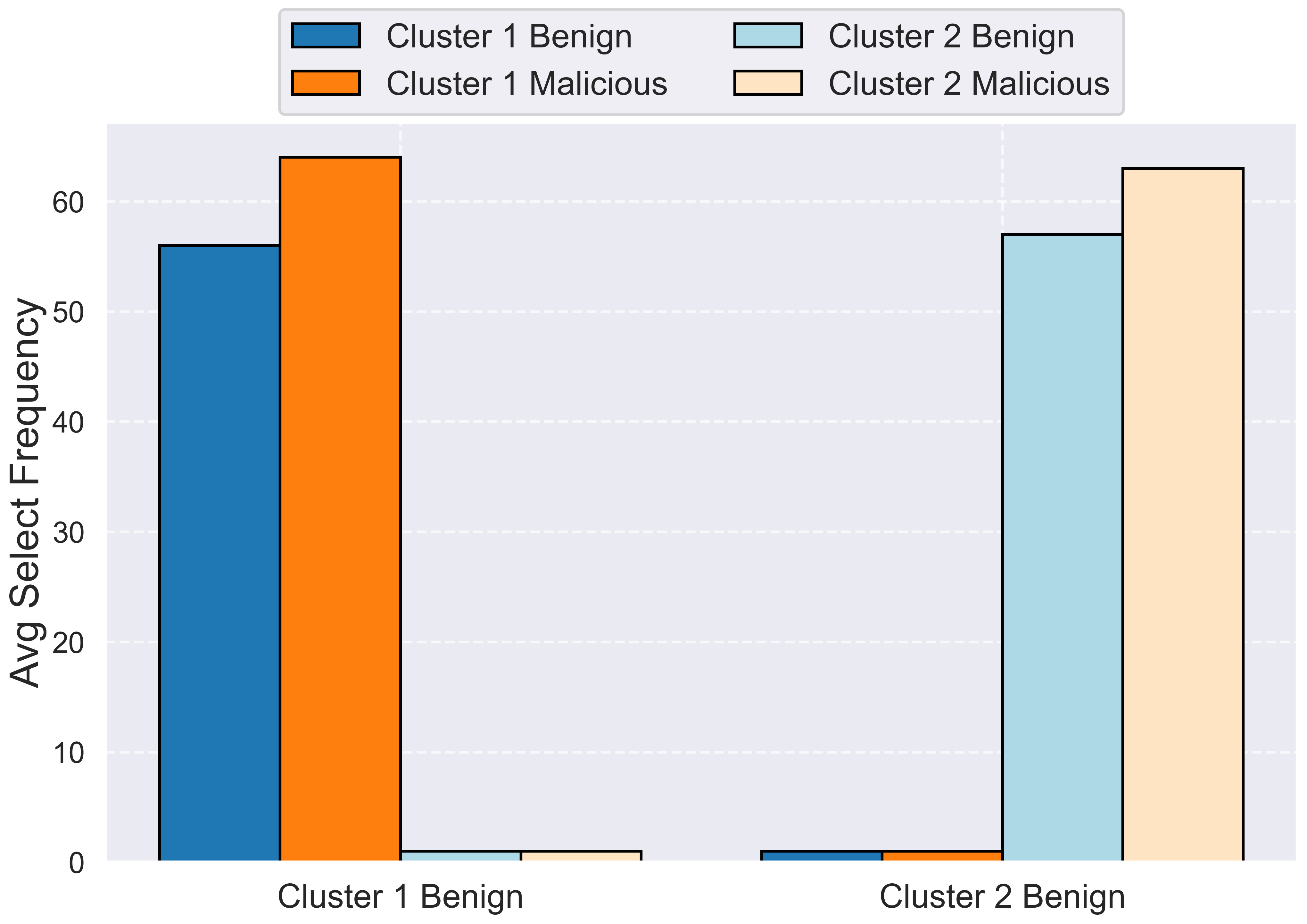}
	}
    \hspace{2em}
	\subcaptionbox{FedCB\textsuperscript{2}O Algorithm ($T_G=30$)\label{subfig:fedcb2o_avg_select_time_mul_weights}}{
\includegraphics[width=0.46\textwidth]{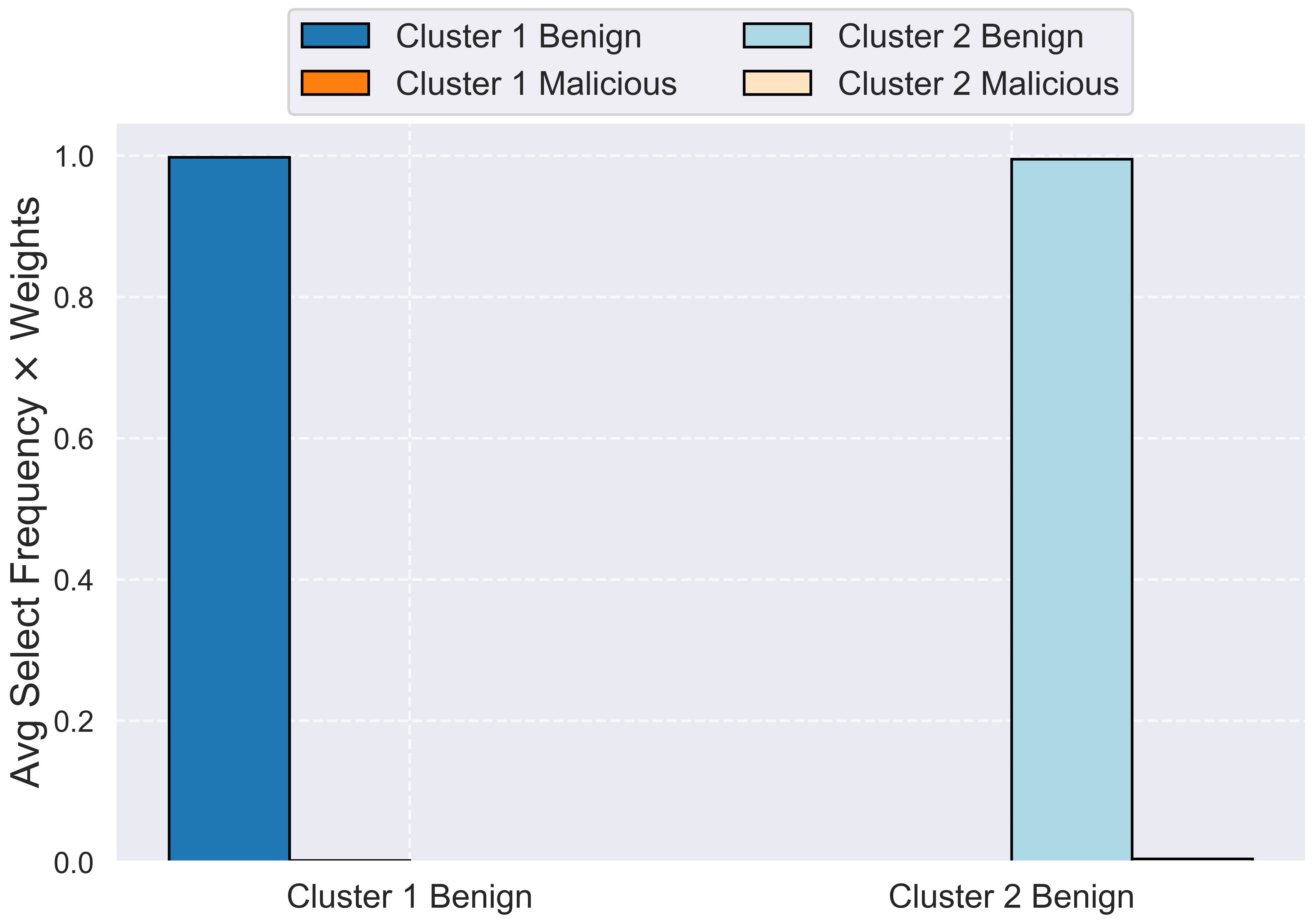}}
	\caption{{{\bf(a) and (c)}:} Average frequency at which benign agents select models from other benign or malicious agents within the same or a different cluster in the FedCBO/FedCB\textsuperscript{2}O algorithm. 
    For example, the dark blue (orange) bar with labeled ``Cluster 1 Benign'' (in x-axis) represents the average frequency of benign agents in cluster $1$ selecting models from other benign (malicious) agents in cluster $1$.
    The light blue (orange) bar labeled ``Cluster 1 Benign'' (in x-axis) corresponds to average frequency with which benign agents in cluster $1$ select models from other benign (malicious) agents in cluster $2$.
    Similar interpretations apply to the bars labeled ``Cluster 2 Benign'';
    {{\bf(b) and (d)}:} Normalized average selection frequency (as shown in Subfigures (a) and (c)) multiplied with the averaged weights assigned to other agents in the FedCBO/FedCB\textsuperscript{2}O algorithm.}
    \label{fig:avg_select_time_mul_weights}
\end{figure}

\begin{table}[!htb]
\centering
\caption{Comparison of FedCB\textsuperscript{2}O with different choices of the hyperparameter $T_G$ across three performance metrics (in $\%$) with standard deviations.}
\label{tab:FedCB2O_T_G}
\renewcommand\arraystretch{1.25}
    \begin{sc}
\begin{tabular}{ccccc}
\toprule
    \bf   & \bf \begin{tabular}{c}FedCB\textsuperscript{2}O \\ ($T_G=0$)\end{tabular} & \bf \begin{tabular}{c}FedCB\textsuperscript{2}O \\ ($T_G=20$)\end{tabular} & \bf \begin{tabular}{c}FedCB\textsuperscript{2}O \\ ($T_G=30$)\end{tabular} & \bf \begin{tabular}{c}FedCB\textsuperscript{2}O \\ ($T_G=40$)\end{tabular} \\
\midrule
Overall Acc & $82.79 \pm 0.14 $  & $83.68 \pm 0.07$ & $83.64 \pm 0.05$ & $\bf{83.76 \pm 0.16}$ \\
\midrule
Source Class Acc & $55.73 \pm 2.94 $ & $55.98 \pm 4.16$ & $\bf{57.64 \pm 1.83}$ & $56.87 \pm 2.49$ \\
\midrule
Attack Success Rate & $38.73 \pm 3.42 $ & $38.83 \pm 4.68$ & $\bf{37.31 \pm 2.33}$ & $38.07 \pm 2.96$ \\
\bottomrule
\end{tabular}
    \end{sc}
\end{table}

\begin{remark}[Importance of Robustness Criterion]\label{rem:importance_G}
Figures \ref{subfig:fedcb2o_avg_select_time} and \ref{subfig:fedcb2o_avg_select_time_mul_weights} underscore the importance of incorporating the robustness criterion $G_j$ into the weighted averaging process to defend against malicious agents.
Specifically, Figure \ref{subfig:fedcb2o_avg_select_time} shows that the agent selection mechanism in the FedCB\textsuperscript{2}O algorithm, implemented via the Probsampling method in Algorithm \ref{alg:localAggregation}, does not exclude malicious agents within the same cluster.
This behavior is expected since the agent sampling likelihood $P_n^j$ is updated only based on the evaluation of the lower-level objective $\AgentJObj$, as defined in \eqref{eq:update_sampling_likelihood}, which cannot differentiate between benign agents and malicious agents performing label-flipping attacks within the same cluster.
The underlying reason is that malicious agents possess more resources and data samples and are therefore able to train stronger models with smaller average losses compared to models trained by benign agents.
This even results in the frequency with which benign agents are select being smaller than the one with which malicious agents are selected, see Figure \ref{subfig:fedcb2o_avg_select_time}.
By incorporating the robustness criterion $G_j$ in the weighted average, however, minimal weights are assigned by benign agents to the selected malicious agents, as confirmed by Figure \ref{subfig:fedcb2o_avg_select_time_mul_weights}.
This effectively neutralizes the negative influence of malicious agents, empirically demonstrating why FedCB\textsuperscript{2}O algorithm can defend against LF attacks.

A comparison between Figures \ref{subfig:fedcbo_avg_select_time} and  \ref{subfig:fedcb2o_avg_select_time} reveals one more interesting yet reasonable observation.
Benign agents in the FedCB\textsuperscript{2}O algorithm are selected less frequently during the agent selection process compared to the FedCBO algorithm.
This phenomenon arises because FedCB\textsuperscript{2}O effectively eliminates the influence of malicious agents thanks to the robustness criterion.
However, in doing so, it also prevents benign agents from leveraging valuable information provided by malicious agents, resulting in a larger average loss $\AgentJObj$ for benign agents. 
Since the \textit{ProbSampling} agent selection mechanism relies solely on the lower-level objectives $\AgentJObj$, benign agents are consequently selected relatively less often.
\end{remark}

\section{Conclusions}\label{sec:conclusion}

In this paper, we abstracted the robust federated learning problem and formulated it as a bi-level optimization problem of the form \eqref{eq:bilevel_opt}.
This allowed us to establish a connection between the robust FL paradigm and consensus-based bi-level optimization (CB\textsuperscript{2}O),
a multi-particle metaheuristic optimization approach originally designed to solve nonconvex bi-level optimization problems.

On the theoretical side, we analyzed the CB\textsuperscript{2}O system in adversarial settings by taking a mean-field perspective.
We demonstrate the robustness of CB\textsuperscript{2}O against a wide range of attacks by proving its global convergence in mean-field law in the presence of malicious agents.
Additionally, we provide insights into how CB\textsuperscript{2}O defends against attacks by illustrating the role of key hyperparameters of the method.

On the algorithmic side, we extended CB\textsuperscript{2}O to the decentralized clustered federated learning setting and proposed FedCB\textsuperscript{2}O, a novel interacting particle system. 
To address practical demands and limitations present in FL applications, we designed an agent selection mechanism inspired by the consensus point computation in FedCB\textsuperscript{2}O.
This mechanism, which may be of independent interest for any FL algorithm involving an agent selection process,
is integrated into the FedCB\textsuperscript{2}O algorithm.
Compelling experiments in the DCFL setting confirm the effectiveness and robustness of the FedCB\textsuperscript{2}O algorithm despite the presence of malicious agents performing label-flipping attacks.

In future works, we aim to theoretically explore more practical settings where agents within the same cluster have similar but slightly distinct lower-level objective functions.
Furthermore, integrating the ``personalized'' robustness criterion, as proposed in Remark \ref{rem:func_G}, into the CB\textsuperscript{2}O framework presents a promising avenue for further research.

\vskip6pt

\section*{Acknowledgements}
All authors acknowledge the kind hospitality of the Institute for Computational and Experimental Research in Mathematics (ICERM) during the ICERM workshop ``Interacting Particle Systems: Analysis, Control, Learning and Computation''.\\
NGT was supported by the NSF grant DMS-2236447, and, together with SL would like to thank the IFDS at UW-Madison and NSF through TRIPODS grant 2023239 for their support. 
KR acknowledges the financial support from the Technical University of Munich and the Munich Center for Machine Learning, where most of this work was done.
His work there has been funded by the German Federal Ministry of Education and Research and the Bavarian State Ministry for Science and the Arts.
KR moreover acknowledges the financial support from the University of Oxford.
For the purpose of Open Access, the author has applied a CC BY public copyright licence to any Author Accepted Manuscript (AAM) version arising from this submission.
YZ was supported by the NSF grant DMS-2411396.

\bibliographystyle{abbrv}
\bibliography{biblio}



\end{document}